\def\arxivmode{1}

\ifdefined\arxivmode
  \documentclass[aop,preprint]{imsart-arxiv}
\else
  \documentclass[aop]{imsart}
\fi
\usepackage{changepage}
\usepackage[font=scriptsize,labelfont=bf]{caption}
\usepackage[numbers]{natbib}
\usepackage{graphicx,caption,subcaption}
\usepackage{enumerate}
\usepackage{amsmath,amsbsy,amssymb,amsfonts,amsthm}
\usepackage{xspace}
\usepackage{algorithm,algorithmic} %
\usepackage{hyperref}
\usepackage{cleveref}
\crefformat{footnote}{#2\footnotemark[#1]#3}
\usepackage[inline]{enumitem}
\usepackage{nicefrac}

\captionsetup{font=footnotesize,labelfont={bf,sf}}
\captionsetup[sub]{font=scriptsize,labelfont={bf,sf}}

\def\nonewproofenvironments{1} %
\newcommand{\dt}{\,dt}
\newcommand{\ds}{\,ds}

\newcommand{\dy}{\,dy}
\newcommand{\dz}{\,dz}
\newcommand{\dW}{\,dW} %
\newcommand{\eps}{\epsilon}

\def\balign#1\ealign{\begin{align}#1\end{align}}
\def\baligns#1\ealigns{\begin{align*}#1\end{align*}}
\def\balignat#1\ealign{\begin{alignat}#1\end{alignat}}
\def\balignats#1\ealigns{\begin{alignat*}#1\end{alignat*}}
\def\bitemize#1\eitemize{\begin{itemize}#1\end{itemize}}
\def\benumerate#1\eenumerate{\begin{enumerate}#1\end{enumerate}}

\newenvironment{talign*}
 {\let\displaystyle\textstyle\csname align*\endcsname}
 {\endalign}
\newenvironment{talign}
 {\let\displaystyle\textstyle\csname align\endcsname}
 {\endalign}

\def\balignst#1\ealignst{\begin{talign*}#1\end{talign*}}
\def\balignt#1\ealignt{\begin{talign}#1\end{talign}}%
\newcommand{\qtext}[1]{\quad\text{#1}\quad} 

\let\originalleft\left
\let\originalright\right
\renewcommand{\left}{\mathopen{}\mathclose\bgroup\originalleft}
\renewcommand{\right}{\aftergroup\egroup\originalright}

\def\Gronwall{G\"otze\xspace}
\def\Gronwall{Gr\"onwall\xspace}
\def\Holder{H\"older\xspace}
\def\Ito{It\^o\xspace}

\def\tinycitep*#1{{\tiny\citep*{#1}}}
\def\tinycitealt*#1{{\tiny\citealt*{#1}}}
\def\tinycite*#1{{\tiny\cite*{#1}}}
\def\smallcitep*#1{{\scriptsize\citep*{#1}}}
\def\smallcitealt*#1{{\scriptsize\citealt*{#1}}}
\def\smallcite*#1{{\scriptsize\cite*{#1}}}

\def\mbb#1{\mathbb{#1}}

\def\tbf#1{\textbf{#1}}

\def\textsum{{\textstyle\sum}} %
\def\textint{{\textstyle\int}} %

\def\reals{\mathbb{R}} %
\def\R{\mathbb{R}}
\def\<{\left\langle} %
\def\>{\right\rangle}

\def\defeq{\triangleq} %
\def\half{\frac{1}{2}}
\def\texthalf{{\textstyle\frac{1}{2}}}
\newcommand{\textfrac}[2]{{\textstyle\frac{#1}{#2}}}

\newcommand{\psdge}{\succcurlyeq}

\newcommand{\psdgt}{\succ}

\newcommand{\norm}[1]{\left\|{#1}\right\|} %
\newcommand{\onenorm}[1]{\norm{#1}_1} %
\newcommand{\twonorm}[1]{\norm{#1}_2} %
\newcommand{\infnorm}[1]{\norm{#1}_{\infty}} %
\newcommand{\opnorm}[1]{\norm{#1}_{op}} %
\newcommand{\fronorm}[1]{\norm{#1}_{F}} %
\def\staticnorm#1{\|{#1}\|} %
\newcommand{\inner}[2]{\langle{#1},{#2}\rangle} %
\def\maxeig#1{\lambda_{\mathrm{max}}\left({#1}\right)}
\def\mineig#1{\lambda_{\mathrm{min}}\left({#1}\right)}

\def\indic#1{\mbb{I}\left[{#1}\right]} %

\def\maxarg#1{\max\left({#1}\right)} %
\def\minarg#1{\min\left({#1}\right)} %
\def\E{\mbb{E}} %
\def\Earg#1{\E\left[{#1}\right]}
\def\Esubarg#1#2{\E_{#1}\left[{#2}\right]}
\def\bigO#1{O(#1)} %
\renewcommand{\exp}[1]{\operatorname{exp}\left(#1\right)} %
\newcommand{\staticexp}[1]{\operatorname{exp}(#1)} %
\newcommand{\Gsn}{\mathcal{N}}

\newcommand{\grad}{\nabla} %
\newcommand{\Hess}{\nabla^2} %
\newcommand{\pderiv}[2]{\frac{\partial #1}{\partial #2}} %

\providecommand{\dom}{\mathop\mathrm{dom}}

\def\supp#1{\mathrm{supp}({#1})}

\newcommand{\algref}[1]{Algorithm~{\ref{alg:#1}}}

\newcommand{\cororef}[1]{Corollary~{\ref{cor:#1}}}

\newcommand{\eqnref}[1]{\eqref{eqn:#1}}
\newcommand{\exref}[1]{Example~{\ref{ex:#1}}}
\newcommand{\figref}[1]{Figure~{\ref{fig:#1}}}
\newcommand{\lemref}[1]{Lemma~{\ref{lem:#1}}}
\newcommand{\lemsref}[1]{Lemmas~{\ref{lem:#1}}}
\newcommand{\lemssref}[1]{{\ref{lem:#1}}}

\newcommand{\secref}[1]{Section~{\ref{sec:#1}}}
\newcommand{\secsref}[1]{Sections~{\ref{sec:#1}}}
\newcommand{\secssref}[1]{{\ref{sec:#1}}}
\newcommand{\propref}[1]{Proposition~{\ref{prop:#1}}}

\newcommand{\thmref}[1]{Theorem~{\ref{thm:#1}}}

\newcommand{\lemreflow}[1]{Lemma~\lowercase{\ref{lem:#1}}}

\newcommand{\propreflow}[1]{Proposition~\lowercase{\ref{prop:#1}}}

\newcommand{\thmreflow}[1]{Theorem~\lowercase{\ref{thm:#1}}}

\ifdefined\nonewproofenvironments\else
\ifdefined\ispres\else
\newtheorem{theorem}{Theorem}
\newtheorem{lemma}[theorem]{Lemma}
\newtheorem{corollary}[theorem]{Corollary}
\newtheorem{definition}[theorem]{Definition}

\renewenvironment{proof}{\noindent\textbf{Proof}\hspace*{1em}}{\qed\\}
\newenvironment{proof-sketch}{\noindent\textbf{Proof Sketch}
  \hspace*{1em}}{\qed\bigskip\\}
\newenvironment{proof-idea}{\noindent\textbf{Proof Idea}
  \hspace*{1em}}{\qed\bigskip\\}
\newenvironment{proof-of-lemma}[1][{}]{\noindent\textbf{Proof of Lemma {#1}}
  \hspace*{1em}}{\qed\\}
\newenvironment{proof-of-theorem}[1][{}]{\noindent\textbf{Proof of Theorem {#1}}
  \hspace*{1em}}{\qed\\}
\newenvironment{proof-attempt}{\noindent\textbf{Proof Attempt}
  \hspace*{1em}}{\qed\bigskip\\}

\newenvironment{remark}{\noindent\textbf{Remark}
  \hspace*{1em}}{\bigskip}
\fi

\newtheorem{proposition}[theorem]{Proposition}

\fi

\def\sr{s_r}
\renewcommand{\norm}[1]{\staticnorm{#1}} %

\newtheorem{theorem}{Theorem}
\newtheorem{lemma}[theorem]{Lemma}
\newtheorem{corollary}[theorem]{Corollary}
\newtheorem{proposition}[theorem]{Proposition}
\newtheorem{definition}[theorem]{Definition}
\theoremstyle{definition}
\newtheorem{example}{Example}
\theoremstyle{remark}
\newtheorem*{remark}{Remark}
\newtheorem*{remarks}{Remarks}

\graphicspath{{./figs/}}

\newcommand{\gset}[0]{\mathcal{G}} %
\newcommand{\steinset}[0]{\steinsetarg{}} %
\newcommand{\steinsetarg}[1]{\gset_{\norm{\cdot}_{#1}}} %
\newcommand{\gsteinset}[2]{\gset_{\norm{\cdot}_{#1},Q_n,{#2}}} %
\newcommand{\hset}[0]{\mathcal{H}} %
\newcommand{\pset}[0]{\mathcal{P}_1} %

\newcommand{\wasssetarg}[1]{\mathcal{W}_{#1}} %
\newcommand{\wassset}{\wasssetarg{\norm{\cdot}}} %
\newcommand{\onewassset}{\wasssetarg{\onenorm{\cdot}}} %
\newcommand{\twowassset}{\wasssetarg{\twonorm{\cdot}}} %

\newcommand{\ball}{\mathcal{B}} %

\newcommand{\generator}[1]{\mathcal{A}{#1}} %
\newcommand{\genarg}[2]{(\generator{#1})({#2})} %

\newcommand{\operator}[1]{\mathcal{T}{#1}} %
\newcommand{\oparg}[2]{(\operator{#1})({#2})} %

\newcommand{\diffusion}[1]{\mathcal{T}{#1}} %
\newcommand{\diffarg}[2]{(\diffusion{#1})({#2})} %

\newcommand{\ipm}{d_\hset} %
\newcommand{\stein}[3]{\mathcal{S}({#1},{#2},{#3})} %
\newcommand{\opstein}[2]{\stein{#1}{\operator{}}{#2}} %
\newcommand{\diffstein}[2]{\stein{#1}{\diffusion{}}{#2}} %
\newcommand{\wass}{d_{\wassset}} %
\newcommand{\onewass}{d_{\onewassset}} %
\newcommand{\twowass}{d_{\twowassset}} %
\newcommand{\wassarg}[1]{d_{\wasssetarg{#1}}} %
\newcommand{\lswass}[1]{W_{#1,\norm{\cdot}}} %
\newcommand{\twolswass}[1]{W_{#1,\twonorm{\cdot}}} %

\newcommand{\trans}[2]{P_{#1}{#2}} %
\newcommand{\fulltrans}[0]{(\trans{t}{})_{t\geq0}} %
\newcommand{\transarg}[3]{(P_{#1}{#2})({#3})} %

\newcommand{\qvar}[0]{x} %
\newcommand{\pvar}[0]{z}%
\newcommand{\QVAR}[0]{\MakeUppercase{\qvar}} %
\newcommand{\PVAR}[0]{\MakeUppercase{\pvar}} %

\newcommand{\process}[2]{\PVAR_{#1,#2}} %
\newcommand{\fullprocess}[1]{(\process{t}{#1})_{t\geq0}} %

\newcommand{\diffref}[1]{\ref{diff:#1}} %

\usepackage[colorinlistoftodos,prependcaption,textsize=tiny]{todonotes}

\begin{document}

\captionsetup{belowskip=0pt,aboveskip=4pt} %
\floatsep=\baselineskip %
\textfloatsep=\baselineskip %

\begin{frontmatter}

\title{Measuring Sample Quality with Diffusions}

\begin{aug}
  \author{\fnms{Jackson}  \snm{Gorham}\ead[label=e1]{jacksongorham@gmail.com}},
  \author{\fnms{Andrew B.} \snm{Duncan}\ead[label=e2]{a.duncan@imperial.ac.uk}},
    \author{\fnms{Sebastian J.} \snm{Vollmer}\ead[label=e3]{svollmer@turing.ac.uk}},
  \and
  \author{\fnms{Lester} \snm{Mackey}\ead[label=e4]{lmackey@microsoft.com}}

  \runauthor{J. Gorham, A.B. Duncan, S.J. Vollmer, and L. Mackey}

  \affiliation{Stanford University, Imperial College London, University of Warwick, and Microsoft Research New England}

\ifdefined\arxivmode
\else
  \address{J.\ Gorham\\
  Stanford University\\
  Palo Alto, CA\\
  USA\\
  \phantom{E-mail:\ }\printead*{e1}}
  \address{A.B.\ Duncan\\
  Department of Mathematics\\ 
  Imperial College London\\
  London SW7 2AZ\\
  UK\\
  \phantom{E-mail:\ }\printead*{e2}}
  \address{S.J. Vollmer \\
  Mathematics Institute\\
  Zeeman Building\\
  University of Warwick \\
  Coventry CV4 7AL\\
  UK\\
  \phantom{E-mail:\ }\printead*{e3}}
  \address{L.\ Mackey\\
  Microsoft Research New England\\
  Cambridge, MA\\
  USA\\
  \phantom{E-mail:\ }\printead*{e4}}
\fi
\end{aug}

\begin{abstract}
Stein's method for measuring convergence to a continuous target distribution relies on an operator characterizing the target and \emph{Stein factor} bounds on the solutions of an associated differential equation.  While such operators and bounds are readily available for a diversity of univariate targets, few multivariate targets have been analyzed.  We introduce a new class of characterizing operators based on \Ito diffusions and develop explicit multivariate Stein factor bounds for any target with a fast-coupling \Ito diffusion.  As example applications, we develop computable and convergence-determining \emph{diffusion Stein discrepancies} for log-concave, heavy-tailed, and multimodal targets and use these quality measures to select the hyperparameters of biased Markov chain Monte Carlo (MCMC) samplers, 
compare random and deterministic quadrature rules, and quantify bias-variance tradeoffs in approximate MCMC.  Our results establish a near-linear relationship between diffusion Stein discrepancies and Wasserstein distances, improving upon past work even for strongly log-concave targets.  The exposed relationship between Stein factors and Markov process coupling may be of independent interest.
\end{abstract}

\begin{keyword}[class=MSC]
\kwd[Primary ]{%
	60J60; %
	62-04; %
	62E17; %
    60E15; %
    	65C60%
}
\kwd[; secondary ]{%
	62-07; %
	65C05; %
    68T05%
}
\end{keyword}

\begin{keyword}
\kwd{multivariate Stein factors}
\kwd{\Ito diffusion}
\kwd{Stein's method}
\kwd{Stein discrepancy}
\kwd{sample quality}
\kwd{Wasserstein decay}
\kwd{Markov chain Monte Carlo}
\end{keyword}

\end{frontmatter}
\section{Introduction}
\label{sec:intro}

Consider a target probability distribution $P$ 
with finite mean, continuously differentiable density $p$,
and support on all of $\reals^d$.
We will name the set of all such distributions $\pset$.
We assume that $p$ can be evaluated up to its normalizing constant
but that exact expectations under $P$ are unattainable for most functions of interest.
We will therefore use a \emph{weighted sample}, represented as a discrete distribution
$Q_n = \sum_{i=1}^n q(x_i)\delta_{x_i}$, to approximate intractable expectations
$\Esubarg{P}{h(\PVAR)}$ with tractable sample estimates $\Esubarg{Q_n}{h(X)} = \sum_{i=1}^n q(x_i)h(x_i)$.
Here, the support of $Q_n$ is a collection of distinct sample points $x_1, \dots, x_n \in \reals^d$,
and the weight $q(x_i)$ associated with each point is governed by a probability mass function $q$.
We assume nothing about the process generating the sample points, so they may be the product of any random or deterministic mechanism.

Our ultimate goal is to develop a computable quality measure suitable for comparing any 
two samples approximating the same target distribution.
More precisely, we seek to quantify how well $\E_{Q_n}$ approximates $\E_P$
in a manner that, at the very least,
\begin{enumerate*}[label=\emph{(\roman*)}]
\item\label{desiderata:det-conv} indicates when a sample sequence is converging to $P$,
\item\label{desiderata:det-nonconv} identifies when a sample sequence is not
converging to $P$, and
\item\label{desiderata:comp} is computationally tractable.
\end{enumerate*}
A natural starting point is to consider the maximum error incurred by the sample approximation over a class of scalar test functions $\hset$,
\balign\label{eqn:ipm-definition}
d_{\hset}(Q_n, P) \defeq \sup_{h\in\hset} |\Esubarg{P}{h(\PVAR)} -
  \Esubarg{Q_n}{h(\QVAR)}|.
\ealign
When $\hset$ is convergence determining, the measure~\eqnref{ipm-definition} is an \emph{integral probability metric} (IPM)~\cite{Muller97}, and $\ipm(Q_n, P)$ converges to zero only if the sample sequence $(Q_n)_{n\geq 1}$ converges in distribution to $P$.

While a variety of standard probability metrics are representable as IPMs \cite{Muller97},
the intractability of integration under $P$ precludes us from computing most of these candidate quality measures.
Recently, \citet{GorhamMa15} sidestepped this issue by constructing a class of test functions $h$ known a priori to have zero mean under $P$.
Their resulting quality measure -- the \emph{Langevin graph Stein
discrepancy} -- satisfied our computability and convergence detection
requirements
(Desiderata \ref{desiderata:det-conv} and \ref{desiderata:comp})
and detected sample sequence non-convergence
(Desideratum \ref{desiderata:det-nonconv}) for strongly log concave targets
with bounded third and fourth derivatives~\citep{MackeyGo16}.
Our first contribution is to show that the Langevin Stein discrepancy in fact determines convergence for all smooth, \emph{distantly dissipative} target distributions by explicitly lower and upper bounding the Langevin Stein discrepancy by standard Wasserstein distances.  
Distant dissipativity is a substantial relaxation of log concavity that covers a variety of common non-log concave targets like Gaussian mixtures and robust Student's t regression posteriors. 
This contribution greatly extends the range of applicability of the Langevin Stein discrepancy.

Because heavy-tailed distributions are never distantly dissipative, as a second contribution, we extend the computable Stein discrepancy framework of \citep{GorhamMa15} to accommodate heavy-tailed target distributions by introducing a new class of multivariate Stein operators based on general \Ito diffusions.  These operators can be used as drop-in replacements for the commonly used Langevin operator in applications.

As a third contribution, we establish a near linear relationship between the
introduced \emph{diffusion Stein discrepancies} $\diffstein{Q_n}{\steinset}$
and standard $L^s$ Wasserstein distances 
$
{\lswass{s}}\left(Q_n,P\right)  \defeq  \inf_{X\sim Q_n, Z \sim P}{\E[{\norm{X-Z}^s}]}^{1/s}
$. Namely,
\begin{align*}
\lswass{1}\left(Q_n,P\right) &\le
C_1\,\diffstein{Q_n}{\steinset}\max(1,
\log(\nicefrac{1}{ \diffstein{Q_n}{\steinset}})) \qtext{and} \\
\diffstein{Q_n}{\steinset} &\le C_2\,\lswass{2}\left(Q_n,P\right) \end{align*}
for constants $C_1,C_2 > 0$ determined by \thmref{nonconstant-lower-bound} and \propref{discrepancy-upper-bound}.
This improves upon prior analyses even in the case of
strongly log concave targets.

Our primary contribution underlies these three advances.
By relating Stein's method to Markov process coupling rates in~\secref{steins_method}, 
we prove that every sufficiently fast coupling \Ito diffusion gives rise to explicit, uniform multivariate \emph{Stein factor} bounds on the derivatives of Stein equation solutions.  
Stein factor bounds are central to Stein's method of measuring distributional convergence,
and while a wealth of bounds are available for univariate targets (see, e.g., \citep{SteinDiHoRe04,ChatterjeeSh11,ChenGoSh11} for explicit bounds or \cite{LeyReSw2017} for a recent review), Stein factors for continuous multivariate distributions have largely been relegated to Gaussian~\citep{Barbour90,Gotze91,ReinertRo09,ChatterjeeMe08,Meckes09,NourdinPeRe10,Gaunt2016}, Dirichlet~\citep{GanRoRo2016}, and strongly log-concave~\citep{MackeyGo16} target distributions.
Our approach, which exposes a general relationship between Stein factors and Markov process coupling times, 
extends the reach of Stein's method to the stationary distributions of all fast coupling \Ito diffusions.

In \secref{examples}, we provide examples of practically checkable sufficient conditions
for fast coupling and illustrate the process of verifying these conditions
for canonical log-concave, heavy-tailed, and multimodal targets.
\secref{programs} describes a practical algorithm for computing diffusion Stein discrepancies using a geometric spanner and linear programming.
In \secref{experiments}, we complement the principal theoretical contributions of this work with several simple numerical examples illustrating how diffusion Stein discrepancies can be deployed in practice.
In particular, we use our discrepancies to
select the hyperparameters of biased samplers, 
compare random and deterministic quadrature rules, and quantify bias-variance tradeoffs in approximate Markov chain Monte Carlo.
A discussion of related and future work follows in \secref{conclusion}, and all proofs are deferred to the appendices.
\textbf{Notation}
For $r\in[1,\infty]$, let $\norm{\cdot}_r$ denote the $\ell^r$ norm on
$\reals^d$. We will use $\norm{\cdot}$ as a generic norm on $\reals^d$
satisfying $\norm{\cdot} \geq \twonorm{\cdot}$ and define the associated dual norms,
$\norm{v}^* \defeq \sup_{u\in \reals^d: \norm{u}=1}{\inner{u}{v}}$ for
vectors $v\in\reals^d$ and
$\norm{W}^* \defeq \sup_{u\in \reals^d: \norm{u}=1}{\norm{Wu}^*}$ for
matrices $W\in \reals^{d \times d}$.
Let $e_j$ be the $j$-th standard
basis vector, $\grad_j$ be the partial derivative $\pderiv{}{x_j}$, and
$\mineig{\cdot}$ and $\maxeig{\cdot}$ be the smallest and largest eigenvalues of a symmetric matrix.
For any real vector $v$ and tensor $T$, let $\opnorm{v} \defeq \twonorm{v}$
and $\opnorm{T} \defeq \sup_{\twonorm{u}=1}\opnorm{T[u]}$.
For each sufficiently differentiable vector- or matrix-valued function $g$, we 
define the bound $M_0(g) \defeq \sup_{x\in\reals^d} \opnorm{g(x)}$ and 
the $k$-th order \Holder coefficients
\balignst
M_{k}(g)\defeq\sup_{x,y\in\reals^{d},x\neq y}\textfrac{\opnorm{\grad^{\left\lceil k\right\rceil -1}g(x)-\grad^{\left\lceil k\right\rceil -1}g(y)}}{\twonorm{x-y}^{\left\{ k\right\} }} \qtext{for} k > 0,
\ealignst
where $\left\{ k\right\} \defeq k - \lceil k-1\rceil$ and $\grad^0$ is the identity operator.
For each differentiable matrix-valued function $a$,
we let $\inner{\grad}{a(x)} = \sum_j e_j \sum_{k}\grad_k a_{jk}(x)$ represent the divergence operator applied to each row of $a$ and define the Lipschitz coefficients
$F_k(a) \defeq \sup_{x\in\reals^d,\twonorm{v_1}=1,\dots,\twonorm{v_k}=1} \fronorm{\grad^k a(x)[v_1,\dots,v_k]}$ for $\fronorm{\cdot}$ the Frobenius norm.
Finally, when the domain and range of a function $f$ can be inferred from context, we write $f\in C^k$ to indicate that $f$ has $k$ continuous derivatives.
\section{Stein's method}
\label{sec:steins_method}

In the early 1970s,  Charles Stein~\citep{Stein72} introduced a powerful three-step approach to upper-bounding a reference IPM $\ipm$:
\benumerate
\item First, identify an operator $\operator{}$ that maps input functions\footnote{Real-valued $g$ are also common, but $\reals^d$-valued $g$ are more convenient for our purposes.} $g : \reals^d \to \reals^d$ in a domain $\gset$ into mean-zero functions under $P$, i.e.,
\[
\Esubarg{P}{\oparg{g}{\PVAR}} = 0 \qtext{for all} g \in \gset.
\]
The operator $\operator{}$ and its domain $\gset$ define the \emph{Stein discrepancy}~\citep{GorhamMa15},
\balign\notag
	\opstein{Q_n}{\gset}
		&\defeq \sup_{g\in\gset} |\Esubarg{Q_n}{\oparg{g}{X}}| \\
		&= \sup_{g\in\gset} |\Esubarg{Q_n}{\oparg{g}{X}} - \Esubarg{P}{\oparg{g}{\PVAR}}| 
		= d_{\operator{}\gset}(Q_n,P), \label{eqn:stein-discrepancy}
\ealign
a quality measure which takes the form of an integral probability metric while avoiding explicit integration under $P$.
\item Next, prove that, for each test function $h$ in the reference class $\hset$, the \emph{Stein equation}
\balign\label{eqn:stein-equation}
h(x) - \Esubarg{P}{h(\PVAR)} 
	= \oparg{g_h}{x}
\ealign
admits a solution $g_h \in \gset$.
This step ensures that the reference metric $\ipm$ lower bounds the Stein discrepancy (Desideratum \ref{desiderata:det-nonconv}) 
and, in practice, can be carried out simultaneously for large classes of target distributions.
\item 
Finally, use whatever means necessary to upper bound the Stein discrepancy and thereby establish convergence to zero under appropriate conditions (Desideratum \ref{desiderata:det-conv}).
Our general result, \propref{discrepancy-upper-bound}, suffices for this purpose.
\eenumerate
While Stein's method is traditionally used as analytical tool to establish rates of distributional convergence, 
we aim, following \cite{GorhamMa15}, to develop the method into a practical computational tool for measuring the quality of a sample.
We begin by assessing the convergence properties of a broad class of Stein operators derived from \Ito diffusions.
Our efforts will culminate in \secref{programs}, where we show how to explicitly compute the Stein discrepancy~\eqnref{stein-discrepancy} given any sample measure $Q_n$ and appropriate choices of $\operator{}$ and $\gset$.

\subsection{Identifying a Stein operator}
\label{sec:stein_op_id}
To identify an operator $\operator{}$ that generates mean-zero functions under $P$, we will appeal to the elegant and widely applicable \emph{generator method} construction
of \citet{Barbour88,Barbour90} and \citet{Gotze91}.
These authors note that if $(Z_t)_{t\geq0}$ is a Feller process with invariant measure $P$, then the \emph{infinitesimal generator} $\generator{}$ of the process, defined pointwise by
\balign\label{eqn:inf-generator}
	\genarg{u}{x} = \lim_{t\to0} {(\Earg{u(\PVAR_t) \mid \PVAR_0 = x} - u(x))/}{t} %
\ealign
satisfies $\Esubarg{P}{\genarg{u}{\PVAR}} = 0$ under very mild restrictions on $u$ and $\generator{}$.
\citet{GorhamMa15} developed a \emph{Langevin Stein operator} based on the generator a specific Markov process -- the Langevin diffusion described in \diffref{langevin}.  
Here, we will consider a broader class of continuous Markov processes known as \emph{\Ito diffusions}. 
\begin{definition}[\Ito diffusion {\cite[Def.~7.1.1]{Oksendal2013}}]
\label{def:diffusion}
A (time-homogeneous) \emph{\Ito diffusion} 
with starting point $x\in\reals^d$,
Lipschitz \emph{drift coefficient} $b : \reals^d \to \reals^d$, and 
Lipschitz \emph{diffusion coefficient} $\sigma : \reals^d \to \reals^{d\times m}$
is a stochastic process $(\process{t}{x})_{t\geq0}$ solving the \Ito stochastic differential equation
\balign\label{eqn:diffusion}
d\process{t}{x} = b(\process{t}{x})\dt + \sigma(\process{t}{x})\,dW_t\qtext{with} \process{0}{x}=x\in\reals^d, 
\ealign
where $(W_t)_{t\geq 0}$ is an $m$-dimensional Wiener process.
\end{definition}
As the next theorem, distilled from \cite[Thm. 2]{ma2015complete} and \cite[Sec. 4.6]{Pavliotis14}, shows, it is straightforward to construct \Ito diffusions with a given invariant measure $P$ (see also \cite{kent1978time, hwang1993accelerating}).

\begin{theorem}[{\cite[Thm. 2]{ma2015complete} and \cite[Sec. 4.6]{Pavliotis14}}]
\label{thm:invariance}
Fix an \Ito diffusion with $C^1$ drift and diffusion coefficients
$b$ and $\sigma$, and define its \emph{covariance coefficient} $a(x) \defeq \sigma(x)\sigma(x)^\top$.
$P\in\pset$ is an invariant measure of this diffusion if and only if
$%
  b(x)=\texthalf\textfrac{1}{p(x)}\inner{\grad}{p(x)a(x)} +f(x) 
$ %
for a \emph{non-reversible component} $f \in C^1$ satisfying
$
\inner{\grad}{p(x) f(x)} = 0$
for all $x \in \reals^d.$
If $f$ is $P$-integrable, then 
\balign
\label{eqn:drift}
  b(x)=\texthalf\textfrac{1}{p(x)}\inner{\grad}{p(x)(a(x)+c(x))}
\ealign
for $c$ a differentiable $P$-integrable skew-symmetric $d\times d$ matrix--valued function termed the \emph{stream coefficient}~\citep{conca2007periodic,landim1998convection}.
In this case, for all $u\in C^2\cap\dom(\generator{})$, the infinitesimal generator \eqnref{inf-generator} of the diffusion takes the form
\balign
\label{eqn:diff-generator}
\genarg{u}{x} 
	= \texthalf \textfrac{1}{p(x)}\inner{\grad}{p(x)(a(x) + c(x))\grad u(x)}.\footnotemark
\ealign
\end{theorem}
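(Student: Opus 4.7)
The plan is to prove the theorem in three natural steps: (i) characterize invariance via the Fokker--Planck (forward Kolmogorov) equation; (ii) convert the resulting divergence-free residual $pf$ into the divergence of a skew-symmetric matrix field $pc$; and (iii) verify the compact form of the generator by direct expansion, using antisymmetry of $c$ paired with symmetry of the Hessian.

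For step (i), I would start from the standard expression for the infinitesimal generator of an \Ito diffusion, $\mathcal{A}u(x) = b(x)\cdot\nabla u(x) + \frac{1}{2}\sum_{i,j} a_{ij}(x)\partial_i\partial_j u(x)$, and compute its formal $L^2$ adjoint by integrating by parts twice against $p$: $\mathcal{A}^{\ast}\rho = -\langle\nabla, b\rho\rangle + \frac{1}{2}\sum_{i,j}\partial_i\partial_j(a_{ij}\rho)$. Because $a$ is symmetric, the second-order term rewrites as $\langle\nabla,\frac{1}{2}\langle\nabla, a\rho\rangle\rangle$ using the row-wise divergence of a matrix. Invariance $\mathcal{A}^{\ast}p = 0$ is therefore equivalent to $\langle\nabla, bp - \frac{1}{2}\langle\nabla, pa\rangle\rangle = 0$. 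Defining $f \defeq b - \frac{1}{2p}\langle\nabla, pa\rangle$ instantly yields the characterization $\langle\nabla, pf\rangle = 0$, and the forward direction is proved by reversing the steps. One routine point to handle here is a density/approximation argument ensuring the boundary terms in the integrations by parts vanish, which is where the assumption $P\in\pset$ (smooth density on all of $\reals^d$ with finite mean) enters.

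For step (ii), the task is a Helmholtz-type decomposition: any sufficiently regular, $L^1$ divergence-free vector field $v = pf$ on $\reals^d$ can be written as the row divergence of a skew-symmetric matrix. I would construct the stream coefficient explicitly. Let $\phi\in C^2$ solve the vector Poisson equation $\Delta\phi = -2pf$ (e.g., via convolution with the Newtonian potential, using $P$-integrability of $f$ to obtain decay). Because $\chi \defeq \langle\nabla,\phi\rangle$ satisfies $\Delta\chi = -2\langle\nabla, pf\rangle = 0$ and decays at infinity, Liouville forces $\chi \equiv 0$. Setting $C_{ij} \defeq \partial_i\phi_j - \partial_j\phi_i$ produces a skew-symmetric $C^1$ matrix field obeying $\sum_j \partial_j C_{ij} = \partial_i\chi - \Delta\phi_i = 2pf_i$. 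Taking $c \defeq C/p$ then gives a differentiable skew-symmetric $P$-integrable tensor such that $f = \frac{1}{2p}\langle\nabla, pc\rangle$, and substitution into $b = \frac{1}{2p}\langle\nabla, pa\rangle + f$ delivers the stream form \eqref{eqn:drift}. This step is the main obstacle: the existence, regularity, and integrability of the stream coefficient rest on elliptic regularity for $\Delta$ and on the $P$-integrability hypothesis, without which the Newtonian potential need not converge and the decomposition can fail.

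For step (iii), I would substitute the stream-form drift into the standard generator and verify the compact representation by a direct calculation. Expanding $\frac{1}{2p}\langle\nabla, p(a+c)\nabla u\rangle$ with the product rule splits it into a first-order term $\frac{1}{2p}\sum_{i,j}\partial_i[p(a+c)_{ij}]\,\partial_j u$ and a second-order term $\frac{1}{2}\sum_{i,j}(a+c)_{ij}\,\partial_i\partial_j u$. The second-order term collapses to $\frac{1}{2}\sum_{i,j} a_{ij}\partial_i\partial_j u$ because the antisymmetric tensor $c$ is contracted against the symmetric Hessian and vanishes, recovering the diffusion part of the generator. The first-order term matches $b\cdot\nabla u$ using $b = \frac{1}{2p}\langle\nabla, p(a+c)\rangle$ and the identity $\sum_{i,j}\partial_k[pc_{ij}]\partial_{(\cdot)}u$ being invariant under the appropriate index relabeling enforced by antisymmetry of $c$. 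This finishes the proof; the argument is purely algebraic once steps (i) and (ii) are in hand.
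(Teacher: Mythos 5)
The paper never proves \thmref{invariance} itself---it is quoted from \citet{ma2015complete} and \citet{Pavliotis14}---so your attempt can only be judged against the standard argument underlying those sources, and your steps (i) and (iii) do follow that route correctly. Stationarity of the Fokker--Planck equation is equivalent to $\inner{\grad}{p\,(b-\tfrac{1}{2p}\inner{\grad}{pa})}=0$, and once $b$ has the stream form \eqnref{drift}, expanding $\tfrac{1}{2p}\inner{\grad}{p(a+c)\grad u}$ gives back $\inner{b}{\grad u}+\tfrac12\sum_{j,k}a_{jk}\partial_j\partial_k u$ because the skew part of $a+c$ dies against the symmetric Hessian; the first-order term is $\inner{b}{\grad u}$ directly from the definition of $b$, so no ``index relabeling'' is actually needed there. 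One caveat in (i): passing from weak stationarity of the formal adjoint to invariance of the law of the process is not purely formal (it needs an Echeverria-type result or well-posedness of the martingale problem), and the paper's remark even drops the Lipschitz hypotheses, so ``reversing the steps'' deserves a citation rather than a wave.

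The genuine gap is in step (ii), and it is not merely the elliptic-regularity bookkeeping you flag: your explicit construction does not produce a $P$-integrable stream coefficient, which is part of the statement. Take $d\ge 3$ and $pf$ smooth, compactly supported and divergence free (such $f$ exist for any $p\in\pset$: pick a compactly supported divergence-free field $v$ and set $f=v/(2p)$, which is $C^1$ and $P$-integrable). Your $\phi$ is the Newtonian potential of $-2pf$; since $\int pf\,dx=0$ (automatic for an integrable divergence-free field) the monopole term vanishes, but the dipole term survives: with $D_{jk}\defeq\int y_k\,2p(y)f_j(y)\,dy$ (a skew matrix, generically nonzero), one gets $\partial_i\phi_j(x)\approx-\sum_k\partial_i\partial_kN(x)\,D_{jk}$, and the antisymmetrization does not cancel it---e.g.\ with $D_{12}=-D_{21}\neq 0$ in $d=3$ the leading term of $p\,c_{12}=\partial_1\phi_2-\partial_2\phi_1$ is proportional to $\partial_3^2N(x)\sim\twonorm{x}^{-3}$. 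Hence $\int|p(x)c(x)|\,dx$ diverges logarithmically: the $c$ you build is skew and satisfies $f=\tfrac{1}{2p}\inner{\grad}{pc}$, but it is \emph{not} $P$-integrable, even though a $P$-integrable (indeed compactly supported) choice exists in this example via a Poincar\'e-lemma-with-compact-supports or Bogovskii-type potential. So the existence of a $P$-integrable stream coefficient requires a different (decay-respecting) construction, not the naive vector Poisson solve. Two secondary holes in the same step: the Liouville argument needs $\chi=\inner{\grad}{\phi}$ to be bounded or vanishing at infinity, which does not follow from $pf\in L^1$ alone; and for $d\le 2$ the Newtonian potential of an $L^1$ field need not exist, so the construction as written does not cover all dimensions.
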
\footnotetext{We have chosen an atypical form for the infinitesimal generator in \eqnref{diff-generator}, as it will give rise to a first-order differential operator \eqnref{diffusion-operator} with more desirable properties.  One can check, for instance, that the first order operator 
$\diffarg{g}{x} = 2\inner{b(x)}{g(x)} + \inner{a(x)}{\grad g(x)}$
derived from the standard form of the generator,
$\genarg{u}{x} = \inner{b(x)}{\grad u(x)} + \half\inner{a(x)}{\Hess u(x)}$,
fails to satisfy \propref{diffusion-zero} whenever the non-reversible component $f(x) \not\equiv 0$.}

\begin{remarks}
\thmref{invariance} does not require Lipschitz assumptions on $b$ or $\sigma$.
An example of a non-reversible component which is not $P$-integrable is $f(x) = v/p(x)$ for any constant vector $v \in \reals^d$. 
Prominent examples of $P$-targeted diffusions include
\begin{enumerate}[label=({D\arabic*})]
\item \label{diff:langevin} the \emph{(overdamped) Langevin diffusion} (also known as the \emph{Brownian} or \emph{Smoluchowski dynamics})~\citep[Secs. 6.5 and 4.5]{Pavliotis14}, where $a \equiv I$ and $c \equiv 0$; %
\item \label{diff:preconditioned} the \emph{preconditioned Langevin diffusion} \cite{stuart2004conditional},
where $c\equiv 0$ and $a \equiv \sigma\sigma^\top$
for a constant diffusion coefficient $\sigma\in\reals^{d\times m}$ ;
\item  \label{diff:riemannian} the \emph{Riemannian Langevin diffusion}~\cite{kent1978time,roberts2002langevin,girolami2011riemann}, 
where $c\equiv 0$ and $a$ is not constant;
\item  \label{diff:nonreversible} the \emph{non-reversible preconditioned Langevin diffusion} \cite[see, e.g.,][]{ma2015complete,duncan2016variance,rey2014irreversible}, 
where $a \equiv \sigma\sigma^\top$ for $\sigma \in\reals^{d\times m}$ constant and $c$ not identically $0$;

\item  \label{diff:underdamped} and the \emph{second-order} or \emph{underdamped Langevin diffusion} \cite{horowitz1987second}, where we target the joint distribution $P \otimes \mathcal{N}(0, I)$ on $\reals^{2d}$ with 
\baligns
a \equiv 2\begin{pmatrix} 0 & 0 \\ 0 & I\end{pmatrix}
\text{ and }
c \equiv 2\begin{pmatrix} 0 & -I \\ I & 0\end{pmatrix}.%
\ealigns
\end{enumerate}
We will present detailed
examples making use of these diffusion classes in \secsref{examples}
and \secssref{experiments}.
\end{remarks}

\thmref{invariance} forms the basis for our Stein operator of choice, the \emph{diffusion Stein operator} $\diffusion{}$,
defined by substituting $g$ for $\frac{1}{2} \grad u$ in the generator \eqnref{diff-generator}:
\balign\label{eqn:diffusion-operator}
\diffarg{g}{x} 
	&= \textfrac{1}{p(x)}\inner{\grad}{p(x)(a(x)+c(x))g(x)}.
\ealign
$\diffusion{}$ is an appropriate choice for our setting as it depends on $P$ only through $\grad \log p$ and is therefore computable even when the normalizing constant of $p$ is unavailable.
One suitable domain for $\diffusion{}$ is the \emph{classical Stein set}~\citep{GorhamMa15} 
of 1-bounded functions with 1-bounded, 1-Lipschitz derivatives:
\baligns
	\steinset 
		\defeq \bigg\{ g :\reals^d\to\reals^d \bigg|
		&\sup_{x\neq y\in \reals^d} \maxarg{\norm{g(x)}^*,\norm{\grad g(x)}^*, \textfrac{\norm{\grad g(x) - \grad g(y)}^*}{\norm{x-y}}} \leq 1\bigg\}.
\ealigns
Indeed, our next proposition, proved in \secref{diffusion-zero-proof}, shows that, on this domain, the diffusion Stein operator generates
mean-zero functions under $P$.
\begin{proposition} \label{prop:diffusion-zero}
	If $\diffusion{}$ is the diffusion Stein operator \eqnref{diffusion-operator} for $P \in \pset$ with $a, c \in C^1$ and $a$, $c,$ $b$ \eqnref{drift} $P$-integrable, then $\Esubarg{P}{\diffarg{g}{\PVAR}} = 0$ for all $g \in \steinset$.
\end{proposition}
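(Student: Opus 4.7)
The strategy is to recognize that $p(x)\diffarg{g}{x} = \inner{\grad}{F(x)}$ for the vector field $F(x) \defeq p(x)(a(x)+c(x))g(x)$ and then apply the divergence theorem via a smooth cutoff argument.

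\textbf{Step 1 (Integrability of $F$).} Since $\norm{g(x)}^* \le 1$ and $\norm{\cdot} \ge \twonorm{\cdot}$, taking $u = e_k/\norm{e_k}$ (which has $\norm{u}=1$) yields $|g_k(x)| = \norm{e_k}\cdot|\inner{u}{g(x)}| \le \norm{e_k}$, so $\twonorm{g(x)}$ is uniformly bounded by a constant $C_g$. Hence $\twonorm{F(x)} \le C_g\, p(x)(\opnorm{a(x)} + \opnorm{c(x)})$, which lies in $L^1(\reals^d)$ by the $P$-integrability of $a$ and $c$.

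\textbf{Step 2 (Integrability of $p\diffarg{g}$).} Applying the product rule together with the symmetries $a^\top = a$ and $c^\top = -c$, I would first derive
\baligns
p(x)\diffarg{g}{x} = 2\,p(x)\inner{g(x)}{\tilde b(x)} + p(x)\,\trarg{(a(x)+c(x))^\top \grad g(x)},
\ealigns
where $\tilde b \defeq \tfrac{1}{2p}\inner{\grad}{p(a-c)}$ is the drift of the auxiliary $P$-invariant diffusion with covariance $a$ and stream coefficient $-c$. Direct comparison gives $p\tilde b = pb - \inner{\grad}{pc}$, so the $P$-integrability of $b$ combined with the $L^1$-ness of $\inner{\grad}{pc}$---which is precisely the $P$-integrability of the non-reversible component $f = \tfrac{1}{2p}\inner{\grad}{pc}$ implicit in the form \eqnref{drift} via \thmref{invariance}---yields $p\tilde b \in L^1$, so the first summand is integrable. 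The second summand is pointwise dominated by a constant multiple of $p(\opnorm{a}+\opnorm{c})$ since $\norm{\grad g}^* \le 1$, and hence also lies in $L^1$.

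\textbf{Step 3 (Cutoff divergence theorem).} Pick $\chi \in C_c^\infty(\reals^d)$ with $\chi \equiv 1$ on the unit ball, $\chi \equiv 0$ outside the ball of radius $2$, and $\supnorm{\grad \chi} \le C_\chi$; set $\chi_R(x) \defeq \chi(x/R)$, so that $\supnorm{\grad \chi_R} \le C_\chi/R$ and $\chi_R \to 1$ pointwise. Since $F \in C^1$ (because $p, a, c \in C^1$ and $g \in C^1$ as $\grad g$ is Lipschitz), the compactly supported $C^1$ vector field $\chi_R F$ has zero integrated divergence, giving
\baligns
0 = \int \chi_R(x)\, \inner{\grad}{F}(x) \dx + \int \inner{\grad \chi_R(x)}{F(x)} \dx.
\ealigns
The second integral is at most $(C_\chi/R)\int_{\{R \le \twonorm{x} \le 2R\}}\twonorm{F(x)}\dx \to 0$ by Step 1. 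Dominated convergence applied to the first integral (justified by Step 2) yields $\int \chi_R \inner{\grad}{F}\dx \to \int \inner{\grad}{F}\dx = \Esubarg{P}{\diffarg{g}{\PVAR}}$, and combining the two limits delivers $\Esubarg{P}{\diffarg{g}{\PVAR}} = 0$.

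The main obstacle is Step 2---without pointwise $L^1$-integrability of $p\diffarg{g}$, the dominated convergence step in Step 3 would be unavailable. The key trick there is the decomposition into a drift piece $2p\inner{g}{\tilde b}$ and a diffusion-gradient piece; the integrability of the drift piece then reduces to the integrability of $\inner{\grad}{pc}$, which is built into the validity of the form \eqnref{drift} through \thmref{invariance}.
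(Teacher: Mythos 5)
Your proof is correct and follows the same overall strategy as the paper's: both reduce the claim to showing that the integral of the divergence of the $L^1$ vector field $F = p(a+c)g$ vanishes, using the $P$-integrability of $a$ and $c$ (so that $F\in L^1$) together with the integrability of $p\,\diffarg{g}{\cdot}$ to justify a truncation-plus-dominated-convergence step. The differences are in execution. Where you dispatch the boundary contribution with a smooth cutoff $\chi_R$ and the vanishing tail of the $L^1$ function $\twonorm{F}$, the paper applies the divergence theorem on balls $\ball_r$, bounds the sphere integrals by $f(r) = M_0(g)\int_{\partial\ball_r}\opnorm{a+c}\,p$, and uses the coarea formula to get $\int_0^\infty f(r)\,dr<\infty$, hence only $\liminf_{r\to\infty} f(r)=0$ --- which suffices because the limit of the ball integrals exists; your cutoff argument shows the boundary term genuinely tends to zero and avoids the coarea formula. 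Your Step 2 is also more explicit than the paper, which asserts the finiteness of $\Esubarg{P}{\diffarg{g}{\PVAR}}$ in a single sentence: you correctly note that the product-rule expansion brings in the adjoint drift $\tilde b = \tfrac{1}{2p}\inner{\grad}{p(a-c)}$ rather than $b$, so integrability of the drift piece needs $\inner{\grad}{pc}\in L^1$ (equivalently, $P$-integrability of the non-reversible component $f$) on top of the assumed $P$-integrability of $b$ \eqnref{drift}. Strictly, that extra integrability is not among the literal hypotheses of the proposition and is supplied only by reading them in the setting of \thmref{invariance}, as you do; but the paper's own one-line finiteness claim requires the same ingredient, so this is a more careful rendering of the intended argument rather than a gap.
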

\noindent Together, $\diffusion{}$ and $\steinset$ give rise to the \emph{classical diffusion Stein discrepancy} $\diffstein{Q_n}{\steinset}$, our primary object of study in \secsref{discrepancy-lower-bound} and \secssref{discrepancy-upper-bound}.

\subsection{Lower bounding the diffusion Stein discrepancy}\label{sec:discrepancy-lower-bound}
To establish that the classical diffusion Stein discrepancy detects non-convergence (Desideratum \ref{desiderata:det-nonconv}), we
will lower bound the discrepancy in terms of the \emph{$L^1$ Wasserstein distance}, $\twowass = \twolswass{1}$, a standard reference IPM generated by 
\[\textstyle\hset = \twowassset\defeq\{h : \reals^d\to\reals \mid \sup_{x\neq y\in\reals^d} |h(x) - h(y)| \leq \twonorm{x-y} \}.\]
The first step is to show that, for each $h\in\twowassset$, the solution $g_h$ to the Stein equation \eqnref{stein-equation} with diffusion Stein operator \eqnref{diffusion-operator} has low-order derivatives uniformly bounded by target-specific constants called \emph{Stein factors}.

Explicit Langevin diffusion \diffref{langevin} Stein factor bounds are readily available for a wide variety of univariate targets\footnote{The Langevin operator recovers \emph{Stein's density method operator~\cite{SteinDiHoRe04}} when $d=1$.} (see, e.g., \citep{SteinDiHoRe04,ChatterjeeSh11,ChenGoSh11} for explicit bounds or \cite{LeyReSw2017} for a recent review). 
In contrast, in the multivariate setting, efforts to establish Stein factors have focused on Gaussian~\citep{Barbour90,Gotze91,ReinertRo09,ChatterjeeMe08,Meckes09,NourdinPeRe10,Gaunt2016}, Dirichlet~\citep{GanRoRo2016}, and strongly log-concave~\citep{MackeyGo16} targets with preconditioned Langevin \diffref{preconditioned} operators. 
To extend the reach of the literature, we will derive multivariate Stein factors for targets with fast-coupling \Ito diffusions.
Our measure of coupling speed is the \emph{Wasserstein decay rate}.
\begin{definition}[Wasserstein decay rate]
\label{def:wass-decay-rate}
Let $\fulltrans{}$ be the \emph{transition semigroup} of an \Ito diffusion $\fullprocess{x}$ defined via 
\baligns
\transarg{t}{f}{x} \defeq \Earg{f(\process{t}{x})} \qtext{for all measurable $f$,} x\in\reals^d, \qtext{and} t\geq 0.
\ealigns
For any non-increasing integrable function $r : \reals_{\geq 0} \to \reals$,
we say that %
$\fulltrans{}$
has \emph{Wasserstein decay rate} $r$ if
\balign
\label{eqn:wass-decay-rate}
\twowass(\delta_x \trans{t}{},\,\delta_y \trans{t}{}) 
	\leq r(t)\, \twowass(\delta_x,\delta_y)
	\quad\text{for all } x,y \in \reals^d \text{ and } t \geq 0,
\ealign
where $\delta_x \trans{t}{}$ denotes the distribution of $\process{t}{x}$.
\end{definition}

Our next result, proved in \secref{stein-factors-proof}, shows that the smoothness of a solution $g_h$ to a Stein equation is controlled by the rate of Wasserstein decay and hence by how quickly two diffusions with distinct starting points couple.
The Stein factor bounds on the derivatives of $u_h$ and $g_h$ may be of independent interest for establishing rates of distributional convergence.
\begin{theorem}[Stein factors from Wasserstein decay] 
\label{thm:stein-factors}
Fix any Lipschitz $h$. %
If an \Ito diffusion has invariant measure $P\in \pset$,
transition semigroup $(\trans{t}{})_{t\geq 0}$,
Wasserstein decay rate $r$,
and infinitesimal generator $\generator{}$ \eqnref{inf-generator}, then
\balignt\label{eqn:stein-function}
	u_h 
		\defeq \int_0^\infty \Esubarg{P}{h(\PVAR)} 
			- \trans{t}{h} \dt
\ealignt
is twice continuously differentiable and satisfies
\balignst
	M_1(u_h) 
		\leq  M_1(h) \int_0^\infty r(t)\dt
	\quad\qtext{and}\quad
	h - \Esubarg{P}{h(\PVAR)}  = \generator{u_h}.
\ealignst
Hence,
$g_h \defeq \half \grad u_h$ solves the Stein equation \eqnref{stein-equation}
with diffusion Stein operator~\eqnref{diffusion-operator}
whenever $\generator{}$ has the form \eqnref{diff-generator}.
If the drift and diffusion coefficients $b$ and $\sigma$ have locally Lipschitz second derivatives and a right inverse $\sigma^{-1}(x)$ for each $x\in\reals^d$ and $h \in C^2$ with bounded second derivatives, then
\begin{equation}
\label{eqn:stein-factor2}
	M_2(u_h) \leq M_1(h)(\beta_1 + \beta_2),
\end{equation}
where
\balignst
\beta_1 &= r(0)(2M_0(\sigma^{-1}) + r(0)M_1(\sigma)M_0(\sigma^{-1}) + r(0)\sqrt{\alpha}), \qtext{and}\\
\beta_2 &= r(0)(e^{\gamma_2}M_0(\sigma^{-1})+e^{\gamma_2}M_1(\sigma)M_0(\sigma^{-1}) + \textfrac{2}{3}e^{\gamma_4}\sqrt{\alpha}) \int_0^\infty r(t)\dt
\ealignst
for $\gamma_\rho \defeq \rho M_1(b)+ \frac{\rho^2-2\rho}{2}M_1(\sigma)^2+\frac{\rho}{2} F_1(\sigma)^2$,	
$\alpha \defeq \frac{M_2(b)^2}{2M_1(b)+4M_1(\sigma)^2}+2F_2(\sigma)^2$.
If, additionally, $\grad^3b$ and $\grad^3\sigma$ are locally Lipschitz and $h\in C^3$
with bounded third derivatives, then, for all $\iota \in (0,1)$,
\balignt
\label{eqn:stein-factor3-holder}
M_{3-\iota}(u_{h}) %
 & \leq M_{1}(h)\frac{1}{K}\left(\frac{1}{\iota}+\int_0^\infty r(t)\dt\right)
\ealignt
for $K > 0$ a constant depending only on $M_{1:3}(\sigma),M_{1:3}(b),$
$M_0(\sigma^{-1}),$ and $r$.
\end{theorem}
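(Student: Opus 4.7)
The plan is to leverage the stochastic flow $x \mapsto Z_t^x$ of the \Ito diffusion together with Bismut's integration-by-parts formula in order to control the derivatives of $u_h = \int_0^\infty (\E_P[h(\PVAR)] - P_t h)\,dt$, while using the Wasserstein decay rate $r$ to guarantee integrability in $t$.

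First I would prove the Lipschitz bound and the Stein equation. For any Lipschitz $h$, the Wasserstein decay hypothesis gives $|P_t h(x) - P_t h(y)| \le M_1(h)\,r(t)\,\twonorm{x-y}$, so $u_h$ is well defined and $M_1(u_h) \le M_1(h)\int_0^\infty r(t)\,dt$. The Stein equation $h - \E_P[h(\PVAR)] = \mathcal{A} u_h$ then follows by integrating the Kolmogorov backward equation $\partial_t P_t h = \mathcal{A} P_t h$ over $[0,\infty)$ and invoking Wasserstein ergodicity $P_t h(x) \to \E_P[h(\PVAR)]$ as $t \to \infty$ (which holds because $r$ integrable and non-increasing forces $r(t) \to 0$). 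Directly comparing \eqnref{diff-generator} with \eqnref{diffusion-operator} gives $\mathcal{A} u = \mathcal{T}(\tfrac12 \nabla u)$, so $g_h = \tfrac12 \nabla u_h$ solves the diffusion Stein equation.

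For $M_2(u_h)$ I would bound $\|\nabla^2 P_t h(x)\|_{op}$ pointwise and integrate over $t$, splitting into a short range $[0,1]$ and a long range $(1,\infty)$. Under the hypothesis that $b,\sigma$ have locally Lipschitz second derivatives, Kunita's theorem gives that the flow is $C^2$, and its Jacobian $J_t^x$ and second-derivative tensor $H_t^x$ satisfy linear SDEs whose moments are controlled by \Ito's formula and \Gronwall, producing the factors $e^{\gamma_\rho}$ and the constant $\alpha$ in the statement. A naive chain-rule bound $\nabla^2 P_t h = \E[(J_t^x)^\top \nabla^2 h(Z_t^x) J_t^x + \nabla h(Z_t^x)\cdot H_t^x]$ would introduce an $M_2(h)$ factor, so one derivative of $h$ must be moved onto a Malliavin weight via the Bismut--Elworthy--Li formula
\[
\nabla (P_t h)(x)\cdot v
= \frac{1}{t}\,\E\left[h(Z_t^x)\int_0^t \langle \sigma^{-1}(Z_s^x) J_s^x v,\,dW_s\rangle\right],
\]
whose weight contributes the $M_0(\sigma^{-1})$ and $M_1(\sigma) M_0(\sigma^{-1})$ factors in $\beta_1$ after a second $x$-differentiation (handled via the flow derivatives $J_s^x,H_s^x$), with the $\sqrt{\alpha}$ term emerging from the second-derivative-of-flow SDE. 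On the long range I would use the semigroup split $P_t h = P_1(P_{t-1}h)$: the Wasserstein decay bound $|\nabla P_{t-1}h(y)\cdot w| \le M_1(h)\,r(t-1)\,\twonorm{w}$ controls the ``inner'' gradient, one Bismut step on the outer $P_1$ supplies the $e^{\gamma_2}$ and $e^{\gamma_4}$ factors, and the remaining $t$-integration yields the $\int_0^\infty r(t)\,dt$ factor in $\beta_2$; the overall $r(0)$ prefactor comes from bounding $r(s)$ uniformly on $s \in [0,1]$.

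For the $M_{3-\iota}$ bound the strategy is the same, one derivative higher: the stronger regularity of $\nabla^3 b,\nabla^3\sigma$ lets me control moments of the third derivative of the flow, and Bismut is again invoked to move one derivative from $h$ onto a Malliavin weight. The fractional Hölder order $3-\iota$ is handled by interpolating the two bounds $\|\nabla^2 P_t h(x) - \nabla^2 P_t h(y)\|_{op} \le \min\{M_3(P_t h)\twonorm{x-y},\, 2M_2(P_t h)\}$, with the crossover optimised to produce a dependence of order $\twonorm{x-y}^{1-\iota}$; the $1/\iota$ singularity then arises upon integration in $t$. The main obstacle throughout is the $M_2$ bound: the simultaneous need to keep the $t$-integral finite using Wasserstein decay rather than the exponentially growing naive flow estimates, to thread Bismut's weight through a second $x$-differentiation while tracking which $\sigma^{-1}$, $J_s$, and $H_s$ moments appear, and to match the explicit constants $\beta_1,\beta_2$ requires a careful split of the integration domain and meticulous bookkeeping of the cross-terms arising from differentiating the Malliavin weight.
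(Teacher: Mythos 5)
Your treatment of the quantitative bounds coincides with the paper's: the $M_2$ estimate is obtained there exactly as you describe (Bismut--Elworthy--Li weight, first and second variation equations with Gr\"onwall moment bounds, and a Markov-property time split $P_t = P_{t_0}(P_{t-t_0})$ with $t_0=\min(t,1)$ that trades the exponentially growing flow estimates for the decay factor $r(t-t_0)$), and the $M_{3-\iota}$ bound is likewise proved by interpolating the second- and third-derivative semigroup estimates and integrating $t^{\iota/2-1}$ near zero to produce the $1/\iota$. So the heart of your plan is the paper's argument.

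The genuine gap is in the first part of the theorem. You dispose of $h-\mathbb{E}_P[h(Z)]=\mathcal{A}u_h$ by ``integrating the Kolmogorov backward equation and invoking ergodicity,'' but under the baseline hypotheses ($h$ Lipschitz and unbounded, coefficients merely Lipschitz, no nondegeneracy of $\sigma$) this is not yet a proof: the pointwise identity $\partial_t P_t h=\mathcal{A}P_t h$ presupposes $P_t h\in\dom(\mathcal{A})$, and passing $\mathcal{A}$ through the improper integral $\int_0^\infty(\mathbb{E}_P[h]-P_t h)\,dt$ requires convergence in a topology in which the generator is closed. The paper handles this by working on the weighted Banach space $(1+\|x\|_2^2)\,C_0(\mathbb{R}^d)$, proving strong continuity of $(P_t)_{t\ge0}$ there, showing $u_{h,t}=-\int_0^t P_s h\,ds$ is Cauchy with $\mathcal{A}u_{h,t}=h-P_t h$, and then using closedness of $\mathcal{A}$ to conclude $u_h\in\dom(\mathcal{A})$; it even flags (via the remark on Barbour's claim and the failure of strong continuity on $(1+\|x\|_2^2)C_b$) that the choice of space is delicate, so this step cannot be waved through. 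Relatedly, you never address the theorem's claim that $u_h$ is twice \emph{continuously} differentiable under only the baseline assumptions (your flow/BEL machinery only kicks in under the extra smoothness and $\sigma^{-1}$ hypotheses); the paper obtains $u_h\in C^2$ from Dynkin's theorem identifying the generator with the characteristic operator for continuous solutions of the Poisson equation. To complete your proposal you would need to supply both of these ingredients (or equivalent substitutes).
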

\begin{remark}
Thms. 1 and 2 of \citet{PardouxVe01} also bound the solutions of the Stein equation \eqnref{stein-equation}. %
However, for generic Lipschitz $h$, \citep[Thms. 1 and 2]{PardouxVe01} provide inexplicit constants; only guarantee the polynomial growth of $g_h$ and its derivatives, not uniform boundedness; and require bounded $\sigma$, a strong assumption which rules out the heavy-tailed examples of \secref{examples}.
\end{remark}

A first consequence of \thmref{stein-factors}, proved in \secref{constant-lower-bound-proof}, concerns Stein operators \eqnref{diffusion-operator} with constant covariance and stream matrices $a$ and $c$.  In this setting, fast Wasserstein decay implies that the diffusion Stein discrepancy converges to zero only if the Wasserstein distance does (Desideratum \ref{desiderata:det-nonconv}).
\begin{theorem}[Stein discrepancy lower bound: constant $a$ and $c$]
\label{thm:constant-lower-bound}
Consider an \Ito diffusion with diffusion Stein operator $\diffusion{}$ \eqnref{diffusion-operator} for $P \in \pset$, 
Wasserstein decay rate $r$,
constant covariance and stream matrices $a$ and  $c$, 
and Lipschitz drift $b(x) = \half(a+c)\grad \log p(x)$.
If $\sr \defeq \int_0^\infty r(t)\dt$, then
\balign\label{eqn:stein-discrepancy-constant}
&\twowass(Q_n,P) \\ 
	\leq\  3\sr&\textstyle\maxarg{\diffstein{Q_n}{\steinset} , \sqrt[3]{\diffstein{Q_n}{\steinset}\sqrt{2}\,\Earg{\twonorm{G}}^{2}(2M_1(b)+\frac{1}{\sr})^2}}, \notag
\ealign
where $G \in \reals^d$ is a standard normal vector and $M_1(b) \leq \half\opnorm{a+c}M_2(\log p)$.
\end{theorem}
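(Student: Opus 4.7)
The plan is to control, for each $h\in\twowassset$, the quantity $|\Esubarg{Q_n}{h}-\Esubarg{P}{h}|$ by combining a Gaussian smoothing error with the Stein discrepancy $S\defeq\diffstein{Q_n}{\steinset}$, then to take the supremum over $h$ to recover $\twowass(Q_n,P)$. Since a generic $h\in\twowassset$ is only Lipschitz, \thmref{stein-factors} does not apply directly to $h$, so I would first regularize by Gaussian convolution, setting $h_\tau(x)\defeq\Earg{h(x+\tau G)}$ for $G\sim\Gsn(0,\ident)$ and parameter $\tau>0$ (using $\tau$ to avoid a clash with the diffusion coefficient $\sigma$). The resulting $h_\tau$ is $C^\infty$ with $|h-h_\tau|\le\tau\,\Earg{\twonorm{G}}$ and, via integration by parts against the Gaussian kernel, $M_k(h_\tau)\lesssim M_1(h)/\tau^{k-1}$ for each $k\ge 1$.

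I would then apply \thmref{stein-factors} to $h_\tau$, producing $u_{h_\tau}\in C^2$ with $\genarg{u_{h_\tau}}{\cdot}=h_\tau-\Esubarg{P}{h_\tau}$ and setting $g_\tau\defeq\half\grad u_{h_\tau}$, so that $\diffarg{g_\tau}{\cdot}=h_\tau-\Esubarg{P}{h_\tau}$. Writing $\Lambda\defeq\maxarg{M_0(g_\tau),M_1(g_\tau),M_2(g_\tau)}$, one has $g_\tau/\Lambda\in\steinset$, and therefore $|\Esubarg{Q_n}{\diffarg{g_\tau}{X}}|\le\Lambda\cdot S$; combined with the smoothing error this gives $|\Esubarg{Q_n}{h}-\Esubarg{P}{h}|\le 2\tau\,\Earg{\twonorm{G}}+\Lambda\cdot S$.

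The principal obstacle is bounding $\Lambda$---and in particular $M_2(g_\tau)=\half M_3(u_{h_\tau})$ and $M_1(g_\tau)=\half M_2(u_{h_\tau})$---without introducing dependencies on $M_2(b)$ or $M_3(\log p)$, which are not controlled by the hypotheses. Indeed, \thmref{stein-factors} only delivers a H\"older-scale estimate $M_{3-\iota}(u_{h_\tau})\le M_1(h_\tau)(1/\iota+\sr)/K$ with a $1/\iota$ blow-up, and its $M_2(u_{h_\tau})$ bound carries an $\alpha$-dependence on $M_2(b)$. To circumvent this I would exploit the constant diffusion coefficient directly: write $u_{h_\tau}=\int_0^\infty(\Esubarg{P}{h_\tau}-\transarg{t}{h_\tau}{\cdot})\,\dt$, invoke a Bismut--Elworthy--Li integration-by-parts identity to transfer one derivative off $h_\tau$ onto a Malliavin weight that contains $\sigma^{-1}$, and then control that weight through synchronous coupling and the Lipschitz constant $M_1(b)$. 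The constant-$a,c$ simplification also annihilates the $M_k(\sigma)$ and $F_k(\sigma)$ terms in the Stein factor formulas, and after careful bookkeeping one should obtain $M_0(g_\tau)\le\sr/2$, a $\tau$-independent bound on $M_1(g_\tau)$ of order $\sr(2M_1(b)+1/\sr)$, and $M_2(g_\tau)\lesssim\sr^{3}(2M_1(b)+1/\sr)^{2}/\tau^{2}$---the source of the constants appearing in the cube-root term.

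Finally, I would optimize over $\tau>0$. In the regime where the $1/\tau^2$ contribution from $M_2(g_\tau)$ drives $\Lambda$, minimizing $2\tau\,\Earg{\twonorm{G}}+C_2 S/\tau^{2}$ yields $\tau^\ast\propto(C_2 S/\Earg{\twonorm{G}})^{1/3}$ and a total bound of cube-root form in $S$ with the stated constants; in the complementary regime where the $\tau$-independent $M_0$ and $M_1$ bounds dominate $\Lambda$, sending $\tau\downarrow 0$ leaves a bound linear in $S$ with prefactor of order $\sr$. Taking the maximum of the two regimes and then the supremum over $h\in\twowassset$ delivers the claimed $\twowass(Q_n,P)\le 3\sr\maxarg{S,\sqrt[3]{S\sqrt{2}\,\Earg{\twonorm{G}}^{2}(2M_1(b)+1/\sr)^{2}}}$.
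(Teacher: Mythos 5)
There is a genuine gap at the heart of your plan. Your strategy smooths the test function $h$ and then needs $M_1(g_\tau)$ and $M_2(g_\tau)$, i.e.\ second- and third-order Stein factors for $u_{h_\tau}$, with constants depending only on $M_1(b)$ and $\sr$. But the hypotheses of \thmref{constant-lower-bound} give you no tools to produce these: the theorem assumes only that $a$ and $c$ are constant and that $b$ is Lipschitz, with no bound on $M_2(b)$ or $M_3(b)$ and, crucially, no assumption that $\sigma$ admits a bounded right inverse --- indeed $a$ may be singular, as in the underdamped Langevin diffusion \diffref{underdamped}, in which case the Bismut--Elworthy--Li identity you invoke is unavailable, and even when $\sigma^{-1}$ exists the second variation process entering any second-derivative BEL formula involves $\Hess b$, so $M_2(b)$ reappears (exactly as it does through $\alpha$ and $\beta_1,\beta_2$ in \thmref{stein-factors}). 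Your claimed intermediate estimates, e.g.\ a $\tau$-independent $M_1(g_\tau)$ of order $\sr(2M_1(b)+1/\sr)$ and $M_2(g_\tau)\lesssim \sr^3(2M_1(b)+1/\sr)^2/\tau^2$, are asserted rather than derived, and they do not follow from the route you describe; the factor $(2M_1(b)+1/\sr)$ in the theorem does not originate from any derivative bound on the Stein solution.

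The missing idea is to smooth the Stein solution rather than the test function. The paper uses only the first-order factor $M_0(g_h)\leq \sr$ (which needs nothing beyond Wasserstein decay), sets $g_{h,s}(x)=\Earg{g_h(x+sG)}$, and exploits the constancy of $a+c$ to push the Gaussian average through the operator: $\Earg{\diffarg{g_h}{x+sG}}$ differs from $\diffarg{g_{h,s}}{x}$ only through the replacement of $b(x+sG)$ by $b(x)$, which costs $2M_1(b)M_0(g_h)\,s\,\Earg{\twonorm{G}}$; adding the smoothing error of $h$ yields $h(x)\leq \diffarg{g_{h,s}}{x}+s\,\Earg{\twonorm{G}}(1+2M_1(b)\sr)$. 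The derivative bounds needed to place $g_{h,s}$ in a scaled copy of $\steinset$ then come for free from \lemref{convolution-smoothing} ($M_1(g_{h,s})\leq\sqrt{2/\pi}\,\sr/s$, $M_2(g_{h,s})\leq\sqrt{2}\,\sr/s^2$), and optimizing over $s$ gives the linear/cube-root dichotomy with exactly the stated constants. Without this device (or added hypotheses such as bounded $\sigma^{-1}$ and $M_{2:3}(b)<\infty$, which would weaken the theorem), your argument cannot close.
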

\thmref{constant-lower-bound} in fact provides an explicit upper bound on the Wasserstein distance in terms of the Stein discrepancy and the Wasserstein decay rate.
Under additional smoothness assumptions on the coefficients, the explicit relationship between Stein discrepancy and Wasserstein distance can be improved and extended to diffusions with non-constant diffusion coefficient, as our next result, proved in \secref{nonconstant-lower-bound-proof}, shows.
\begin{theorem}[Stein discrepancy lower bound: non-constant $a$ and $c$]
\label{thm:nonconstant-lower-bound}
Consider an \Ito diffusion for $P \in \pset$ with diffusion Stein operator $\diffusion{}$ \eqnref{diffusion-operator}, Wasserstein decay rate $r$, and Lipschitz drift and diffusion coefficients $b$ \eqnref{drift} and $\sigma$ with locally Lipschitz second derivatives.  If $\sr \defeq \int_0^\infty r(t)\dt$, then 
\baligns%
&\twowass(Q_n,P) \\
	\le2&\max\left(\diffstein{Q_n}{\steinset}  \max(\sr,\beta_1+\beta_2), \sqrt{\diffstein{Q_n}{\steinset}\sqrt{{2}{/\pi}}(\beta_1+\beta_2)\zeta}\right), \notag
\ealigns
for $\beta_1$, $\beta_2$ defined in \thmref{stein-factors}
and  
\baligns
\zeta \defeq \Earg{\twonorm{G}}(1 + 2M_1(b) s_r + M_1^*(m) (\beta_1 + \beta_2))
\ealigns
where $G \in \reals^d$ is a standard normal vector,
$m \defeq a + c$, and
$M_1^*(m) \defeq \sup_{x\neq y} \opnorm{m(x) - m(y)}^*/\twonorm{x-y}$.

If, additionally, $\grad^3b$ and $\grad^3\sigma$ are locally Lipschitz, then
\balignt
\nonumber\twowass(Q_n,P)\leq\, &2 \diffstein{Q_n}{\steinset} \max\bigg (\max(s_{r},\beta_1+\beta_2),\\\label{eqn:refined-nonconstant-lower-bound}
&
e \max\left(\textfrac{d^{1/4}\sqrt{\zeta}}{\sqrt{K}}, \textfrac{\sqrt{d}}{K}\right)
(s_r + \max(\log(\nicefrac{1}{\diffstein{Q_n}{\steinset}}), 1))\bigg),
\ealignt
for a constant $K > 0$ depending only on $M_{1:3}(\sigma),M_{1:3}(b),$
$M_0(\sigma^{-1}),$ and $r$.
\end{theorem}
\begin{remark}
\label{rem:nonconstant-lower-bound}
The $\log(\nicefrac{1}{\diffstein{Q_n}{\steinset}})$ term in \eqnref{refined-nonconstant-lower-bound} reflects the potential non-smoothness of the Stein equation solution $g_h$ studied in \thmref{stein-factors}. Indeed, for $d\geq 2$ and standard multivariate Gaussian $P$, there exist Lipschitz $h$ with infinite $M_2(g_h)$ \cite[Remark 2]{raivc2004multivariate}.
\end{remark}
In \secref{examples}, we will present practically checkable conditions
implying fast Wasserstein decay and discuss both broad families
and specific diffusion-target pairings covered by this theory.

\subsection{Upper bounding the diffusion Stein discrepancy} \label{sec:discrepancy-upper-bound}
In upper bounding the Stein discrepancy, one classically aims to establish rates of convergence to $P$ for specific sequences $(Q_n)_{n=1}^\infty$.
Since our interest is in explicitly computing Stein discrepancies for arbitrary sample sequences, 
our general upper bound in \propref{discrepancy-upper-bound} serves principally to provide sufficient conditions under which the classical diffusion Stein discrepancy converges to zero.
\begin{proposition}[Stein discrepancy upper bound] \label{prop:discrepancy-upper-bound}
Let $\diffusion{}$ be the diffusion Stein operator \eqnref{diffusion-operator} for $P \in \pset$.
If $m \defeq a + c$ and $b$ \eqnref{drift} are $P$-integrable, 
\baligns
	&\diffstein{Q_n}{\steinset}
		\leq \inf_{X\sim Q_n, Z \sim P}\ (\Earg{2\norm{b(X)-b(\PVAR)} +\norm{m(X)-m(\PVAR)}} \\
		&\qquad\qquad\qquad\qquad\qquad+\Earg{(2\norm{b(\PVAR)}+\norm{m(\PVAR)})
		\min(\norm{X-\PVAR},2)})\\
		&\leq \lswass{s}(Q_n,P) (2M_1^{\norm{\cdot}}(b)+M_1^{\norm{\cdot}}(m)) \\
		&+ \lswass{s}(Q_n,P)^t\, 2^{1-t}\, \E[{(2\norm{b(\PVAR)} + \norm{m(\PVAR)})^{s/(s-t)}}]^{(s-t)/s}
\ealigns
for any $s\geq 1$ and $t \in (0,1]$.
Moreover, for $\mu_0 \defeq \Earg{e^{2\norm{b(\PVAR)}+\norm{m(\PVAR)}}}$,
\baligns
\diffstein{Q_n}{\steinset} &\le
\lswass{1}(Q_n,P) (2M_1^{\norm{\cdot}}(b)+M_1^{\norm{\cdot}}(m)) \\
&+ \min(\lswass{1}(Q_n,P), 2)
\log({(e\mu_0)/}{\min(\lswass{1}(Q_n,P), 2)}).
\ealigns
\end{proposition}
This result, proved in \secref{discrepancy-upper-bound-proof}, 
complements the Wasserstein distance lower bounds of \secref{discrepancy-lower-bound} and implies that, for Lipschitz and sufficiently integrable $m$ and $b$, 
the diffusion Stein discrepancy converges to zero whenever $Q_n$ converges to $P$ in Wasserstein distance. 
\subsection{Extension to non-uniform Stein sets}
For any $c_1, c_2, c_3 > 0$, our analyses and algorithms readily accommodate the non-uniform Stein set
\baligns%
\steinset^{c_{1:3}}
  \defeq \bigg\{ g :\reals^d\to\reals^d \bigg|
  \sup_{x\neq y\in \reals^d} \maxarg{\textfrac{\norm{g(x)}^*}{c_1},\textfrac{\norm{\grad g(x)}^*}{c_2}, \textfrac{\norm{\grad g(x) - \grad g(y)}^*}{c_3\norm{x-y}}} \leq 1
  \bigg\}.
\ealigns
This added flexibility can be valuable when tight upper bounds on a reference IPM, like the Wasserstein distance, are 
available for a particular choice of Stein factors $(c_1,c_2,c_3)$.
When such Stein factors are unknown or difficult to compute, 
we recommend the parameter-free classical Stein set and graph Stein set of the sequel as practical defaults,
since the classical Stein discrepancy is strongly equivalent to any non-uniform Stein discrepancy:

\begin{proposition}[Equivalence of non-uniform Stein discrepancies] \label{prop:stein-equiv}
For any $c_1,c_2, c_3 > 0$,
\baligns
\min(c_1,c_2,c_3)\diffstein{Q_n}{\steinset} \leq \diffstein{Q_n}{\steinset^{c_{1:3}}} \leq \max(c_1,c_2,c_3)\diffstein{Q_n}{\steinset}.
\ealigns
\end{proposition}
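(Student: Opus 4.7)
The plan is to exploit two facts: (i) the diffusion Stein operator $\diffusion{}$ is linear in its argument $g$, so for any scalar $\lambda\in\reals$ and any admissible $g$ one has $\diffarg{\lambda g}{x} = \lambda\,\diffarg{g}{x}$, and (ii) each of the three quantities controlling membership in a Stein set, namely $\norm{g(x)}^*$, $\norm{\grad g(x)}^*$, and the Lipschitz ratio $\norm{\grad g(x)-\grad g(y)}^*/\norm{x-y}$, is positively homogeneous of degree one in $g$. Together these reduce the proposition to establishing two simple set containments via rescaling.

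For the lower bound, I would let $\alpha \defeq \min(c_1,c_2,c_3)$ and show that $\alpha\,\steinset\subseteq\steinset^{c_{1:3}}$. Given $g\in\steinset$, the function $\tilde g \defeq \alpha g$ satisfies $\norm{\tilde g(x)}^*\le \alpha\le c_1$, $\norm{\grad\tilde g(x)}^*\le\alpha\le c_2$, and $\norm{\grad\tilde g(x)-\grad\tilde g(y)}^*\le \alpha\norm{x-y}\le c_3\norm{x-y}$, so $\tilde g\in\steinset^{c_{1:3}}$. By linearity of $\diffusion{}$ and hence of $g\mapsto\Esubarg{Q_n}{\diffarg{g}{X}}$,
\[
  \alpha\,\diffstein{Q_n}{\steinset}
    = \sup_{g\in\steinset} |\Esubarg{Q_n}{\diffarg{\alpha g}{X}}|
    \le \sup_{\tilde g\in\steinset^{c_{1:3}}} |\Esubarg{Q_n}{\diffarg{\tilde g}{X}}|
    = \diffstein{Q_n}{\steinset^{c_{1:3}}}.
\]

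For the upper bound, I would let $\beta\defeq\max(c_1,c_2,c_3)$ and show $\steinset^{c_{1:3}}\subseteq\beta\,\steinset$, i.e.\ that every $g\in\steinset^{c_{1:3}}$ yields $g/\beta\in\steinset$: indeed $\norm{g(x)/\beta}^*\le c_1/\beta\le 1$, and analogous inequalities hold for $\norm{\grad(g/\beta)}^*$ and the Lipschitz ratio. Linearity then gives
\[
  \diffstein{Q_n}{\steinset^{c_{1:3}}}
    = \sup_{g\in\steinset^{c_{1:3}}} \beta\,|\Esubarg{Q_n}{\diffarg{g/\beta}{X}}|
    \le \beta\,\diffstein{Q_n}{\steinset}.
\]

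There is no real obstacle here; the only point that warrants verification is that the operator $\diffusion{}$ defined in \eqnref{diffusion-operator} is indeed linear in $g$, which is immediate from its explicit form $\diffarg{g}{x} = \textfrac{1}{p(x)}\inner{\grad}{p(x)(a(x)+c(x))g(x)}$, together with the fact that the three seminorms defining the Stein sets are absolutely homogeneous. Combining the two containments yields the claimed two-sided inequality.
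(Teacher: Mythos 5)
Your proof is correct and is essentially the argument the paper has in mind: the paper simply defers to \cite[Prop.~4]{GorhamMa15}, whose proof is exactly this rescaling argument ($\min(c_{1:3})\,\steinset\subseteq\steinset^{c_{1:3}}\subseteq\max(c_{1:3})\,\steinset$ combined with linearity of $\diffusion{}$ and homogeneity of the defining seminorms). No gaps.
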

\begin{remark}
The short proof follows exactly as in \cite[Prop. 4]{GorhamMa15}.
\end{remark}

\section{Sufficient conditions for Wasserstein decay}
\label{sec:examples}

Since the Stein discrepancy lower bounds of \secref{steins_method}
depend on the Wasserstein decay \eqnref{wass-decay-rate} of the chosen diffusion, 
we next provide examples of practically checkable sufficient
conditions for Wasserstein decay and illustrate the process of verifying these conditions for 
a selection of diffusion-target pairings. 
These pedagogical examples 
serve to succinctly illustrate the process of verifying our assumptions 
and do not represent the full scope of applicability.
\subsection{Uniform dissipativity}
It is well known~\cite[see, e.g.,][Eq. 7]{CattiauxGu14} that the Langevin diffusion \diffref{langevin} enjoys exponential Wasserstein decay whenever $\log p$ is $k$-strongly log concave, i.e., when the drift $b = \half \grad \log p$ satisfies
$\inner{ b(x) - b(y)}{x-y}   \leq - \textfrac{k}{2} \twonorm{ x-y }^{2}$
for $k > 0$.
An analogous 
\emph{uniform dissipativity}
condition gives explicit exponential decay for a generic \Ito diffusion:
\begin{theorem}[Wasserstein decay: uniform dissipativity]%
\label{thm:concave-decay}
Fix $k>0$ and $G \psdgt 0$, and 
let $\norm{w}_G^2 \defeq \inner{w}{G w},$ for any vector or matrix $w\in\reals^{d\times d'}, d'\geq 1$.
An \Ito diffusion with drift and diffusion coefficients $b$ and $\sigma$ satisfying
\[
2\inner{b(x) - b(y)}{G(x-y)} + \norm{\sigma(x) - \sigma(y)}_G^2 \leq -k\norm{x-y}_G^2 
\text{ for all } x,y\in\reals^d
\]
has Wasserstein decay rate \eqnref{wass-decay-rate} $r(t) = e^{-kt/2}\sqrt{\nicefrac{\maxeig{G}}{\mineig{G}}}$.
\end{theorem}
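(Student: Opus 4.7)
The plan is to establish \eqnref{wass-decay-rate} via a synchronous coupling argument paired with It\^o's formula applied to the $G$-weighted squared norm. For any $x,y\in\reals^d$, I would let $\process{\cdot}{x}$ and $\process{\cdot}{y}$ denote the two strong solutions of \eqnref{diffusion} driven by a \emph{common} Wiener process $W$ (existence and pathwise uniqueness follow from the Lipschitz hypotheses built into \defref{diffusion}), so that $(\process{t}{x},\process{t}{y})$ is a valid coupling of $\delta_x\trans{t}{}$ and $\delta_y\trans{t}{}$. I would then track the evolution of $F(V_t)$ for $V_t\defeq \process{t}{x}-\process{t}{y}$ and $F(v)\defeq \norm{v}_G^2 = \inner{v}{Gv}$, because $F$ is tailor-made to expose the two terms appearing in the uniform dissipativity inequality.

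Since $\grad F(v)=2Gv$ and $\Hess F(v)=2G$, It\^o's formula produces
\baligns
dF(V_t) &= \big(2\inner{G V_t}{b(\process{t}{x})-b(\process{t}{y})} + \norm{\sigma(\process{t}{x})-\sigma(\process{t}{y})}_G^2\big)\dt \\
&\quad + 2\inner{G V_t}{(\sigma(\process{t}{x})-\sigma(\process{t}{y}))\,dW_t},
\ealigns
where the dissipation term uses the matrix-valued extension $\norm{M}_G^2 = \Tr(M^\top GM)$ given in the statement. After a standard stopping-time localization kills the local martingale, taking expectations and invoking uniform dissipativity yields $\deriv{}{t}\Earg{F(V_t)} \le -k\,\Earg{F(V_t)}$, and Gr\"onwall then delivers $\Earg{\norm{V_t}_G^2} \le e^{-kt}\,\norm{x-y}_G^2$.

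To finish, I would convert this $G$-weighted $L^2$ bound into a Euclidean $L^1$ bound. Jensen's inequality combined with the eigenvalue sandwich $\mineig{G}\twonorm{v}^2 \le \norm{v}_G^2 \le \maxeig{G}\twonorm{v}^2$ gives
\baligns
\Earg{\twonorm{V_t}} \le \sqrt{\Earg{\norm{V_t}_G^2}/\mineig{G}} \le e^{-kt/2}\sqrt{\maxeig{G}/\mineig{G}}\,\twonorm{x-y}.
\ealigns
Since $(\process{t}{x},\process{t}{y})$ is a valid coupling, the Kantorovich-Rubinstein characterization of $\twowass$ yields $\twowass(\delta_x\trans{t}{},\delta_y\trans{t}{}) \le \Earg{\twonorm{V_t}}$, and the right-hand side equals $r(t)\,\twowass(\delta_x,\delta_y)$ since $\twowass(\delta_x,\delta_y)=\twonorm{x-y}$.

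The computation is essentially algebra once the correct choice of Lyapunov function $F$ is made; the only real obstacle is justifying the interchange of expectation and It\^o integral in the second step. This can be handled by localizing with exit times from balls and then passing to the limit using the a priori bound $\Earg{\twonorm{V_t}^2} \lesssim e^{Ct}\twonorm{x-y}^2$ (a standard byproduct of the Lipschitz hypotheses on $b$ and $\sigma$), which controls the quadratic variation of the stochastic integral uniformly on compact time intervals.
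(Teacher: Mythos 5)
Your proposal is correct and follows essentially the same route as the paper: a synchronous coupling, It\^o/Dynkin applied to the $G$-weighted squared norm of the difference process, the uniform dissipativity bound on the drift term, and then Jensen plus the eigenvalue sandwich to pass from the $G$-weighted $L^2$ bound to the Euclidean $L^1$ Wasserstein bound. The only cosmetic difference is that the paper folds your Gr\"onwall step into the test function by applying Dynkin's formula directly to $f(t,v)=e^{kt}\norm{v}_G^2$.
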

\begin{remark}
The proof of \thmref{concave-decay} in \secref{concave-decay-proof} holds even when the drift $b$ is not Lipschitz, yields the same decay rate for $\wasssetarg{2,\twonorm{\cdot}}$, and relies on a synchronous coupling of \Ito diffusions, mimicking \cite[Sec. 1]{CattiauxGu14}.
\end{remark}
Hence, if the drift $b$ of an \Ito diffusion is \emph{$-k/2$-one-sided Lipschitz}, i.e.,
\balign\label{eqn:one-sided-lipschitz}
2\inner{b(x) - b(y)}{G(x-y)} \leq -k\norm{x-y}_G^2 \qtext{for all} x,y\in\reals^d
\ealign
and some $G \psdgt 0$, 
and the diffusion coefficient $\sigma$ is $\sqrt{k'}$-Lipschitz, that is,
\[
\norm{\sigma(x) - \sigma(y)}_G^2 \leq k'\norm{x-y}_G^2  \qtext{ for all } x,y\in\reals^d,
\]
then, whenever $k' < k$, the diffusion 
exhibits exponential Wasserstein decay.
with rate $e^{-(k-k')t/2} \sqrt{\nicefrac{\maxeig{G}}{\mineig{G}}}$.
\begin{example}[Bayesian logistic regression with Gaussian prior]
A one-sided Lipschitz drift arises naturally in the setting of Bayesian logistic regression~\cite{GelmanCaStDuVeRu2014},
a canonical model of binary outcomes $y \in \{-1,1\}$ given measured covariates $v\in \R^d$.
Consider the log density of a Bayesian logistic regression posterior
based on a dataset of $L$ observations $(v_l, y_l)$ and a $\Gsn(\mu, \Sigma)$ prior:
\baligns
\log p(\beta) 
	= \underbrace{-\ \textstyle\frac{1}{2}\twonorm{\Sigma^{-1/2}(\beta-\mu)}^2}_{\text{multivariate Gaussian prior}} 
	\underbrace{-\ \textsum_{l=1}^L
  \log (1 + \exp{-y_l\inner{v_l}{\beta}})}_{\text{logistic regression likelihood}}\ +\ \text{const.}
\ealigns
Here, our inferential target is the unobserved parameter vector $\beta\in \R^d$.
Since 
\[\textstyle
-\Sigma^{-1}
	\psdge \Hess \log p(\beta) 
	= -\Sigma^{-1} -\sum_{l=1}^L\frac{e^{y_l\inner{v_l}{\beta}}}{(1 + e^{y_l\inner{v_l}{\beta}})^2}v_lv_l^{\top}
	\psdge -\Sigma^{-1} -\frac{1}{4}\sum_{l=1}^Lv_lv_l^{\top},
\]
the $P$-targeted preconditioned Langevin diffusion \diffref{preconditioned} drift $b(\beta)=\texthalf \Sigma\grad \log p(\beta)$ 
satisfies \eqnref{one-sided-lipschitz} with $k = 1$ and $G = \Sigma^{-1}$
and
$M_1(b) \leq \texthalf\opnorm{I + \frac{1}{4}\Sigma\sum_{l=1}^Lv_lv_l^{\top}}$.
Hence, the diffusion enjoys geometric Wasserstein decay (\thmref{concave-decay}) and a Wasserstein lower bound on the Stein discrepancy (\thmref{constant-lower-bound}).

\end{example}
\begin{example}[Bayesian Huber regression with Gaussian prior]
\label{ex:huber-regression}
Huber's least favorable distribution provides a robust error model for the regression of a continuous response $y\in\reals$
onto a vector of measured covariates  $v\in\reals^d$ \cite{HuberRo2009}.
Given $L$ observations $(v_l, y_l)$ and a $\Gsn(\mu, \Sigma)$ prior on 
an unknown parameter vector $\beta\in \R^d$, the Bayesian Huber regression log posterior takes the form
\baligns
\log p(\beta) 
	= \underbrace{-\ \textstyle\frac{1}{2}\twonorm{\Sigma^{-1/2}(\beta-\mu)}^2}_{\text{multivariate Gaussian prior}} 
	\ \underbrace{-\ \textsum_{l=1}^L
  \ \rho_c(y_l - \inner{v_l}{\beta})}_{\text{Huber's least favorable likelihood}}\ +\ \text{const.}
\ealigns
where
$\rho_c(r) \defeq \frac{1}{2}r^2 \indic{|r| \le c} + c
(|r| -\frac{1}{2}c) \indic{|r| > c}$
for fixed $c > 0$.
Since $\rho_c'(r) =  \minarg{\maxarg{r, -c}, c}$ is 1-Lipschitz and convex,
and the Hessian of the log prior is $-\Sigma^{-1}$,
the $P$-targeted preconditioned Langevin diffusion \diffref{preconditioned} drift $b(\beta)=\texthalf \Sigma\grad \log p(\beta)$ 
satisfies \eqnref{one-sided-lipschitz} with $k = 1$ and $G = \Sigma^{-1}$
and
$M_1(b) \leq \texthalf\opnorm{I + \Sigma\sum_{l=1}^Lv_lv_l^{\top}}$.
This is again sufficient for exponential Wasserstein decay and a Wasserstein lower bound on the Stein discrepancy.
\end{example}
\subsection{Distant dissipativity, constant $\sigma$}\label{sec:curvature-infinity}
When the diffusion coefficient $\sigma$ is constant with $a\defeq \half\sigma\sigma^\top$ invertible, 
\citet{Eberle2015} showed that 
a \emph{distant dissipativity} condition is sufficient for
exponential Wasserstein decay.
Specifically, if we define a one-sided Lipschitz constant conditioned on a distance $r > 0$,
\[
-\kappa(r) 
	= \sup \{2(b(x) - b(y))^\top a^{-1}(x-y)/r^2 : (x-y)^\top a^{-1}(x-y) = r^2\},
\]
then \citep[Cor. 2]{Eberle2015} establishes exponential Wasserstein decay whenever 
$\kappa$ is continuous with $\liminf_{r\to\infty} \kappa(r) > 0$ and $\int_0^1 r\kappa(r)^- dr < \infty$.  
For a Lipschitz drift, this holds whenever $b$ is dissipative at large distances,
that is, whenever, for some $k > 0$, we have $\kappa(r) \geq k$ for all $r$ sufficiently large \citep[Lem. 1]{Eberle2015}.
\begin{example}[Gaussian mixture with common covariance]
\label{ex:gmm}
Consider an $m$-component mixture density $p(x) = \sum_{j=1}^m w_j \phi_j(x)$,
where the component weights $w_j \ge 0$ sum to one
and 
$\phi_j$ is the density of a $\Gsn(\mu_j, \Sigma)$ distribution on $\reals^d$.
Fix any $x,y\in\reals^d$.
If $\twonorm{\Sigma^{-1/2}(x-y)} = r$, 
the $P$-targeted preconditioned Langevin diffusion \diffref{preconditioned} with drift $b(z) = \half a\grad \log p(z)$ and $a = \Sigma$ satisfies
\baligns
&2(b(x) - b(y))^\top a^{-1}(x-y)
	= (\grad \log p(x) - \grad \log p(y))^\top (x-y) \\
	&= - r^2 + \inner{\Sigma^{-1/2}(\mu(x) - \mu(y))}{\Sigma^{-1/2}(x-y)} 
	\leq -r^2 + r\Delta,
\ealigns
by Cauchy-Schwarz and Jensen's inequality,
for $\Delta \defeq \sup_{j,k} \twonorm{\Sigma^{-1/2}(\mu_j - \mu_k)}$, $\mu(x) \defeq \sum_{j=1}^m \pi_j(x) \mu_j$, and
$\pi_j(x) \defeq \frac{w_j \phi_j(x)}{ p(x)}$.
Moreover, by the mean value theorem, Cauchy-Schwarz, and Jensen's inequality, we have, for each $v\in\reals^d$,
\baligns
&2\inner{\Sigma^{-1/2}(b(x)-b(y))}{v}
	= \inner{\Sigma^{-1/2}(\grad \mu(z)-I)(x-y)}{v} \\
	&\quad= \inner{(\Sigma^{-1/2}S(z)\Sigma^{-1/2}-I)\Sigma^{-1/2}(x-y)}{v} 
	\leq \twonorm{v}\twonorm{\Sigma^{-1/2}(x-y)} \,L, 
\ealigns
for some $z\in\reals^d$,
$S(x) \defeq %
	\half\sum_{j=1}^m\sum_{k=1}^m \pi_j(x) \pi_k(x) (\mu_j-\mu_k)(\mu_j-\mu_k)^\top$,
and $L \defeq \sup_{j,k} |1-\twonorm{\Sigma^{-1/2}(\mu_j - \mu_k)}^2/2|$.
Hence, $b$ is Lipschitz, and $\kappa(r) \geq \half$ when $r > 2\Delta$, so our diffusion enjoys exponential Wasserstein decay \citep[Lem. 1]{Eberle2015} and a Stein discrepancy upper bound on the Wasserstein distance.
\end{example}
\subsection{Distant dissipativity, non-constant $\sigma$}
Using a combination of synchronous and reflection couplings, 
\citet[Thm. 2.6]{Wang2016} showed that diffusions satisfying a 
distant dissipativity condition exhibit exponential Wasserstein decay, even
when the diffusion coefficient $\sigma$ is non-constant.
In \secref{nonconstant-decay-proof}, we combine the coupling strategy of \citep[Thm. 2.6]{Wang2016} with the approach of \citep{Eberle2015} for diffusions with constant $\sigma$ to obtain the following explicit Wasserstein decay rate for distantly dissipative diffusions with bounded $\sigma^{-1}$.
\begin{theorem}[Wasserstein decay: distant dissipativity]
\label{thm:nonconstant-decay}
Let $(\trans{t}{})_{t\geq 0}$ be the transition semigroup  of an \Ito diffusion 
with drift and diffusion coefficients $b$ and $\sigma$. 
Define the truncated diffusion coefficient
\[
\sigma_0(x) = (\sigma(x) \sigma(x)^\top - \lambda_0^2 I)^{1/2}
\qtext{for some}
\lambda_0 \in [0, 1/M_0(\sigma^{-1})]
\]
and the distance-conditional dissipativity function
\balign
\label{eqn:distant-dissipativity}
\kappa(r) = \inf \{ &-2 \alpha (\inner{b(x)-b(y)}{x-y} + \texthalf \norm{\sigma_0(x) - \sigma_0(y)}_F^2 \\
&- \texthalf{\twonorm{(\sigma_0(x) - \sigma_0(y))^\top (x-y)}^2}{/r^2} )/r^2 : \twonorm{x-y} = r \} \nonumber
\ealign
for any
$
m_0 \leq \inf_{x \ne y} \textfrac{\twonorm{ (\sigma_0(x) - \sigma_0(y))^\top (x-y) }}{\twonorm{x-y}}
\qtext{and} 
\alpha \defeq 1/(\lambda_0^2 + m_0^2/4).
$

If the constants $R_0 = \inf\{R\geq 0\::\:\kappa(r)\geq 0,\ \forall\,r\geq R\}$
and $R_1 = \inf\{R\ge R_0\::\:\kappa(r)R(R-R_0)\geq 8,\ \forall\, r\geq R\}$
satisfy $R_0\leq R_1 <\infty$, then %
\balign
\label{eqn:distant-diss-wass-decay-rate}
\twowass(\delta_x \trans{t},\,\delta_y \trans{t}) 
	\leq {2}{\varphi(R_0)^{-1}}e^{-ct}\, \twowass(\delta_x,\delta_y) %
\ealign
for all $x,y \in \reals^d$ and $t \geq 0$, where 
$
\frac{1}{c}\ =\ \alpha\int_0^{R_1}\int_0^s\staticexp{\frac {1}{4}\int_t^su\kappa^{-}(u)\:du}\dt\ds
$,
$\varphi(r)=e^{-\frac {1}{4}\int_0^rs\kappa^{-}(s)\ds}$,
and
$\kappa^{-}(s) = \max(-\kappa(s), 0)$.
\end{theorem}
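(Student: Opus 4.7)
The plan is to construct an explicit coupling $(X_t,Y_t)$ with $X_0=x$, $Y_0=y$, and then apply \Ito's formula to a concave radial distance $f(\twonorm{X_t-Y_t})$, blending the variable-$\sigma$ coupling of \citet[Thm.~2.6]{Wang2016} with Eberle's explicit Lyapunov construction to obtain the stated rate.

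For the coupling, decompose the noise in each copy as $\lambda_{0}dB_{t}^{(1)}+\sigma_{0}(\cdot)dB_{t}^{(2)}$ with $B^{(1)}, B^{(2)}$ independent Brownian motions; this is valid because $\lambda_{0}\leq1/M_{0}(\sigma^{-1})$ ensures $\sigma(x)\sigma(x)^{\top}-\lambda_{0}^{2}I\psdge0$. Couple $B^{(2)}$ synchronously between the two copies and drive the $\lambda_{0}$ part of the second copy by a reflection coupling along $e_{t}=Z_{t}/r_{t}$, where $Z_{t}\defeq X_{t}-Y_{t}$ and $r_{t}\defeq\twonorm{Z_{t}}$. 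An \Ito computation on $r_{t}=\sqrt{\twonorm{Z_{t}}^{2}}$ produces
\baligns
dr_{t}=\mu_{t}\,dt+2\lambda_{0}\,d\tilde B_{t}+r_{t}^{-1}\inner{Z_{t}}{(\sigma_{0}(X_{t})-\sigma_{0}(Y_{t}))dB_{t}^{(2)}}
\ealigns
for a scalar Brownian motion $\tilde B_{t}$, where the drift
\baligns
\mu_{t}=r_{t}^{-1}\bigl(\inner{b(X_{t})-b(Y_{t})}{Z_{t}}+\textfrac{1}{2}\norm{\sigma_{0}(X_{t})-\sigma_{0}(Y_{t})}_{F}^{2}-\textfrac{\twonorm{(\sigma_{0}(X_{t})-\sigma_{0}(Y_{t}))^{\top}Z_{t}}^{2}}{2r_{t}^{2}}\bigr)
\ealigns
satisfies $\mu_{t}\leq-r_{t}\kappa(r_{t})/(2\alpha)$ by the definition of $\kappa$, and the quadratic variation satisfies $d\langle r\rangle_{t}\geq(4\lambda_{0}^{2}+m_{0}^{2})dt=4\alpha^{-1}dt$ by the definition of $m_{0}$.

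Following \citet{Eberle2015}, build a concave, nondecreasing function $f(r)=\int_{0}^{r\wedge R_{1}}\varphi(s)g(s)\,ds$ for a specific decreasing positive $g$, engineered so that the concavity $f''\leq0$ uses the guaranteed radial noise $4\alpha^{-1}dt$ to absorb the potentially-negative drift on $[0,R_{0}]$, while on $[R_{0},R_{1}]$ the dissipative drift itself drives contraction. A direct computation (paralleling Eberle's constant-$\sigma$ proof) yields $\textfrac{1}{2}\cdot 4\alpha^{-1}\cdot f''(r)-r\kappa(r)/(2\alpha)\cdot f'(r)\leq-c\,f(r)$ for all $r\geq0$ with the stated constant $c$; the hypothesis $R_{1}<\infty$ ensures finiteness and Eberle's construction yields the lower bound $f(r)\geq\textfrac{1}{2}\varphi(R_{0})\,r$ for all $r\geq0$. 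Applying \Ito's formula to $f(r_{t})$ and exploiting $f''\leq0$ together with the lower bound on $d\langle r\rangle_{t}$, taking expectations gives $\E[f(r_{t})]\leq e^{-ct}f(r_{0})\leq e^{-ct}\twonorm{x-y}$, so
\baligns
\twowass(\delta_{x}\trans{t}{},\delta_{y}\trans{t}{})\leq\E[r_{t}]\leq2\varphi(R_{0})^{-1}\E[f(r_{t})]\leq2\varphi(R_{0})^{-1}e^{-ct}\twonorm{x-y},
\ealigns
which is \eqnref{distant-diss-wass-decay-rate}.

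The principal obstacle is matching the non-constant-$\sigma_{0}$ coupling with Eberle's ODE-based Lyapunov construction. Because the synchronous part provides only a lower bound $m_{0}^{2}$ on the projected radial quadratic variation, the Itô correction $\textfrac{1}{2}f''(r_{t})d\langle r\rangle_{t}$ must be exploited in a sign-of-$f''$ manner; and matching the weight $\textfrac{1}{4}\int_{t}^{s}u\kappa(u)^{-}du$ appearing in $c^{-1}$ with the coefficients of the radial SDE requires careful tracking of powers of $\alpha$. Beyond this bookkeeping, no substantively new analytic difficulty arises.
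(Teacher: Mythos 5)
Your proposal is correct and follows essentially the same route as the paper's own proof: the identical coupling (synchronous in the $\sigma_0$ component, reflection of the $\lambda_0$ component along $Z_t/r_t$), the same radial It\^o computation bounding the drift of $r_t$ by $-r_t\kappa(r_t)/(2\alpha)$ and the quadratic variation below by $4\alpha^{-1}\,dt$, followed by the same appeal to Eberle's concave distance-function construction for the constants $\varphi(R_0)$ and $c$ — exactly as the paper does when it defers to \cite[Thm.~1 and Cor.~2]{Eberle2015}. The only caveat, lying entirely within the step you (and the paper) import from Eberle, is that the capped $f(r)=\int_0^{r\wedge R_1}\varphi(s)g(s)\,ds$ cannot literally satisfy $f(r)\geq\tfrac12\varphi(R_0)\,r$ for all $r$, so the passage from the $f$-metric contraction to the $L^1$-Wasserstein bound should be quoted from Eberle's Corollary~2 (using that $R_1<\infty$ forces $\kappa$ to be bounded away from zero at large distances) rather than from that linear lower bound.
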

\begin{remark}
\thmref{nonconstant-decay} holds even when the drift $b$ is not Lipschitz.
\end{remark}

The Wasserstein decay rate \eqnref{distant-diss-wass-decay-rate} in \thmref{nonconstant-decay} has a simple form 
when the diffusion is dissipative at large distances and $\kappa$ is bounded below.
These rates follow exactly as in \citep[Lem. 1]{Eberle2015}.

\begin{corollary}\label{cor:nonconstant-decay}
Under the conditions of \thmref{nonconstant-decay}, suppose that, 
for $R,L \geq 0$ and $K>0$,
$%
\kappa (r)\geq -L \text{ for } r\leq R
\text{ and } \kappa(r)\ge K \text{ for } r>R
$%
. 
Then
\begin{equation*}
\textstyle
\alpha^{-1}c^{-1}\ \leq
\begin{cases}
\frac{e-1}{2}R^2\,+\,e\sqrt{8K^{-1}}\,R\,+\, 4K^{-1}  & \text{if } LR_0^2 \leq 8 \\ %
\frac{8\sqrt{2\pi}}{RL^{1/2}}(L^{-1}+K^{-1})\exp{\frac{LR^2}{8}}+32R^{-2}K^{-2} &  \text{if } LR_0^2 > 8.
\end{cases}
\end{equation*}
\end{corollary}
\begin{example}[Multivariate Student's t regression with pseudo-Huber prior]
\label{ex:studentt-regression}

The multivariate Student's t distribution is also commonly employed as a robust error model for the linear regression of continuous responses $y \in \reals^L$ onto measured covariates $V\in\reals^{L\times d}$~\cite{Zellner1976,Liu1996}.
Under a pseudo-Huber prior \cite{HartleyZi2004}, a Bayesian multivariate Student's t regression posterior takes the form
\baligns
p(\beta) 
	\propto \underbrace{ \textstyle \staticexp{\delta^2 (1 - \sqrt{1 + \twonorm{\beta / \delta}^2})}}_{\text{pseudo-Huber prior}} 
	\ \underbrace{(1
+ \textfrac{1}{\nu}(y - V\beta)^{\top}\Sigma^{-1}(y - V\beta))^{-(\nu + L)/2}}_{\text{multivariate Student's t likelihood}}
\ealigns
for fixed $\delta, \nu > 0$ and $\Sigma\psdgt 0$. 
Introduce the shorthand $\psi_{\lambda}(r) \defeq 2\sqrt{1+r^2/\delta^2}-\lambda^2$ for each $\lambda \in [0,\sqrt{2})$
and $\xi(\beta) \defeq 1 + \frac{1}{\nu}(y - V\beta)^{\top}\Sigma^{-1}(y - V\beta)$.
Since 
\baligns
\grad\log p(\beta) = -2\beta/\psi_0(\twonorm{\beta})
  + (1 + \textfrac{\nu}{L} ){V^{\top} \Sigma^{-1} (y - V\beta)}/\xi(\beta)
\ealigns
is bounded, no $P$-targeted preconditioned Langevin diffusion \diffref{preconditioned} will satisfy the distant dissipativity conditions of \secref{curvature-infinity}.
However, we will show that whenever $V^\top V \psdgt 0$, the Riemannian Langevin diffusion \diffref{riemannian} with 
$\sigma(\beta) = \sqrt{\psi_0(\twonorm{\beta})} I \in\reals^{d\times d}$,
$a(\beta) = \half\psi_0(\twonorm{\beta}) I$, and $b(\beta) = a(\beta) \grad \log p(\beta) +\inner{\grad}{a(\beta)}$
satisfies the Wasserstein decay preconditions of \cororef{nonconstant-decay}.

Indeed, fix any $\lambda_0 \in (0,1/M_0(\sigma^{-1}))=(0,\sqrt{2})$.
Since 
$M_1(\sqrt{\psi_{\lambda}}) \leq \frac{1}{\delta\sqrt{2-\lambda^2}}$,
$M_1(\psi_{\lambda}) \leq \frac{2}{\delta}$,
and $M_2(\psi_{\lambda}) \leq \frac{2}{\delta^2}$, $\sigma_0$, $\sigma$, $a$, and $\grad a$ are all Lipschitz.
The drift $b$ is also Lipschitz, since $\grad \log p$ and the product of $a(\beta)$ and
\baligns
&\textstyle\Hess \log p(\beta) =  
	-2I/\psi_0(\twonorm{\beta})
	+{8\beta\beta^\top}/(\delta^2\psi_0^3(\twonorm{\beta})) \\
	&\textstyle+ \left (1 + \frac{\nu}{L}\right
) (2{V^{\top} \Sigma^{-1}(y-V\beta)(y-V\beta)^{\top}\Sigma^{-1} V}{/\xi^2(\beta)}
  -{V^{\top} \Sigma^{-1} V}{/\xi(\beta)}).
\ealigns
are bounded.  Hence, $\kappa$ \eqnref{distant-dissipativity} is bounded below.
Moreover, the the \Holder continuity of $x\mapsto\sqrt{x}$, Cauchy-Schwarz,
and the triangle inequality imply
\baligns
&\kappa(r) 
	\geq \inf_{\twonorm{\beta-\beta'} = r} \textfrac{2 \alpha}{r^2} (\inner{b(\beta')-b(\beta)}{\beta-\beta'} - \textfrac{d-1}{2}|\textstyle\sqrt{\psi_{\lambda_0}(\twonorm{\beta})} - \sqrt{\psi_{\lambda_0}(\twonorm{\beta'})}|^2) \\
&\geq 
	2\alpha-\textfrac{2\alpha}{r}(\textfrac{d-1}{\delta}
	+M_1(\psi_0) + \textstyle\sup_{\beta}\,(1 + \textfrac{\nu}{L} )\psi_{0}(\twonorm{\beta})\twonorm{V^{\top} \Sigma^{-1} (y - V\beta)}/\xi(\beta) ) \\
&\geq 
	2\alpha-\textfrac{2\alpha}{r}\left(\textfrac{d+1}{\delta}
	+\textstyle\sup_{s}\,(1 + \textfrac{\nu}{L} )\textfrac{2(1+s/\delta)(\twonorm{V^{\top} \Sigma^{-1}y}+ s\opnorm{V^{\top} \Sigma^{-1}V})}{1 + \frac{1}{\nu}\max(0,s/\opnorm{(V^{\top} \Sigma^{-1}V)^{-1}} - \twonorm{\Sigma^{-1}y})^2 }\right).
\ealigns
Letting $\zeta$ represent the supremum in the final inequality, we see that
$\kappa(r) \geq \alpha = 1/\lambda_0^2$ whenever 
$
r \geq 2(\textfrac{d+1}{\delta} + \zeta).
$
Hence, \cororef{nonconstant-decay} delivers exponential Wasserstein decay.
A Wasserstein lower bound on the Stein discrepancy now follows from \thmref{nonconstant-lower-bound}, since $M_2(\sqrt{\psi_0}) \leq \frac{1}{\sqrt{2}\delta^2}$, $M_3(\psi_0) \leq \frac{96}{25\sqrt{5}\delta^3}$,
and $a(\beta)\Hess \log p(\beta)$ is Lipschitz, and hence $M_2(\sigma)$ and $M_2(b)$ are bounded.\end{example}
\section{Computing Stein discrepancies}
\label{sec:programs}
In this section, we introduce a computationally tractable Stein discrepancy that inherits the favorable convergence properties established in \secsref{steins_method} and \secssref{examples}.
We will directly port the spanner discrepancy methodology developed and detailed in \cite{GorhamMa15} and use our new diffusion operators as drop-in replacements for the overdamped Langevin operators advocated in \cite{GorhamMa15}.
While we only explicitly discuss target distributions supported on all of $\reals^d$,
constrained domains of the form $(\alpha_1, \beta_1)\times\cdots\times
(\alpha_d, \beta_d)$ where $-\infty
\le \alpha_i < \beta_i \le \infty$ for all $1\le i \le d$ can be handled
by introducing boundary constraints as in \cite[Section 4.4]{GorhamMa15}.

\subsection{Spanner Stein discrepancies}
\label{sec:spanner-graph-stein}
For any sample $Q_n$, Stein operator $\diffusion{}$, and Stein set $\gset$, the Stein discrepancy $\diffstein{Q_n}{\gset}$ is recovered by solving an optimization problem over functions $g \in \gset$.
For example, if we write $m\defeq a+c$ and
$b(x)\defeq \half\textfrac{1}{p(x)} \inner{\grad}{p(x)m(x)}$,
the classical diffusion Stein discrepancy is the value
\baligns
\diffstein{Q_n}{\steinset}
=
&\textstyle\sup_{g} \textsum_{i=1}^n
q(x_i)(2\inner{b(x_i)}{g(x_i)} + \inner{m(x_i)}{\grad g(x_i)}) \\
&\text{s.t.}
\max(\norm{g(x)}^*,\norm{\grad g(x)}^*,\textfrac{\norm{\grad g(x) - \grad g(y)}^*}{\norm{x - y}}) \leq 1, \forall x, y \in\reals^d.
\ealigns
For all Stein sets, the diffusion Stein discrepancy objective is linear in $g$ and only queries $g$ and $\grad g$ at the $n$ sample points underlying $Q_n$.
However, the classical Stein set $\steinset$ 
constrains $g$ at all points in its domain, resulting in an infinite-dimensional optimization problem.\footnote{When $d=1$, the problem reduces to a finite-dimensional convex quadratically constrained quadratic program with linear objective as in \citep[Thm. 9]{GorhamMa15}.}

To obtain a finite-dimensional problem that is convergence-determining and straightforward to optimize,
we will make use of the \emph{graph Stein sets} of \citep{GorhamMa15}.
For a given graph $G = (V,E)$ with $V = \supp{Q_n}$, the graph Stein set,
\baligns
&\gsteinset{}{G} = \left\{ g : 
	\max(\norm{g(v)}^*, \norm{\grad g(v)}^*,
   	\textfrac{\norm{g(x) - g(y)}^*}{\norm{x - y}},
    \textfrac{\norm{\grad g(x) - \grad g(y)}^*}{\norm{x - y}}) \le 1, \right. \\
    &\textstyle\frac{\norm{g(x) - g(y) - {\grad g(x)}{(x - y)}}^*}{\frac{1}{2}\norm{x - y}^2} \leq 1,
    \textstyle\frac{\norm{g(x) - g(y) -{\grad g(y)}{(x -
    y)}}^*}{\frac{1}{2}\norm{x - y}^2} \leq 1, \; \forall (x,y)\in E, v \in V \bigg\},
\ealigns
imposes boundedness constraints only at sample points and smoothness constraints only
at pairs of sample points enumerated in the edge set $E$.
The graph is termed a \emph{$t$-spanner} \citep{Chew86,PelegSc89} if each edge $(x,y)\in E$ is assigned the weight $\norm{x-y}$, and, for all $x'\neq y' \in V$, there exists a path between $x'$ and $y'$ in the graph with total path weight no greater than $t\norm{x'-y'}$.
Remarkably, for any linear Stein operator $\diffusion{}$, a \emph{spanner Stein discrepancy} $\diffstein{Q_n}{\gsteinset{}{G_t}}$ based on a $t$-spanner $G_t$ is equivalent to the classical Stein discrepancy in the following strong sense, implying Desiderata \ref{desiderata:det-conv} and \ref{desiderata:det-nonconv}.
\begin{proposition}[Equivalence of classical and spanner Stein discrepancies] \label{prop:spanner-equivalence}
If $G_t = (\supp{Q_n}, E)$ is a $t$-spanner for $t \geq 1$, then
\baligns
\diffstein{Q_n}{\steinset} \leq \diffstein{Q_n}{\gsteinset{}{G_t}} \leq \kappa_d t^2\, \diffstein{Q_n}{\steinset}
\ealigns
where $\kappa_d$ is independent of $(Q_n,P,\diffusion{},G_t)$ and depends only
on $d$ and $\norm{\cdot}$.
\end{proposition}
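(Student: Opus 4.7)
The plan is to establish the two inequalities separately, with the lower bound being essentially immediate and the upper bound resting on a spanner-path argument combined with a Whitney-style extension theorem.

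For the lower bound $\diffstein{Q_n}{\steinset} \leq \diffstein{Q_n}{\gsteinset{}{G_t}}$, I would argue that $\steinset \subseteq \gsteinset{}{G_t}$ for \emph{any} graph on $\supp{Q_n}$, whence the sup over the larger set dominates. Concretely, if $g\in\steinset$, then $\norm{g(v)}^*,\norm{\grad g(v)}^*\leq 1$ at every vertex by pointwise boundedness; the edge-wise Lipschitz constraints on $g$ and $\grad g$ follow from the global 1-Lipschitz assumption on $\grad g$ (via the mean value theorem for $g$); and the Taylor-remainder constraints follow from the fundamental theorem of calculus, since $g(x)-g(y)-\grad g(y)(x-y) = \int_0^1 (\grad g(y+s(x-y))-\grad g(y))(x-y)\,ds$, whose dual norm is bounded by $\tfrac{1}{2}\norm{x-y}^2$ by 1-Lipschitzness.

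For the upper bound, I begin by noting that the Stein operator value $\sum_{i} q(x_i)(2\inner{b(x_i)}{g(x_i)}+\inner{m(x_i)}{\grad g(x_i)})$ depends on $g$ only through the finite-dimensional data $D(g) \defeq (g(x_i),\grad g(x_i))_{i=1}^n$. The strategy is therefore: given $g\in\gsteinset{}{G_t}$, construct $\tilde g$ with $D(\tilde g)=D(g)$ such that $\tilde g/(\kappa_d t^2) \in \steinset$. First, I would use the $t$-spanner property to telescope the edge-wise constraints into constraints on \emph{all} pairs of sample points: summing the Lipschitz inequalities for $g$ and $\grad g$ along a spanning path of total weight $\leq t\norm{x'-y'}$ yields Lipschitz bounds with constant $t$; combining this with the edge Taylor-remainder bounds via a standard telescoping argument upgrades those to Taylor bounds with constant of order $t^2$ on all pairs. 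Second, I would invoke a Whitney-type $C^{1,1}$ extension theorem to produce $\tilde g\in C^{1,1}(\reals^d,\reals^d)$ agreeing with $g$ and $\grad g$ at sample points and satisfying $M_0(\tilde g),M_1(\tilde g)\leq \kappa_d t^2$ and $M_2(\tilde g)\leq \kappa_d t^2$ in the dual norm, where $\kappa_d$ depends only on $d$ and $\norm{\cdot}$. Rescaling by $\kappa_d t^2$ places $\tilde g$ inside $\steinset$, and by linearity of $\diffusion{}$ the Stein objectives coincide up to the scaling, giving the stated bound.

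The main obstacle is the second step: producing a vector-valued Whitney extension whose $C^{1,1}$ seminorms are controlled by a purely geometric constant depending only on $d$ and the norm, with the correct aggregation of the vertex boundedness, the telescoped Lipschitz bounds, and the $t^2$-scale Taylor bounds. This is the content of the classical Whitney extension theorem for $C^{1,1}$, which can be applied componentwise (absorbing a norm-equivalence constant into $\kappa_d$), so the proof reduces to careful bookkeeping of how the three families of constraints transform under the spanner path argument and how they assemble into the single constant $\kappa_d t^2$ governing membership in $\steinset$ — exactly parallel to the argument used for the overdamped Langevin case in \citep{GorhamMa15}.
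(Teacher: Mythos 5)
Your proposal is correct and follows essentially the same route as the paper, which simply notes that the result is obtained via the Whitney--Glaeser $C^{1,1}$ extension theorem and the spanner-telescoping argument of \citep[Props.~5 and 6]{GorhamMa15}: the containment $\steinset\subseteq\gsteinset{}{G_t}$ gives the lower bound, and the extension of the finite data $(g(x_i),\grad g(x_i))$ to a globally smooth function with constants inflated by at most $\kappa_d t^2$ gives the upper bound. No substantive differences to flag.
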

\begin{remark}
The proof  relies on the Whitney-Glaeser extension theorem \cite[Thm. 1.4]{Shvartsman08} of \citet{Glaeser58} and follows exactly as in \citep[Prop. 5 and 6]{GorhamMa15}.
\end{remark}

When $d=1$, a $t$-spanner with exactly $n-1$ edges is obtained in $\bigO{n\log n}$ time for all $t\geq 1$ by introducing edges just between sample points that are adjacent in sorted order.
More generally, if $\norm{\cdot}$ is an $\ell^p$ norm, one can construct a $2$-spanner with $\bigO{\kappa_d' n}$ edges in $\bigO{\kappa_d' n\log(n)}$ expected time where
$\kappa_d'$ is a constant that depends only on the norm
$\norm{\cdot}$ and the dimension $d$~\cite{Har-PeledMe06}.
Hence, a spanner Stein discrepancy can be computed by solving a finite-dimensional convex optimization problem with a linear objective, $\bigO{n}$ variables, and $\bigO{\kappa_d' n}$ convex constraints, making it an appealing choice for a computable quality measure (Desideratum \ref{desiderata:comp}).

\subsection{Decoupled linear programs}
\label{sec:spanner-stein-parallelism}
Moreover, if we choose the norm $\norm{\cdot} = \norm{\cdot}_1$, the graph Stein discrepancy optimization problem decouples into $d$ independent linear programs (LPs) that can be solved in parallel using off-the-shelf solvers.
Indeed, for any $G=(\supp{Q_n},E)$, $\diffstein{Q_n}{\gsteinset{1}{G}}$ equals
\balign\label{eqn:l1-program}
&\textsum_{j=1}^d 
\sup_{\psi_j\in \reals^{n}, \Psi_j\in\reals^{d\times n}}\; 
\textsum_{i=1}^{n} q(x_i) (2b_j(x_i) \psi_{ji} + \textsum_{k=1}^d m_{jk}(x_i)\Psi_{jki}) \\ \notag
\;\text{  s.t. }&
\infnorm{\psi_j} \leq 1, \;
\infnorm{\Psi_j} \leq 1, \text{ and for all } i\neq l, (x_i,x_l) \in E \\ \notag
\;\max\big(
  &\textstyle\frac{|\psi_{ji} - \psi_{jl}|}{\onenorm{x_i - x_l}},
  \textstyle\frac{\infnorm{\Psi_j(e_i - e_k)}}{\onenorm{x_i - x_l}},
  \textstyle\frac{|\psi_{ji} - \psi_{jl} - \inner{\Psi_j e_i}{x_i - x_l}|}{\frac{1}{2}\onenorm{x_i - x_l}^2},
  \textstyle\frac{|\psi_{ji} - \psi_{jl} - \inner{\Psi_j e_i}{x_l - x_i}|}{\frac{1}{2}\onenorm{x_i - x_l}^2}
\big) \le 1, \notag
\ealign
where $\psi_{ji}$ and $\Psi_{jki}$ represent the values $g_j(x_i)$ and $\grad_k g_j(x_i)$ respectively.
Therefore, our recommended quality measure is the $2$-spanner diffusion Stein discrepancy with $\norm{\cdot} = \norm{\cdot}_1$. Its computation is summarized in \algref{spannerstein}.
An efficient implementation of \algref{spannerstein}, integrated with 11 linear program solver options, is publicly available via our Julia package.\footnote{\label{fn:julia}\url{https://jgorham.github.io/SteinDiscrepancy.jl/}}

\begin{algorithm}[t]
   \caption{Spanner diffusion Stein discrepancy, $\diffstein{Q_n}{\gsteinset{1}{G_2}}$ }
   \label{alg:spannerstein}
   \begin{algorithmic}
	\STATE {\bfseries input:} sample $Q_n$, target score $\grad \log p$, covariance coefficient $a$, stream coefficient $c$
	\STATE $G_2 \gets$ 2-spanner of $V = \supp{Q_n}$
	\STATE \textbf{for} $j=1$ \TO $d$ \textbf{do (in parallel)}
	\STATE\hspace{5mm}  $\tau_j \gets$ Optimal value of $j$-th
   coordinate linear program \eqref{eqn:l1-program} with graph $G_2$
	\RETURN $\sum_{j=1}^d \tau_j$
\end{algorithmic}
\end{algorithm}

\section{Numerical illustrations}
\label{sec:experiments}

In this section, we complement the principal theoretical contributions of this work 
with several simple numerical illustrations demonstrating how diffusion Stein discrepancies can be deployed in practice.
We will use our proposed quality measures
to select hyperparameters for biased samplers,
to quantify a bias-variance trade-off for approximate MCMC,
and to compare deterministic and random quadrature rules. %
In each case, we choose experimental settings in which a notion of surrogate ground truth is available for external validation.
We solve all linear programs using Julia for Mathematical
Programming~\citep{LubinDu15} with the Gurobi 6.0.4 solver \cite{Gurobi15} and use
the C++ greedy spanner implementation of \citet{BoutsteBu14} to
compute our $2$-spanners.
Our timings were obtained on a single core of an
Intel Xeon CPU E5-2650 v2 @ 2.60GHz.
Code reconstructing all experiments is available on the 
Julia package site.\cref{fn:julia}
\subsection{A simple example} \label{sec:simple-example}
We first present a simple example to illustrate several Stein discrepancy properties.
For a Gaussian mixture target $P$ (\exref{gmm}) with
$p(x) \propto e^{-\half(x-\frac{\Delta}{2})^2}+e^{-\half(x+\frac{\Delta}{2})^2}$ and $\Delta > 0$, we simulate one i.i.d.\ sequence of sample points from $P$ and a second i.i.d.\ sequence from $\Gsn(-\frac{\Delta}{2}, 1)$, which represents only one component of $P$.
For various mode separations $\Delta$, \figref{simple-example} shows that the Langevin spanner Stein discrepancy \diffref{langevin} applied to the first $n$ Gaussian mixture sample points decreases to zero at a $n^{-1/2}$ rate, while the discrepancy applied to the single mode sequence stays bounded away from zero.  
However, \figref{simple-example} also indicates that larger sample sizes are needed to distinguish between the mixture and single mode sample sequences when $\Delta$ is large.
This accords with our theory (see \exref{gmm}, \cororef{nonconstant-decay}, and \thmref{constant-lower-bound}), which implies that both the Langevin diffusion Wasserstein decay rate and the bound relating Stein to Wasserstein degrade as the mixture mode separation $\Delta$ increases.

\begin{figure}
  \centering
  \includegraphics[width=0.95\textwidth]{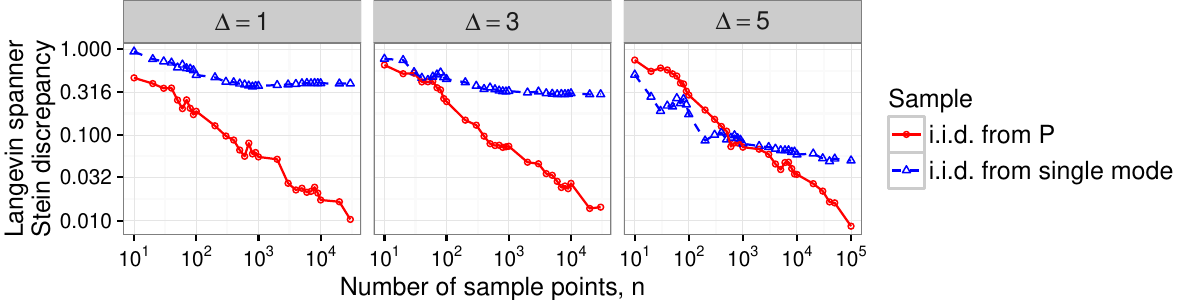}
  \caption{Stein discrepancy for normal mixture target $P$ with $\Delta$ mode separation (\secref{simple-example}).}
  \label{fig:simple-example}
\end{figure}

\subsection{Selecting sampler hyperparameters} \label{sec:tuning-hyperparameters}
Stochastic Gradient Riemannian Langevin Dynamics
(SGRLD) \cite{PattersonTe13} with a constant step size $\eps$ is an approximate MCMC procedure designed to accelerate posterior inference.
Unlike asymptotically correct MCMC algorithms, SGRLD has a stationary distribution that deviates increasingly from its target $P$ as its step size $\eps$ grows. 
On the other hand, if $\eps$ is too small, SGRLD fails to explore the sample space sufficiently quickly.
Hence, an appropriate setting of $\eps$ is paramount for accurate inference.

\begin{figure}[!ht]
  \centering
  \begin{subfigure}[b]{0.95\textwidth}
    \includegraphics[width=\textwidth]{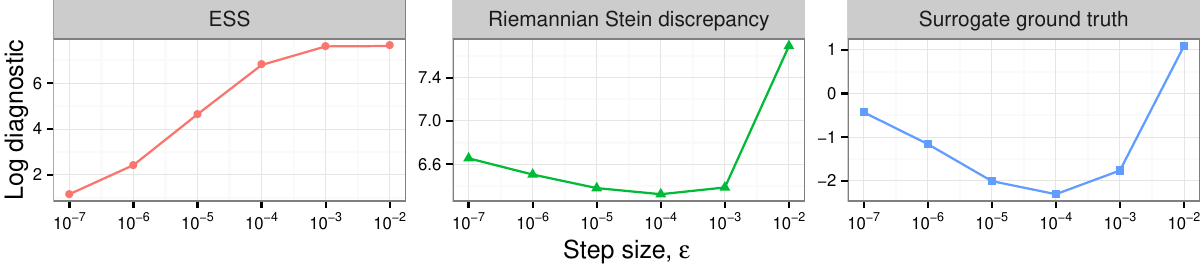}
    \caption{Step size selection criteria and surrogate ground truth (median marginal Wasserstein).
    ESS maximized at $\eps = 10^{-2}$. Stein discrepancy and ground truth minimized at $\eps = 10^{-4}$.}
    \label{fig:julia_compare-hyperparameters-multivariatetpseudohuber-approxwasserstein_diagnostics}
  \end{subfigure}
  \begin{subfigure}[b]{0.95\textwidth}
    \includegraphics[width=\textwidth]{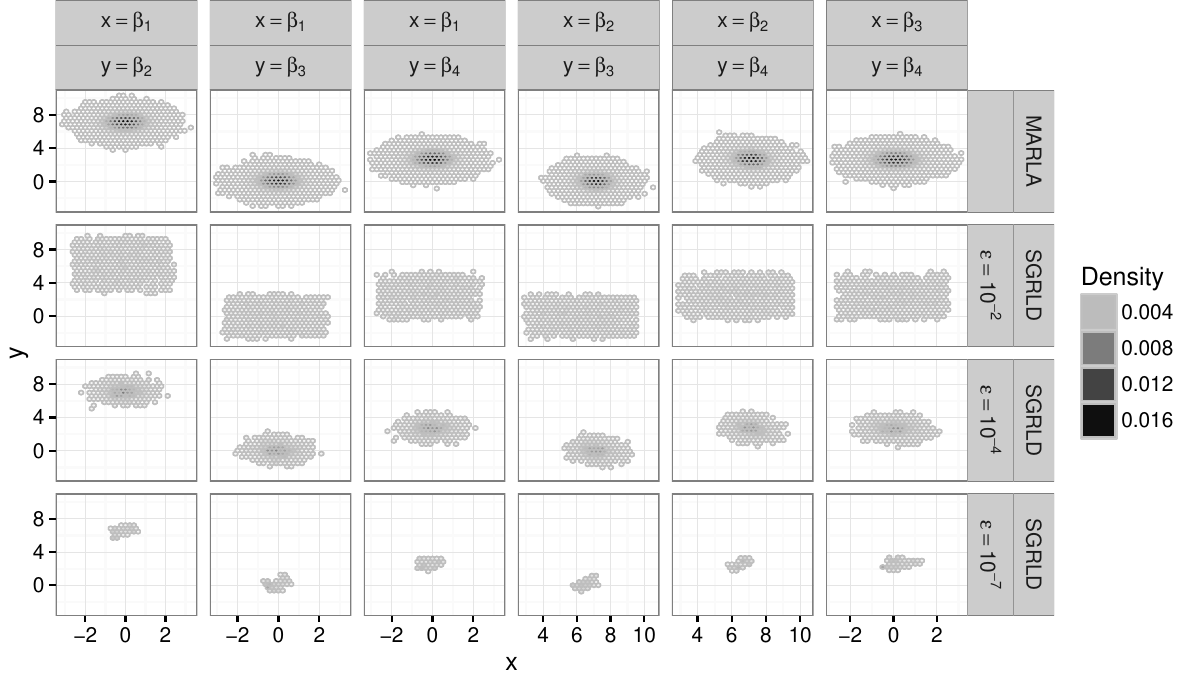}
    \caption{Bivariate hexbin plots. \tbf{Top row:} surrogate ground truth sample ($2\times 10^8$ MARLA points). 
    \tbf{Bottom 3 rows:} $2,000$ SGRLD sample points for various step sizes $\eps$.}
    \label{fig:julia_compare-hyperparameters-multivariatetpseudohuber-approxwasserstein_hexbin_marginal2d_scatter}
  \end{subfigure}
  \caption{Step size selection, stochastic gradient Riemannian Langevin dynamics (\secref{tuning-hyperparameters}).} %
  \label{fig:hyper}
\end{figure}

To demonstrate the value of diffusion Stein discrepancies for hyperparameter selection,  
we analyzed a biometric dataset of $L=202$ athletes from the Australian Institute of Sport
that was previously the focus of a heavy-tailed regression analysis~\cite{Soffritti2011multivariate}. 
In the notation of \exref{studentt-regression}, we used SGRLD to conduct a Bayesian multivariate Student's $t$ regression ($\nu = 10$, $\Sigma = I$) of athlete lean body mass onto red blood count, white blood count, plasma ferritin concentration, and a constant regressor of value ${1}{/\sqrt{L}}$ with a pseudo-Huber prior ($\delta=0.1$) on the unknown parameter vector $\beta \in \reals^4$.

After standardizing the output variable and non-constant regressors and initializing each chain with an approximate posterior mode found by L-BFGS started at the origin,
we ran SGRLD with minibatch size $30$, metric $G(\beta) =1/(2\sqrt{1 + \twonorm{\beta/ \delta}^2})I$, and a variety of step sizes $\eps$ to produce sample sequences of length $200,000$ thinned to length $2,000$. 
We then selected the step size that delivered the highest quality sample 
-- either the maximum effective sample size (ESS, a popular MCMC mixing diagnostic based on asymptotic variance \citep{BrooksGeJoMe11})
or the minimum Riemannian Langevin spanner Stein discrepancy with $a(\beta) = G^{-1}(\beta)$.
The longest discrepancy computation consumed $6$s for spanner construction and $65$s to solve a coordinate optimization problem.
As a surrogate measure of ground truth, we also generated a sample $Q^*$ of size $2\times 10^8$ from the Metropolis-adjusted Riemannian Langevin Algorithm (MARLA) \cite{girolami2011riemann} with metric $G$
and compute the median bivariate marginal Wasserstein distance $\onewass$ between each SGRLD sample and $Q^*$ thinned to $5,000$ points~\citep{gudmundsson2007small}.

\figref{julia_compare-hyperparameters-multivariatetpseudohuber-approxwasserstein_diagnostics}
shows that ESS, which does not account for stationary distribution bias, selects the largest step size available, $\eps = 10^{-2}$. 
As seen in \figref{julia_compare-hyperparameters-multivariatetpseudohuber-approxwasserstein_hexbin_marginal2d_scatter}, this choice results in samples that are greatly overdispersed when compared with the ground truth MARLA sample $Q^*$.
At the other extreme, the selection $\eps = 10^{-7}$ produces greatly underdispersed samples due to slow mixing.
The Stein discrepancy chooses an intermediate value, $\eps = 10^{-4}$.
The same value minimizes the surrogate ground truth Wasserstein measure and produces samples that most closely resemble the $Q^*$ in \figref{julia_compare-hyperparameters-multivariatetpseudohuber-approxwasserstein_hexbin_marginal2d_scatter}.

\subsection{Quantifying a bias-variance trade-off} \label{sec:bias-variance}
Approximate random walk Metropolis-Hastings (ARWMH) \cite{Korattikara2014} with tolerance parameter $\epsilon$ is a
biased MCMC procedure that accelerates posterior inference by approximating the standard MH correction.
Qualitatively, a smaller setting of $\eps$ produces a more faithful approximation of the MH correction and less bias between the chain's stationary distribution and the target distribution of interest.
A larger setting of $\eps$ leads to faster sampling and a more rapid reduction
of Monte Carlo variance, as fewer datapoint likelihoods are computed per sampling step.
We will quantify this bias-variance trade-off as a function of sampling time using the Langevin spanner Stein discrepancy.

In the notation of \exref{huber-regression}, we conduct a Bayesian Huber regression analysis ($c=1$) of 
the log radon levels in $1,190$ Minnesota households~\cite{Gelman2012multilevel} as a function of the log amount of uranium in the county,
an indicator of whether the radon reading was performed in a basement, and an intercept term.
A $\Gsn(0,I)$ prior is placed on the coefficient vector $\beta$.
We run ARWMH with minibatch size $5$ and two settings of the tolerance threshold $\eps$ ($0.1$ and $0.2$) for $10^7$ likelihood
evaluations, discard the sample points from the first $10^5$ evaluations, and thin
the remaining points to sequences of length $1,000$. 
\figref{approxmh_ais} displays the Langevin spanner Stein discrepancy applied to the first
$n$ points in each sequence as a function of the likelihood evaluation
count, which serves as a proxy for sampling time. 
As expected, the higher tolerance sample ($\eps=0.2$) is of
higher Stein quality for a small computational budget but is eventually
overtaken by the $\eps=0.1$ sample with smaller asymptotic bias.
The longest discrepancy computation consumed $0.8$s for the spanner and $20.1$s for a coordinate LP.

To provide external support for the Stein discrepancy quantification, 
we generate a Metropolis-adjusted Langevin chain~\citep{RobertsTw96} of length $10^8$ as a
surrogate $Q^*$ for the target $P$ and display several measures of expectation error between $X\sim Q_n$ and $Z\sim Q^*$ in \figref{approxmh_ais}:
the normalized predictive error $\max_l{
|\Earg{\inner{X - Z}{v_l/\infnorm{v_l}}}|}$ for $v_l$ the $l$-th datapoint covariate vector,
the mean error $\frac{\max_j |\Earg{X_j-\PVAR_j}|}{\max_j
|\Esubarg{Q^*}{\PVAR_j}|}$, and the second moment error $\frac{\max_{j,k}
|\Earg{X_jX_k-\PVAR_j\PVAR_k}|}{\max_{j,k} |\Esubarg{Q^*}{\PVAR_j\PVAR_k}|}$.
We see that the Stein discrepancy provides comparable results without the need for an additional surrogate chain.
\begin{figure}
  \centering
  \includegraphics[width=0.95\textwidth]{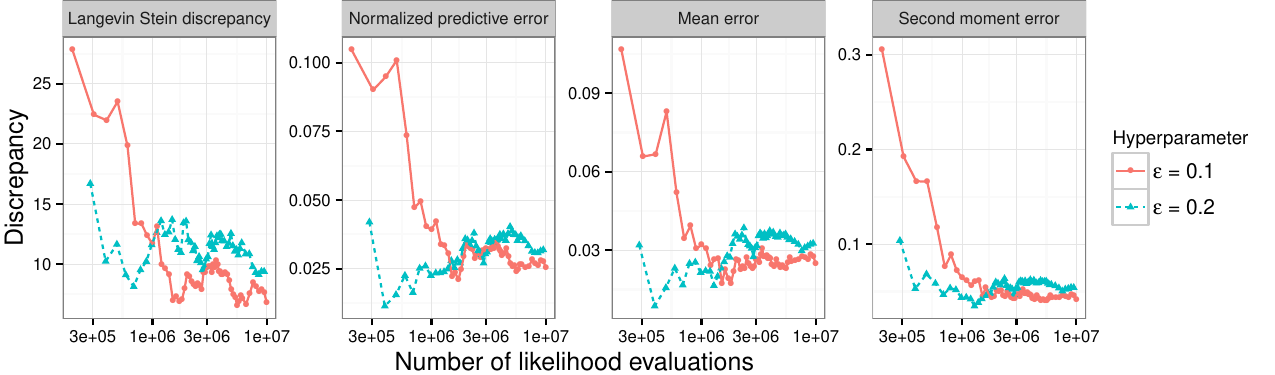}
  \caption{Bias-variance trade-off curves for approximate random walk MH (\secref{bias-variance}).} %
  \label{fig:approxmh_ais}
\end{figure}

\subsection{Comparing quadrature rules} \label{sec:convergence-rates-gmm}
Stein discrepancies can also measure the quality of deterministic sample sequences designed to improve upon Monte Carlo sampling.
For the Gaussian mixture target of \secref{simple-example}, 
\figref{quadrature} compares the median quality of 50 sample sequences 
generated from four quadrature rules recently studied in \cite[Sec. 4.1]{Lacoste2015sequential}: i.i.d.\ sampling from $P$, Quasi-Monte Carlo (QMC) sampling using a deterministic quasirandom number generator, Frank-Wolfe (FW) kernel herding~\cite{Chen2010,Bach2012}, and fully-corrective Frank-Wolfe (FCFW) kernel herding~\cite{Lacoste2015sequential}.
The quality judgments of the Langevin spanner Stein discrepancy \diffref{langevin} closely mimic those of the $L^1$ Wasserstein distance $\wass{}$, which is computable for simple univariate targets~\cite{Vallender74}.
Each Stein discrepancy was computed in under $0.03$s.

Under both diagnostics and as previously observed in other metrics~\cite{Lacoste2015sequential}, the i.i.d.\ samples are typically of lower median quality than their deterministic counterparts.
More suprisingly and in contrast to past work focused on very smooth function classes~\cite{Lacoste2015sequential}, FCFW underperforms FW and QMC in our diagnostics for larger sample sizes.  
Apparently FCFW, which is heavily optimized for smooth function integration, has sacrificed approximation quality for less smooth test functions.
For example, \figref{quadrature} shows that QMC offers a better quadrature estimate than FCFW for $h_{1}(x) = \max\{0, 1
- \min_{j\in\{1,2\}} |x-\mu_j|\}$, a $1$-Lipschitz approximation to the indicator of being within one
standard deviation of a mode.

\begin{figure}[t!]
  \centering
  \begin{subfigure}[t]{0.58\textwidth}
    \centering
    \includegraphics[width=\textwidth]{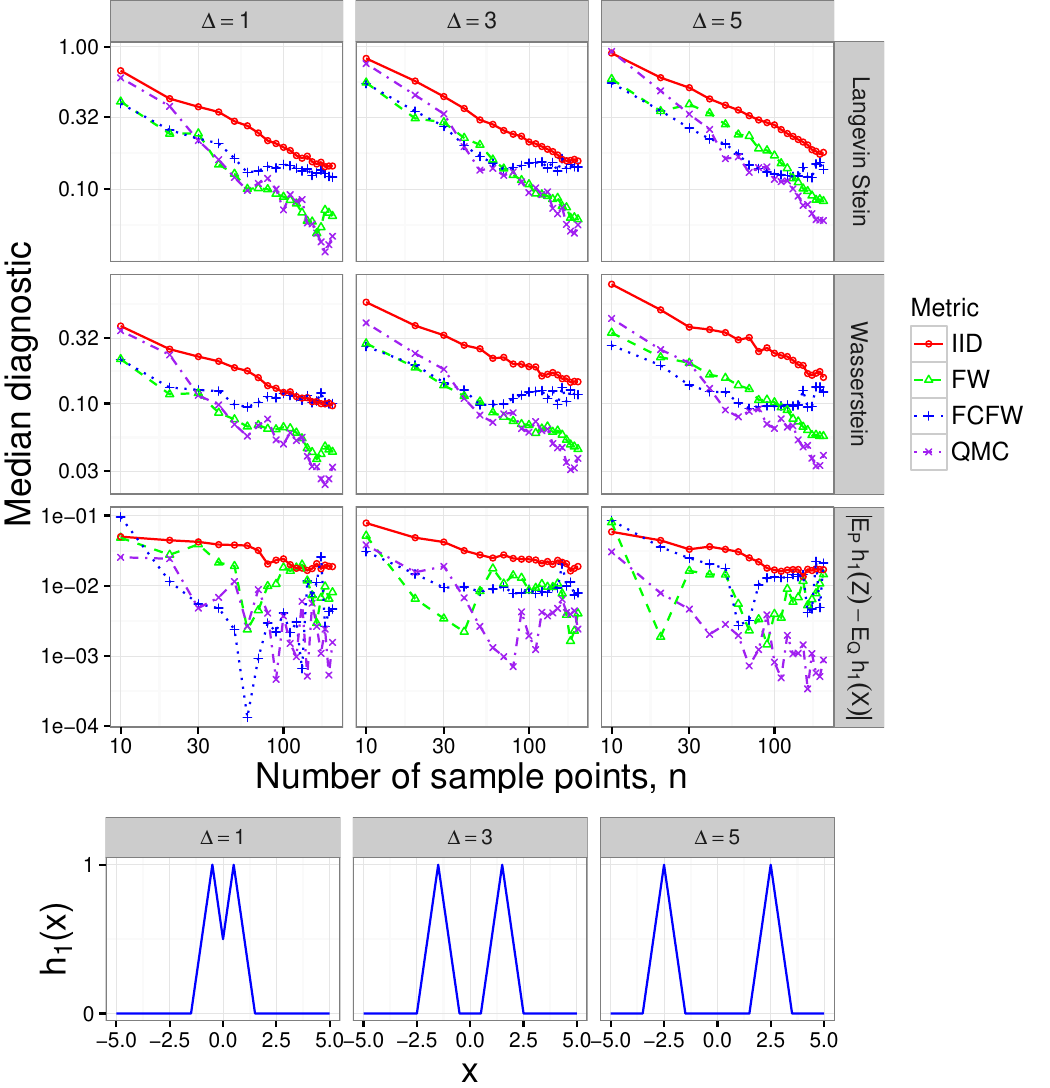}
  \end{subfigure}
  \begin{subfigure}[t]{0.4\textwidth}
    \centering
    \includegraphics[width=\textwidth]{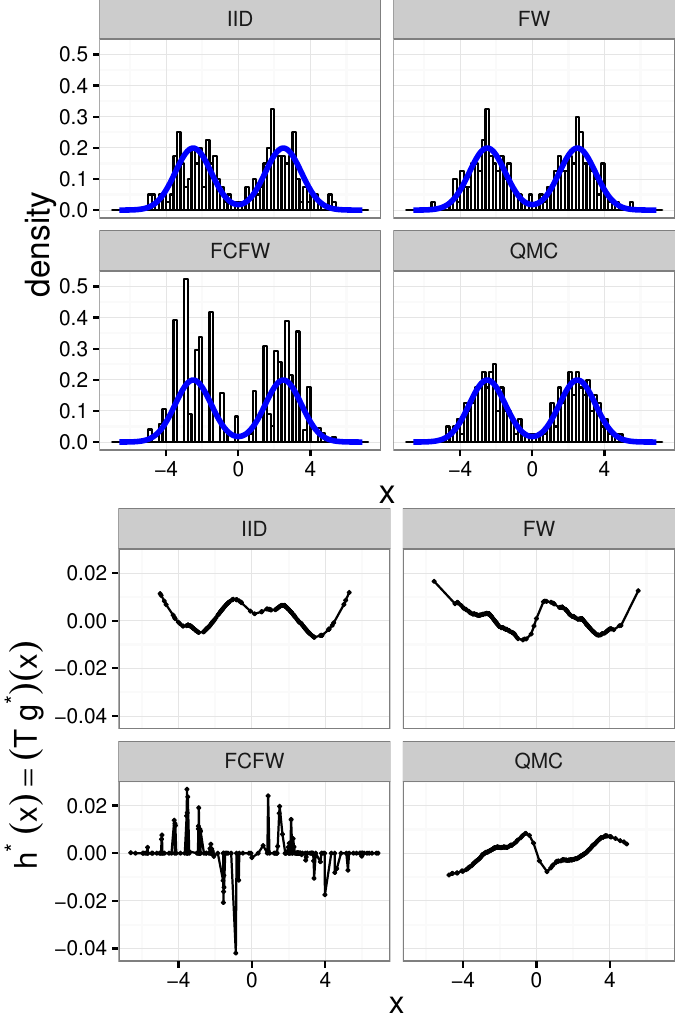}
  \end{subfigure}
  \caption{\tbf{Left:}
    Quadrature rule quality comparison for Gaussian mixture targets $P$ with mode separation $\Delta$ (\secref{convergence-rates-gmm}).
    \textbf{Right:} (Top) Sample histograms with $p$ overlaid ($\Delta=5$, $n=200$). (Bottom) Optimal discriminating test functions $h^* = \diffusion{g^*}$ from Stein program.
  }
  \label{fig:quadrature}
\end{figure}

In addition to providing a sample quality score, the Stein discrepancy optimization problem produces an optimal Stein function $g^*$ and an associated test function $h^* = \diffusion{g^*}$ that is mean zero under $P$ and best distinguishes the sample $Q_n$ from the target $P$.
\figref{quadrature} gives examples of these maximally discriminatve functions $h^*$ for a target mode separation of $\Delta = 5$ and length $200$ sequences from each quadrature rule. 
We also display the associated sample histograms with overlaid target density.
The optimal FCFW function reflects the jagged nature of the FCFW histogram.

\section{Connections and conclusions}
\label{sec:conclusion}
We developed quality measures suitable for comparing the fidelity of arbitrary ``off-target'' sample sequences by generating infinite collections of known target expectations.
\paragraph{Alternative quality measures}
The score statistic of~\citet{FanBrGe06} and the Gibbs sampler convergence criteria of~\citet{ZellnerMi95} account for some sample biases but sacrifice differentiating power by 
exploiting only a finite number of known target expectations.
For example, when $P = \Gsn(0,1)$, the score statistic~\citep{FanBrGe06} cannot differentiate two samples with the same means and variances.
Maximum mean discrepancies (MMDs) over characteristic reproducing kernel Hilbert spaces~\citep{GrettonBoRaScSm06} do detect arbitrary distributional biases but are only computable when the chosen kernel functions can be integrated under the target. 
In practice, one often approximates MMD using a sample from the target, but this requires a separate trustworthy sample from $P$.

While we have focused on the graph and classical Stein sets of~\citep{GorhamMa15}, our diffusion Stein operators can also be paired with the reproducing kernel Hilbert space unit balls advocated in~\citep{OatesGiCh2016,ChwialkowskiStGr2016,LiuLeJo16,GorhamMa17} to form tractable \emph{kernel diffusion Stein discrepancies} or with the random feature functions advocated in~\citep{HugginsMa2018} to form \emph{random feature diffusion Stein discrepancies}.
We have also restricted our attention to Stein operators arising from diffusion generators.
These take the form $\diffarg{g}{x} = \textfrac{1}{p(x)}\inner{\grad}{p(x)m(x)g(x)}$ with $m = a + c$ for $a(x)$ positive semidefinite and $c(x)$ skew-symmetric.  More generally, if the matrix $m$ possesses eigenvalues having a negative real part, then the resulting operator need not correspond to a diffusion process.  Such operators fall into the class of \emph{pseudo-Fokker Planck} operators which have been studied in the context of quantum optics \cite{Risken}.  As noted in \cite{drummond1980,walls1980} it is possible to obtain corresponding stochastic dynamics in an extended state space by introducing complex-valued noise terms;
these operators may merit further study in future work.

\paragraph{Alternative inferential tasks}
While our chief motivation is sample quality measurement, our work is also directly applicable to a variety of inferential tasks that currently rely on the Langevin operator introduced by \citep{GorhamMa15,OatesGiCh2016}, including control variate design~\citep{OatesGiCh2016}, 
goodness-of-fit testing~\citep{ChwialkowskiStGr2016,LiuLeJo16}, variational inference~\citep{Liu2016stein,Ranganath2016,ChenMaGoBrOa2018}, and importance sampling~\citep{LiuLe2017}.
The Stein factor bounds of \thmref{stein-factors} can also be used, in the manner of \citep{Mattingly10,joulin2010,hairer2011spectral}, to characterize the error of numerical discretizations of diffusions.  
These works convert bounds on the solutions of Poisson equations -- Stein factors -- into central limit theorems for $\Esubarg{Q_n}{h(\QVAR)}-\Esubarg{P}{h(\PVAR)}$, confidence intervals for $\Esubarg{P}{h(\PVAR)}$, and mean-squared error bounds for the estimate $\Esubarg{Q_n}{h(\QVAR)}$.
\citet{TehThVo14} and \citet{VollmerZyTe16} extended these approaches to obtain error estimates for approximate discretizations of the Langevin diffusion on $\reals^d$, while, independently of our work, \citet{HugginsZo2017} established error estimates for \Ito diffusion approximations with biased drifts and constant diffusion coefficients.
By \thmref{stein-factors}, their results also hold for \Ito diffusions with non-constant diffusion coefficients.
Following the release of the present paper and with the aim of analyzing discretization error for the overdamped Langevin diffusion, \citet[Thm. 3.1]{FangShXu2018} derived multivariate Stein factor bounds for a class of strongly log-concave distributions. 
Our \thmref{stein-factors} with the choice $\iota = 1/\log(1/\eps)$ provides Stein factors of the same form but applies also to non-log-concave targets and more general diffusions. 

\paragraph{Alternative targets}
Our exposition has focused on the Wasserstein distance $\wass$,
which is only defined for distributions with finite means.
A parallel development could be made for the Dudley metric~\citep{Muller97} to target distributions with undefined mean.
The work of~\citet{Cerrai2001} also suggests that the Lipschitz condition on our
drift and diffusion coefficients can be relaxed.

\appendix
\section{Proof of \propreflow{diffusion-zero}}
\label{sec:diffusion-zero-proof}

Fix any $g\in\steinset$. Since $g$ and $\grad g$ are bounded and $b$, $a$, and $c$ are $P$-integrable, $\Esubarg{P}{\diffarg{g}{Z}}$ is finite.  Define the ball $\ball_{r} = \{x \in \reals^d \, : \, \norm{x}_2 \leq r \}$ with $n_r(z)$ the outward facing unit normal vector for each $z$ on the boundary $\partial \ball_r$.  Since $\pvar\mapsto p(\pvar)(a(\pvar)+c(\pvar))g(\pvar)$ is in $C^1$, %
we may apply the dominated convergence theorem and then the divergence theorem to obtain
\baligns 
\Esubarg{P}{\diffarg{g}{\PVAR}} &= \lim_{r\rightarrow \infty}\textint_{\ball_r} \inner{\grad}{p(\pvar)(a(\pvar)+c(\pvar))g(\pvar)}\, d\pvar\\
&= \lim_{r\rightarrow \infty}\textint_{\partial\ball_r} \inner{n_r(\pvar)}{(a(\pvar) + c(\pvar))g(\pvar)p(\pvar)}\, d\pvar.
 \ealigns
 Let $f(r) = M_0(g)\int_{\partial\ball_r}\opnorm{a(\pvar)+c(\pvar)}\,p(\pvar) \,d\pvar$.  Since $g$ and $n_r$ are bounded, 
 $$\textstyle
  \int_{\partial \ball_r}\inner{n_r(\pvar)}{(a(\pvar)+c(\pvar))g(\pvar)p(\pvar)}\,d\pvar	\leq f(r).
 $$
The coarea formula \cite{ambrosio2000functions} and the integrability of $a$ and $c$ further imply that
$$\textstyle
	\int_{0}^\infty f(r)\,dr = \int_{\reals^d} M_0(g)\opnorm{a(\pvar)+c(\pvar)}\,p(\pvar) \,d\pvar < \infty.
$$
Hence, $\lim\inf_{r\rightarrow \infty} f(r) = 0$, and therefore
$
\Esubarg{P}{\diffarg{g}{\PVAR}} = 0.
$

\section{Proof of \thmreflow{stein-factors}} \label{sec:stein-factors-proof}
Fix any $x\in\reals^d$ and $h \in \twowassset$ with $\Esubarg{P}{h(\PVAR)} = 0$.
Since the drift and diffusion coefficients are Lipschitz, \cite[Thm.~3.4]{Khasminskii11} guarantees that the diffusion 
$(\process{t}{x})_{t\geq0}$ is well-defined.
Using the shorthand $\sr \defeq \int_0^\infty r(t)  \dt$, we will show that the posited function $u_h$ \eqnref{stein-function} exists and solves the \emph{Poisson equation}
\balign\label{eqn:inf-equation}
h = \generator{u_h}
\ealign
with infinitesimal generator $\generator{}$,
that $u_h$ is Lipschitz, 
that $u_h$ has a continuous Hessian,
that $u_h$ has a bounded and \Holder continuous Hessian under additional smoothness assumptions.

\paragraph{Existence of $u_h$ and solving the Poisson equation \eqnref{inf-equation}}
\global\long\def\steinbanach{L}

Consider the set $\steinbanach \defeq (1 + \twonorm{x}^2)C_0(\reals^d) = \lbrace (1 + \twonorm{x}^2)f \, : \, f\in C_0(\reals^d) \rbrace$, where $C_0(\reals^d)$ is the set of continuous functions vanishing at infinity.  Equipped with the norm $\norm{f}_{\steinbanach} = \sup_{x\in \reals^d}|f(x)|/(1 + \twonorm{x}^2)$, the set $\steinbanach$ is a Banach space~\citep{rockner2006kolmogorov}.   As noted in \citep{Doersek2010Semigroup}, the space $\steinbanach$ can also be characterized as the closure of the set of bounded continuous functions, $C_b(\reals^d)$, in the set $\lbrace f:\mathbb{R}^{d}\rightarrow\mathbb{R}\,:\,\norm{f}_\steinbanach < \infty\rbrace$.
To discuss the well-posedness of the Poisson equation \eqnref{inf-equation},
we first show that the transition semigroup of an \Ito diffusion is strongly continuous on $\steinbanach$.

\begin{proposition}
The transition semigroup $\fulltrans{}$ of an \Ito diffusion with Lipschitz drift and diffusion coefficients is strongly continuous on $\steinbanach$.
\end{proposition}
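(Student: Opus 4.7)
The plan is to verify the two defining properties of strong continuity on $\steinbanach$: (i) $\trans{t}{}$ maps $\steinbanach$ to $\steinbanach$ boundedly for each $t \ge 0$, and (ii) $\|\trans{t}{f} - f\|_{\steinbanach} \to 0$ as $t \to 0^+$ for every $f \in \steinbanach$. The workhorse throughout is the standard moment estimate for Lipschitz-coefficient \Ito diffusions: applying \Ito's formula to $(1+\twonorm{x}^2)^{k}$ (whose generator grows at most like $(1+\twonorm{x}^2)^{k}$ under the linear growth of $b$ and $\sigma$ implied by their Lipschitz continuity) and invoking \Gronwall yields, for every integer $k \ge 1$,
\[
\E\bigl[(1+\twonorm{\process{t}{x}}^{2})^{k}\bigr] \le C_{k} e^{C_{k} t}(1+\twonorm{x}^{2})^{k},
\]
together with the increment bound $\E\bigl[\twonorm{\process{t}{x}-x}^{2}\bigr] \le Ct(1+\twonorm{x}^{2})$.

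\textbf{Boundedness and range.} Write any $f \in \steinbanach$ as $f(x) = (1+\twonorm{x}^{2})g(x)$ with $g \in C_0(\reals^{d})$ and $\norm{f}_{\steinbanach} = \supnorm{g}$. The $k=1$ moment bound gives immediately $\norm{\trans{t}{f}}_{\steinbanach} \le Ce^{Ct}\norm{f}_{\steinbanach}$. To see $\trans{t}{f}$ actually lies in $\steinbanach$ (i.e., $x \mapsto \trans{t}{f}(x)/(1+\twonorm{x}^{2})$ is in $C_0(\reals^{d})$), continuity follows from Feller continuity of $(\process{t}{x})$, which is standard under Lipschitz coefficients, combined with the uniform integrability supplied by the $k=2$ moment bound. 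For vanishing at infinity, fix $\eps > 0$, choose $R$ so that $|g| \le \eps$ on $\{\twonorm{z} > R\}$, and split $\E[(1+\twonorm{\process{t}{x}}^{2})g(\process{t}{x})]$ according to whether $\twonorm{\process{t}{x}} > R$ or $\le R$. The first piece is at most $\eps C e^{Ct}(1+\twonorm{x}^{2})$, and the second is at most $\supnorm{g}(1+R^{2})\P(\twonorm{\process{t}{x} - x} \ge \twonorm{x}-R)$, which by Chebyshev and the increment bound is $O((1+\twonorm{x}^{2})/(\twonorm{x}-R)^{2})$. Dividing by $1+\twonorm{x}^{2}$, letting $\twonorm{x} \to \infty$, and then $\eps \downarrow 0$ gives the desired vanishing.

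\textbf{Strong continuity at $t=0$.} Because the multiplication map $g \mapsto (1+\twonorm{x}^{2})g$ is an isometry from $C_0(\reals^{d})$ onto $\steinbanach$ and $C_c(\reals^{d})$ is sup-norm dense in $C_0(\reals^{d})$, the set $D \defeq \{(1+\twonorm{x}^{2})g : g \in C_c(\reals^{d})\}$ is dense in $\steinbanach$, and every element of $D$ is a continuous compactly supported function. For such $f$ with $\supp{f} \subset \ball_{K}$: uniform continuity of $f$ together with Feller continuity give $\sup_{\twonorm{x}\le 2K}|\trans{t}{f}(x)-f(x)| \to 0$, while for $\twonorm{x} > 2K$ we have $f(x) = 0$ and $|\trans{t}{f}(x)| \le \supnorm{f}\P(\process{t}{x} \in \ball_{K}) \le \supnorm{f}\E[\twonorm{\process{t}{x}-x}^{2}]/(\twonorm{x}-K)^{2}$, which is $O(t(1+\twonorm{x}^{2})/\twonorm{x}^{2})$, uniformly small in $\twonorm{x} > 2K$ as $t \downarrow 0$. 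Hence $\norm{\trans{t}{f}-f}_{\steinbanach} \to 0$ on $D$. A standard three-$\eps$ decomposition $\trans{t}{f}-f = \trans{t}{(f-f_{n})} + (\trans{t}{f_{n}}-f_{n}) + (f_{n}-f)$, combined with the uniform operator bound $\norm{\trans{t}{}}_{\steinbanach \to \steinbanach} \le Ce^{C}$ for $t \in [0,1]$, extends strong continuity from $D$ to all of $\steinbanach$.

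\textbf{Main obstacle.} The delicate step is proving the range inclusion $\trans{t}{f} \in \steinbanach$ when $f$ is unbounded; in particular, showing that $\trans{t}{f}(x)/(1+\twonorm{x}^{2})$ vanishes at infinity requires balancing the quadratic growth of the weight against the decay of $g$ and the potentially unbounded increments of $\process{t}{x}-x$. The $C_0$-decay of $g$ provides just enough reserve to absorb the weight in the splitting argument above; the remaining ingredients are either routine (boundedness, density, three-$\eps$) or standard consequences of Lipschitz SDE theory (moment bounds, Feller continuity).
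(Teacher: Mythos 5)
Your proposal is correct, but it follows a genuinely different route from the paper. The paper first proves only \emph{pointwise} convergence $\transarg{t}{f}{x}\to f(x)$ (via a.s.\ path continuity, the moment bound $\Earg{\sup_{0\leq t\leq 1}|f(\process{t}{x})|}\leq C\norm{f}_{\steinbanach}(1+\twonorm{x}^2)$, and dominated convergence), and then upgrades this to strong continuity abstractly: it invokes the Riesz--Markov representation of the dual space $\steinbanach^*$ by finite signed Radon measures with $\int(1+\twonorm{x}^2)\,|\mu|(dx)<\infty$, verifies weak continuity by dominated convergence against such measures, and concludes via the theorem that a weakly continuous semigroup of bounded operators is strongly continuous (Engel--Nagel, Thm.~I.5.8). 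You instead argue directly: moment/increment bounds give the operator norm bound, a splitting-plus-Chebyshev argument shows the range inclusion $\trans{t}{}\colon\steinbanach\to\steinbanach$ (the $C_0$-vanishing of $\trans{t}{f}(x)/(1+\twonorm{x}^2)$), norm convergence is proved on the dense set $(1+\twonorm{x}^2)C_c(\reals^d)$ where the weight is harmless, and a three-$\eps$ argument extends it. What each buys: the paper's route sidesteps any norm-convergence estimate on a dense class, but leans on two nontrivial imported facts (the dual characterization of $\steinbanach$ from D\"orsek and the weak-implies-strong theorem) and leaves the $C_0$-range property implicit, whereas your argument is elementary and self-contained, makes the range inclusion explicit (a point the paper glosses over, although it is needed to view $(\trans{t}{})$ as operators on $\steinbanach$), at the price of more hands-on SDE estimates — all of which are standard under Lipschitz coefficients, so there is no gap.
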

\begin{proof}
Fix any $f\in \steinbanach$ and $x\in\reals^d$.
We first show that $\transarg{t}{f}{x}$ converges pointwise to $f(x)$ as $t\rightarrow 0^{+}$.   
Since the associated \Ito process $\fullprocess{x}$ is almost surely pathwise continuous~\cite[Thm.~3.4]{Khasminskii11} and $f$ is continuous in a neighborhood of $x$, it follows that $f(\process{t}{x})\rightarrow f(x)$ as $t\rightarrow 0^{+}$, almost surely.  Moreover, \cite[Sec. 5, Cor. 1.2]{Friedman1975Stoch} implies that
$$\textstyle
\Earg{\sup_{0\leq t\leq 1}|f(\process{t}{x})|} 
	\leq \norm{f}_{\steinbanach}(1 + \Earg{\sup_{0\leq t \leq 1}\twonorm{\process{t}{x}}^2})
	\leq C\norm{f}_{\steinbanach}(1 + \twonorm{x}^2),
$$
for some $C > 0$ depending only on $M_1(b)$ and $M_1(\sigma)$.  
The dominated convergence theorem now yields the desired pointwise convergence.

To prove the strong continuity of $\fulltrans{}$, it suffices, by \cite[Thm. I.5.8, p. 40]{Engel2000Semigroup}, to verify that $\fulltrans{}$ is weakly continuous, i.e., that $l(\trans{t}{f}) \rightarrow l(f)$, as $t \rightarrow 0^+$, for all elements $l$ of the dual space $\steinbanach^*$.  To this end, fix any $l \in \steinbanach^*$.  By the Riesz-Markov theorem for $\steinbanach$ \cite[Theorem 2.4]{Doersek2010Semigroup}, there exists a finite signed Radon measure $\mu$  such that
\begin{equation}
\label{eqn:dual_norm}\textstyle
	l(f) = \int_{\mathbb{R}^d} f(x)\mu(dx) \text{ and }\int_{\mathbb{R}^d} ( 1 + \twonorm{x}^2)|\mu|(dx) = \lVert l \rVert_{\steinbanach^*},
\end{equation}
for $\lVert \cdot \rVert_{\steinbanach^*}$ the dual norm.  
By Jensen's inequality and \cite[Sec. 5, Cor. 1.2]{Friedman1975Stoch},
\begin{align*}
\forall t,
	\twonorm{\transarg{t}{f}{x}} &\textstyle\leq \Earg{|f(\process{t}{x})|} \leq \lVert f \rVert_{\steinbanach} \mathbb{E}\left[1 + \twonorm{\process{t}{x}}^2\right] \leq C\lVert f \rVert_{\steinbanach} (1 + \twonorm{x}^2).
\end{align*}
Since $1 + \twonorm{x}^2$ is $|\mu|$-integrable by \eqnref{dual_norm}, dominated convergence gives
$$\textstyle
	\lim_{t\rightarrow 0^{+}} l(\trans{t}{f}) = 
	\lim_{t\rightarrow 0^{+}}\int_{\mathbb{R}^d} \transarg{t}{f}{x}\mu(dx) = \int_{\mathbb{R}^d}f(x)\mu(dx) = l(f),
$$
yielding the result.
\end{proof}

Consider the infinitesimal generator $\generator{}$ of the semigroup $\fulltrans{}$ on $\steinbanach$ with 
\[\textstyle
\dom(\generator{})=\big\lbrace f \in \steinbanach \, : \lim_{t\rightarrow 0^{+}} \frac{\trans{t}{f}-f}{t} \mbox{ exists in the } \lVert \cdot \rVert_{\steinbanach} \mbox{ norm} \big\rbrace.
\]
Since $P_t$ is strongly continuous on $\steinbanach$ and $h \in \steinbanach$ with $M_1(h)\leq 1$ and $\Esubarg{P}{h(Z)}=0$, \cite[Prop. 1.5]{EthierKu86} implies that
\begin{equation*}\textstyle
h-\trans{t}{h}=-\generator{}\int_{0}^{t}\trans{s}{h}\,ds=\generator{}u_{h,t}
\qtext{for}
u_{h,t} \defeq -\int_{0}^{t}\trans{s}{h}\,ds.
\end{equation*}
The stationarity of $P$ and the definitions of $\twowass$ and $r$ imply that
\balignst
\norm{\trans{t}{h}}_\steinbanach  &= \norm{P_t h - E_P[h]}_\steinbanach = \sup_{x\in\mathbb{R}^{d}} |\Esubarg{P}{P_t h(x) - P_th(Z)}|/(1+\twonorm{x}^2) \\
	&\leq \sup_{x\in\mathbb{R}^{d}}\textfrac{\E_{P}[\twowass(\delta_{x}\trans{t}{}, \delta_{Z} \trans{t}{})]}{1+\twonorm{x}^{2}} 
	\leq r(t)\sup_{x\in R^{d}}\textfrac{\Esubarg{P}{\twonorm{x-Z} }}{1+\twonorm{x}^{2}},
\ealignst
and hence $\lVert \trans{t}{h}\rVert_{\steinbanach}\rightarrow0$ as $t\rightarrow\infty$,
since $P$ has a finite mean, and $r(t) \to 0$ as $t\to\infty$ as $r$ is integrable and monotonic. 
Arguing similarly, 
\balignst
\lVert u_{h,t}-u_{h,t'}\rVert_{\steinbanach} 
	&\textstyle \leq \norm{\int_{t}^{t'}\E_{P}[\twowass(\delta_{x}\trans{s}{},\delta_Z \trans{s}{})]ds}_{\steinbanach} \\
	\displaystyle&\leq \sup_{x\in R^{d}}\textfrac{\Esubarg{P}{\twonorm{x-Z} }}{1+\twonorm{x}^{2}}\textint_{t}^{t'}r(s)\,ds.
\ealignst
Thus, it follows that $( u_{h,t})_{t>0}$ is a Cauchy sequence in $\steinbanach$ with limit $u_{h}=\int_{0}^{\infty}\trans{s}{h}\,ds\in\steinbanach$.
Thus,  $(h-\trans{t}{h},u_{h,t})\rightarrow(h,u_{h})$  in the graph norm on $\steinbanach\times\steinbanach$, and since $\generator{}$ is closed \cite[Cor. 1.6]{EthierKu86}, $u_{h}\in\dom(\generator{})$ and
$h=\generator{}u_{h}.$

\begin{remark}
The choice of the Banach space is crucial for the argument above.  
As noted in \cite{manca2008kolmogorov} and contrary to the claim in \cite{Barbour90}, 
the semigroup $\fulltrans{}$  fails to be strongly continuous over the Banach space
$\widetilde\steinbanach \defeq (1 + \twonorm{x}^2)C_b(\reals^d)$
when $\fullprocess{x}$ is an Ornstein-Uhlenbeck process, i.e., a Langevin diffusion \diffref{langevin} with a multivariate Gaussian invariant measure.
\end{remark}

\paragraph{Lipschitz continuity of $u_h$}
To demonstrate that $u_h$ is Lipschitz, we choose an arbitrary $v\in \reals^d$, 
and apply the definition of the Wasserstein distance, the assumed decay rate, 
and the integrability of $r$ to obtain
\baligns
&\textstyle\twonorm{u_h(x+v) - u_h(x)}
	\leq\int_0^\infty \twonorm{\Earg{h(\process{t}{x})-h(\process{t}{x+v})}}\,dt  \notag \\
	&\textstyle\leq \int_0^\infty \wass(\delta_x \trans{t},\,\delta_{x+v} \trans{t}) \dt
	\leq \wass(\delta_x,\delta_{x+v})\,\sr
	= \twonorm{v}\,\sr < \infty.
\ealigns

\paragraph{Continuity of $\Hess u_h$}%
Since $u_h \in \dom(\generator{})$ is a continuous solution of the Poisson equation \eqnref{inf-equation}, and since the infinitesimal generator agrees with the characteristic operator of a diffusion when both are defined~\cite[p. 129]{Oksendal2013},
Thm.~5.9 of \cite{Dynkin1965} implies that $u_h \in C^2$.
\paragraph{Boundedness of $\Hess u_h$}
Instantiate the additional preconditions of \eqnref{stein-factor2}, and assume that $M_0(\sigma^{-1}), F_2(\sigma), M_2(b) < \infty$, or else \eqnref{stein-factor2} is vacuous.
\lemref{semigroup-derivative-estimates}, established in \secref{semigroup-derivative-estimates-proof}, shows that the semigroup $\trans{t}{h}$ admits a bounded continuous Hessian, which is integrable in $t$.

\begin{lemma}[Semigroup Hessian estimate]%
\label{lem:semigroup-derivative-estimates}
Suppose that the drift and diffusion coefficients $b$ and $\sigma$ of an \Ito diffusion are Lipschitz with Lipschitz gradients and locally Lipschitz second derivatives.  If the transition semigroup $(\trans{t}{})_{t\geq 0}$ has Wasserstein decay rate $r$,
and $\sigma(x)$ has a right inverse $\sigma^{-1}(x)$ for each $x\in\reals^d$, then,
for all $t > 0$ and any $f \in C^2$ with bounded first and second derivatives,
$\trans{t}{f}$ is twice continuously differentiable with
\balign
\label{eqn:semigroup-gradient-bound}
M_1(\trans{t}{f})
	\leq &\, M_1(f) r(t) \qtext{and}%
\ealign
\balign
M_2(\trans{t}{f}) \label{eqn:semigroup-hessian-bound} 
	\leq &\inf_{t_0 \in (0,t]}\ M_1(f) r(t-t_0) \sqrt{\textfrac{1}{t_0}}e^{t_0\gamma_2} M_0(\sigma^{-1}) \\
	&+\ M_1(f) r(t-t_0) r(0) e^{t_0\gamma_2} M_1(\sigma)M_0(\sigma^{-1}) \notag\\
	&+\ M_1(f) r(t-t_0) \sqrt{t_0}\, r(0) e^{t_0 \gamma_4}  \textfrac{2}{3}\sqrt{\alpha} \notag
\ealign
for $\gamma_\rho \defeq \rho M_1(b)+ \frac{\rho^2-2\rho}{2}M_1(\sigma)^2+\frac{\rho}{2} F_1(\sigma)^2$,
$\alpha \defeq \frac{M_2(b)^2}{2M_1(b)+4M_1(\sigma)^2}+2F_2(\sigma)^2$.
\end{lemma}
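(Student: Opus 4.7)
The plan is to establish the gradient bound via Kantorovich duality and then attack the Hessian bound through a Bismut--Elworthy--Li (BEL) formula applied to a semigroup decomposition.

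\textbf{Step 1 (Gradient bound).} For any Lipschitz $f$ and any $x,y\in\reals^d$, the Kantorovich--Rubinstein duality yields
\[
|P_t f(x) - P_t f(y)| \leq M_1(f)\, \twowass(\delta_x P_t, \delta_y P_t) \leq M_1(f)\, r(t)\, \twonorm{x-y},
\]
which gives \eqnref{semigroup-gradient-bound} immediately. This also shows that for any fixed $t-t_0$ and $f \in C^2$ with bounded derivatives, the Lipschitz function $g \defeq P_{t-t_0} f$ satisfies $M_1(g) \leq M_1(f)\, r(t-t_0)$.

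\textbf{Step 2 (Flow moment bounds).} The hypothesis that $b,\sigma$ have Lipschitz gradients and locally Lipschitz second derivatives guarantees existence of the first and second variation processes $J_{t,x} \defeq \grad_x Z_{t,x}$ and $K_{t,x} \defeq \grad_x^2 Z_{t,x}$ along trajectories. Applying It\^o's formula to $\twonorm{J_t v}^\rho$ and Gr\"onwall yields the exponential moment estimate $\E[\twonorm{J_{s,x} v}^\rho]^{1/\rho} \leq \twonorm{v}\, e^{s\gamma_\rho/\rho}$ for $\rho\in\{2,4\}$. For $K_{s,x}$, which satisfies a linear SDE driven by $\grad^2 b(Z_s)[J_s, J_s]\,ds + \grad^2\sigma(Z_s)[J_s,J_s]\,dW_s$, a Burkholder--Davis--Gundy estimate combined with the $J$-bounds gives $\E[\opnorm{K_{s,x}}^2]^{1/2} \lesssim \sqrt{s\alpha}\, e^{s\gamma_4/2}$, where the form of $\alpha$ encodes the separate contributions of the drift's second derivative (via $M_2(b)$) and the diffusion's second derivative (via $F_2(\sigma)$).

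\textbf{Step 3 (BEL representation).} Use the semigroup property $P_t f = P_{t_0}\!\left(P_{t-t_0} f\right) = P_{t_0} g$ and apply the BEL / Elworthy--Li formula on the interval $[0,t_0]$:
\[
\grad_v P_{t_0} g(x) = \tfrac{1}{t_0}\, \E\!\left[\,\bigl(g(Z_{t_0,x})-g(x)\bigr) \textint_0^{t_0} \langle \sigma^{-1}(Z_{s,x}) J_{s,x} v,\, dW_s\rangle\right],
\]
where the centering $g(x)$ is inserted for free since the stochastic integral has mean zero; this centering is crucial because $g$ need not be bounded, only Lipschitz.

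\textbf{Step 4 (Differentiation and three contributions).} Differentiating once more in direction $w$ under the expectation --- which is justified by the locally Lipschitz second derivatives of $b,\sigma$ together with the uniform local moment bounds of Step 2 --- produces three terms:
\begin{enumerate}
\item[(T1)] $\grad g$ hits the outer factor, giving $\langle \grad g(Z_{t_0,x}), J_{t_0} w\rangle$ paired with the Malliavin weight;
\item[(T2)] $\grad_w$ hits $\sigma^{-1}(Z_s)$ inside the integrand, producing a factor bounded by $M_0(\sigma^{-1})M_1(\sigma)$ (via the identity $\grad\sigma^{-1} = -\sigma^{-1}(\grad\sigma)\sigma^{-1}$ together with a single cancellation);
\item[(T3)] $\grad_w$ hits $J_{s,x}$, producing the second variation process $K_{s,x}[w,\cdot]$.
\end{enumerate}

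\textbf{Step 5 (Term-by-term Cauchy--Schwarz).} For (T1), Cauchy--Schwarz combined with $M_1(g)\,e^{t_0\gamma_2/2}\twonorm{w}$ (from Step 2) and an It\^o isometry estimate $\bigl(\E\!\int_0^{t_0}\opnorm{\sigma^{-1}J_s v}^2\,ds\bigr)^{1/2} \leq M_0(\sigma^{-1})\sqrt{t_0}\,e^{t_0\gamma_2/2}\twonorm{v}$ reproduces the first summand of \eqnref{semigroup-hessian-bound}. For (T2), bounding $|g(Z_{t_0,x})-g(x)| \leq M_1(g)\twonorm{Z_{t_0,x}-x}$, using Wasserstein decay at $t=0$ (giving the factor $r(0)$) to control $\E[\twonorm{Z_{t_0,x}-x}^2]^{1/2}$ through displacement of the synchronously coupled flow, and pairing with the $M_1(\sigma)M_0(\sigma^{-1})$ integrand bound yields the second summand (no $\sqrt{t_0}$ appears after the cancellation from the $1/t_0$ BEL prefactor and the integrated It\^o isometry). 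For (T3), the same centering plus the $K$-moment bound of Step 2 yields the third summand with the characteristic $\sqrt{t_0}\,e^{t_0\gamma_4}\sqrt{\alpha}$ scaling. Taking the infimum over $t_0\in(0,t]$ gives \eqnref{semigroup-hessian-bound}.

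\textbf{Expected main obstacle.} The delicate step will be Term (T3): rigorously justifying the second differentiation under the stochastic integral and obtaining a sharp $L^2$ bound on $K_{s,x}$ with the tight constants $M_2(b)^2/(2M_1(b)+4M_1(\sigma)^2)$ and $2F_2(\sigma)^2$ comprising $\alpha$. This requires applying BDG to the martingale part of the $K$-SDE, carefully handling the inhomogeneous forcing $\grad^2 b[J,J]\,ds$ via an integrating-factor Gr\"onwall argument tuned to yield exactly the $e^{s\gamma_4}$ rate (and not a worse one). The continuity assertion on $\grad^2 P_t f$ will follow from dominated convergence applied to these same moment bounds, using the local Lipschitz regularity of $\grad^2 b$, $\grad^2\sigma$ to ensure continuity of $(x,\omega)\mapsto K_{s,x}(\omega)$.
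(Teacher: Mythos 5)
Your overall architecture follows the paper's proof (Kantorovich duality for \eqnref{semigroup-gradient-bound}, variation flows with Gr\"onwall moment bounds, a Bismut--Elworthy--Li formula differentiated once more, and the split at $t_0$ via the Markov property), but there is a genuine gap in your treatment of (T2) and (T3), the terms in which the unbounded factor $g(\process{t_0}{x})$ appears. You replace it by the centered increment $g(\process{t_0}{x})-g(x)$ and bound this by $M_1(g)\twonorm{\process{t_0}{x}-x}$. The resulting factor $\Earg{\twonorm{\process{t_0}{x}-x}^2}^{1/2}$ is \emph{not} controlled by the Wasserstein decay rate: $r$ quantifies contraction between two trajectories started at different points under a coupling and says nothing about the displacement of a single trajectory from its start. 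Since $b$ and $\sigma$ are only Lipschitz, hence possibly unbounded, this displacement grows like $t_0\twonorm{b(x)}+\sqrt{t_0}\,\fronorm{\sigma(x)}$ up to constants, which is unbounded in $x$; your Hessian bound would therefore not be uniform in $x$, so $M_2(\trans{t}{f})$ (a supremum over $x$) would not be controlled, and the factor $r(0)$ in the second and third summands cannot arise this way. Moreover, in your (T2) there is no multiplying $\sigma(\process{s}{x})$, so nothing cancels one of the two $\sigma^{-1}$'s in $\grad\sigma^{-1}=-\sigma^{-1}(\grad\sigma)\sigma^{-1}$: you would get $M_0(\sigma^{-1})^2M_1(\sigma)$, not the claimed $M_0(\sigma^{-1})M_1(\sigma)$, and your (T3) would carry an extra $M_0(\sigma^{-1})$ absent from the stated bound.

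The missing device, which the paper uses, is to replace $f(\process{t}{x})$ not by a centered increment but by its martingale representation: applying \Ito's formula to $(s,y)\mapsto\transarg{t-s}{f}{y}$ gives $f(\process{t}{x})=\transarg{t}{f}{x}+\int_0^t\inner{\grad\transarg{t-s}{f}{\process{s}{x}}}{\sigma(\process{s}{x})\dW_s}$. Dynkin's formula removes the deterministic term (the other factor is a mean-zero stochastic integral), and the \Ito isometry converts the product of the two stochastic integrals into $\frac1t\int_0^t\Earg{\inner{\grad\transarg{t-s}{f}{\process{s}{x}}}{\sigma(\process{s}{x})(\cdots)}}\ds$, where $\grad\trans{t-s}{f}$ is bounded uniformly in $x$ by $M_1(f)r(t-s)$ (your Step 1), the prefactor $\sigma(\process{s}{x})$ supplies exactly the needed cancellation in (T2) and (T3), and monotonicity of $s\mapsto r(t-s)e^{s\gamma_2}$ (resp.\ $e^{s\gamma_4}$) yields the $r(0)$ factors. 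The rest of your plan is sound: (T1) is handled as in the paper, and for the variation-flow moments Dynkin's formula plus Gr\"onwall suffices (BDG is unnecessary since only fixed-time second moments enter). One further point: applying the formula directly to $g=\trans{t-t_0}{f}$ presupposes $\trans{t-t_0}{f}\in C^2$ with bounded derivatives; the paper first proves the bound for arbitrary such $f$ with $t_0=t$ and only then invokes the Markov property, which avoids this circularity.
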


The dominated convergence theorem now implies that the Hessian of $u_h$ is obtained
by differentiating twice under the integral sign.  
The advertised bound \eqnref{stein-factor2} on $\Hess u_h$ follows by replacing the infimum on the right-hand side of the semigroup bound \eqnref{semigroup-hessian-bound} with the selection $t_0 = \min(t,1)$, applying the bound $e^{\min(t,1)\gamma_\rho}\leq e^{\gamma_\rho}$ for each $\gamma_\rho$ and $t$, and integrating the result over $t$.

\paragraph{\Holder continuity of $\Hess u_h$}
Finally, instantiate the additional preconditions of \eqnref{stein-factor3-holder},
and fix any $\iota \in (0,1)$. 
The integral representation \eqnref{stein-function} of $u_h$, the dominated convergence theorem, and Jensen's inequality imply
\balignst
M_{1-\iota}(\Hess u_h)
= M_{1-\iota}\left (-\int_0^{\infty} \Hess\trans{t}{h}\dt\right )
\le \int_0^{\infty} M_{1-\iota}(\Hess \trans{t}{h})\dt.
\ealignst
When $t \leq 1$, 
a seminorm interpolation lemma (\lemref{interpol} in the supplement),
a semigroup third derivative estimate (\lemref{ThirdDerivFlow} in the supplement) with $t_0 =\min(t,1)$,
and the semigroup second derivative estimate of \lemref{semigroup-derivative-estimates} with $t_0 =\min(t,1)$
imply
\balignst
M_{1-\iota}(\Hess \trans{t}{h}) 
	\leq M_1(h)2^{\iota}M_0(\Hess \trans{t}{h})^{\iota} M_1(\Hess
\trans{t}{h})^{1-\iota} 
	\leq M_1(h) t^{\iota/2 - 1} / K_1
\ealignst
for some constant $K_1 > 0$ depending only on $M_{1:3}(b),M_{1:3}(\sigma),
M_0(\sigma^{-1}),$ and $r$. Thus
$\int_{0}^{1} M_{1-\iota}(\Hess \trans{t}{h})\dt \leq \frac{2M_1(h)}{K_1 \iota}$.
For $t > 1$, \lemsref{interpol}, \lemssref{ThirdDerivFlow}, and
\lemssref{semigroup-derivative-estimates}
and the integrability of $r$ yield
\balignst
\int_{1}^{\infty}M_{1-\iota}(\Hess \trans{t}{h})\dt \leq
M_1(h)\frac{2}{K_2}\int_{1}^{\infty}r(t-1)\dt = M_1(h)\frac{2}{K_2} s_{r}
\ealignst
for a constant $K_2 > 0$ again depending only on
$M_{1:3}(b),M_{1:3}(\sigma), M_0(\sigma^{-1}),$ and $r$. Combining these bounds and
choosing $K=\min(K_1, K_2)/2$ completes the proof. 
An explicit constant $K$ can be obtained by tracing constants through the proof of
\lemref{ThirdDerivFlow}.
\section{Proof of \lemreflow{semigroup-derivative-estimates}}%
\label{sec:semigroup-derivative-estimates-proof}
Fix any $x\in\reals^d$ and $f : \reals^d \to \reals$ in $C^2$ %
with bounded first and second derivatives, and let $(\process{t}{x})_{t\geq0}$ be an \Ito diffusion solving the stochastic differential equation \eqnref{diffusion}
with starting point $\process{0}{x}=x$, underlying Wiener process $(W_t)_{t\geq 0}$, and transition semigroup $(\trans{t}{})_{t\geq0}$.
Our proof is divided into five pieces establishing, for each $t > 0$, 
the Lipschitz continuity of $\trans{t}{f}$,
the Lipschitz continuity of $\grad \trans{t}{f}$,
the continuity of $\Hess \trans{t}{f}$,
an initial bound on $\Hess \trans{t}{f}$,
and the infimal bound \eqnref{semigroup-hessian-bound} on $\Hess \trans{t}{f}$.

\paragraph{Lipschitz continuity of $\trans{t}{f}$}
The semigroup gradient bound \eqnref{semigroup-gradient-bound}
follows from the Lipschitz continuity of $f$ and the definitions of the Wasserstein decay rate and the Wasserstein distance, as, for any $y\in\reals^d$ and $t\geq 0$,
\baligns
\transarg{t}{f}{x} - \transarg{t}{f}{y}
	&= \Earg{f(\process{t}{x}) - f(\process{t}{y})}
	\leq M_1(f) \twowass(\delta_x \trans{t},\,\delta_y \trans{t}) \\
	&\leq M_1(f) r(t)\, \twowass(\delta_x,\delta_y)
	= M_1(f) r(t) \twonorm{x-y}.
\ealigns
\paragraph{Lipschitz continuity of $\grad \trans{t}{f}$}
Fix any $v,v'\in\reals^d$.
Under our smoothness assumptions on $b$ and $\sigma$, \citep[Theorem V.40]{Protter2005} implies that $(\process{t}{x})_{t\geq0}$ is twice continuously differentiable in $x$.  The first directional \emph{derivative flow} $(V_{t,v})_{t\geq 0}$ solves the \emph{first variation equation},
\balign
\label{eqn:first-variation}
dV_{t,v} = \grad b(\process{t}{x}) V_{t,v} \dt + \grad \sigma(\process{t}{x}) V_{t,v} \dW_t
\qtext{with} V_{0,v} = v,
\ealign
obtained by formally differentiating the equation \eqnref{diffusion} defining $(\process{t}{x})_{t\geq 0}$ with respect to $x$ in the direction $v$.
The second directional derivative flow\\ $(U_{t,v,v'})_{t\geq 0}$ solves 
the \emph{second variation equation},
\balign\notag
dU_{t,v,v'} &= (\grad b(\process{t}{x}) U_{t,v,v'} + \Hess b(\process{t}{x})[V_{t,v'}]V_{t,v})\dt  \\ 
	&+ (\grad \sigma(\process{t}{x}) U_{t,v,v'} + \Hess \sigma(\process{t}{x})[V_{t,v'}]V_{t,v}) \dW_t
\ \text{ with }\ U_{0,v,v'} = 0, \label{eqn:second-variation}
\ealign
obtained by differentiating \eqnref{first-variation} with respect to $x$ in the direction $v'$.

Since $f$ has bounded first and second derivatives, the dominated convergence
theorem implies that, for each $t \geq 0$, $\trans{t}{f}$ is twice differentiable with
\balign
\notag
\inner{\grad \transarg{t}{f}{x}}{v} &= \Earg{\inner{\grad f(\process{t}{x})}{V_{t,v}}} \qtext{and}\\
v'^\top\Hess \transarg{t}{f}{x}v&= \Earg{V_{t,v'}^\top\Hess f(\process{t}{x})V_{t,v} + \inner{\grad f(\process{t}{x})}{U_{t,v,v'}}} 
\label{eqn:semigroup-hessian}
\ealign
obtained by differentiating under the integral sign.
\lemref{derivative-flow-bounds}, proved in \secref{derivative-flow-bounds}, 
justifies the exchanges of derivative and expectation by ensuring that the derivative flows have moments bounded uniformly in $x$.
\begin{lemma}[Derivative flow bounds]
\label{lem:derivative-flow-bounds}
Suppose that $(\process{t}{x})_{t\geq0}$ is an \Ito diffusion with 
starting point $\process{0}{x} = x \in\reals^d$, 
driving Wiener process $(W_t)_{t\geq 0}$,
and Lipschitz drift and diffusion coefficients $b$ and 
$\sigma$ with Lipschitz gradients and locally Lipschitz second derivatives.
If $(V_{t,v})_{t\geq 0}$ and $(U_{t,v,v'})_{t\geq 0}$ respectively solve the stochastic
differential equations \eqnref{first-variation} and \eqnref{second-variation} for $v,v'\in\reals^d$,
then, for any $\rho \geq 2$,
\balign
\label{eqn:derivative-flow-bound}
\Earg{\twonorm{V_{t,v}}^\rho} 
	&\leq \twonorm{v}^\rho\, e^{t \gamma_\rho} \qtext{and}\\
\label{eqn:Hessian-flow-bound}
\Earg{\twonorm{U_{t,v,v'}}^2} 
	&\leq \alpha\twonorm{v}^2\twonorm{v'}^2 t e^{t \gamma_4}
\ealign
for 
$\gamma_\rho \defeq \rho M_1(b)+ \frac{\rho^2-2\rho}{2}M_1(\sigma)^2+\frac{\rho}{2} F_1(\sigma)^2$
and 
$\alpha \defeq \frac{M_2(b)^2}{2M_1(b)+4M_1(\sigma)^2}+2F_2(\sigma)^2$.
\end{lemma}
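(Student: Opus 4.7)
The plan is to apply \Ito's formula to the functionals $\twonorm{V_{t,v}}^\rho$ and $\twonorm{U_{t,v,v'}}^2$, collect drift terms tight enough to match the claimed constants, and close with Gronwall. Existence and almost-sure twice differentiability of $(\process{t}{x})_{t\geq 0}$ in $x$, and thus well-posedness of the variation equations \eqnref{first-variation} and \eqnref{second-variation}, is already secured by \cite[Thm.~V.40]{Protter2005}, so we may compute freely with $V_{t,v}$ and $U_{t,v,v'}$.

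For \eqnref{derivative-flow-bound}, I would apply \Ito to $f(V)=\twonorm{V}^\rho$, which is $C^2$ on $\reals^d$ for $\rho\geq 2$ with $\grad f=\rho\twonorm{V}^{\rho-2}V$ and $\Hess f=\rho\twonorm{V}^{\rho-2}I+\rho(\rho-2)\twonorm{V}^{\rho-4}VV^\top$ (extended by $0$ at $V=0$ when $\rho\in[2,4)$). Writing $A\defeq\grad\sigma(\process{t}{x})[V_{t,v}]\in\reals^{d\times m}$, the drift of $d\twonorm{V_{t,v}}^\rho$ becomes
\balignst
\rho\twonorm{V}^{\rho-2}V^\top\grad b(\process{t}{x})V+\tfrac{\rho}{2}\twonorm{V}^{\rho-2}\fronorm{A}^2+\tfrac{\rho(\rho-2)}{2}\twonorm{V}^{\rho-4}\twonorm{A^\top V}^2.
\ealignst
Bounding $V^\top\grad b(z)V\leq M_1(b)\twonorm{V}^2$, $\fronorm{A}^2\leq F_1(\sigma)^2\twonorm{V}^2$, and $\twonorm{A^\top V}^2\leq M_1(\sigma)^2\twonorm{V}^4$ collapses the drift to exactly $\gamma_\rho\twonorm{V}^\rho$. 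A standard localization $\tau_n=\inf\{t:\twonorm{V_t}\geq n\}$ plus Fatou kills the martingale contribution in expectation, producing the integral Gronwall inequality $\Earg{\twonorm{V_t}^\rho}\leq\twonorm{v}^\rho+\gamma_\rho\int_0^t\Earg{\twonorm{V_s}^\rho}\ds$, from which \eqnref{derivative-flow-bound} follows.

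For \eqnref{Hessian-flow-bound}, apply \Ito to $\twonorm{U_{t,v,v'}}^2$. The drift reads
\balignst
2U^\top\grad b(\process{t}{x})U+2U^\top\Hess b(\process{t}{x})[V_{t,v'},V_{t,v}]+\fronorm{\grad\sigma(\process{t}{x})[U]+\Hess\sigma(\process{t}{x})[V_{t,v'},V_{t,v}]}^2.
\ealignst
The key step is to split the two cross terms with precisely tuned Young weights: apply $2ab\leq\epsilon a^2+b^2/\epsilon$ to $2U^\top\Hess b(\process{t}{x})[V',V]$ with $\epsilon=2M_1(b)+4M_1(\sigma)^2$, and $\fronorm{X+Y}^2\leq 2\fronorm{X}^2+2\fronorm{Y}^2$ to the Frobenius term. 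Combined with $\opnorm{\grad b}\leq M_1(b)$, $\fronorm{\grad\sigma[V]}\leq F_1(\sigma)\twonorm{V}$, $\twonorm{\Hess b[V',V]}\leq M_2(b)\twonorm{V'}\twonorm{V}$, and $\fronorm{\Hess\sigma[V',V]}\leq F_2(\sigma)\twonorm{V'}\twonorm{V}$, these choices make the coefficient of $\twonorm{U}^2$ equal to $2M_1(b)+(2M_1(b)+4M_1(\sigma)^2)+2F_1(\sigma)^2=\gamma_4$ and the coefficient of $\twonorm{V_{t,v'}}^2\twonorm{V_{t,v}}^2$ equal to $M_2(b)^2/(2M_1(b)+4M_1(\sigma)^2)+2F_2(\sigma)^2=\alpha$.

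After localizing as before, this yields $\frac{d}{dt}\Earg{\twonorm{U_t}^2}\leq\gamma_4\Earg{\twonorm{U_t}^2}+\alpha\Earg{\twonorm{V_{t,v'}}^2\twonorm{V_{t,v}}^2}$ with $U_0=0$. Variation of parameters gives $\Earg{\twonorm{U_t}^2}\leq\alpha\int_0^t e^{\gamma_4(t-s)}\Earg{\twonorm{V_{s,v'}}^2\twonorm{V_{s,v}}^2}\ds$, and Cauchy--Schwarz together with \eqnref{derivative-flow-bound} at $\rho=4$ delivers $\Earg{\twonorm{V_{s,v'}}^2\twonorm{V_{s,v}}^2}\leq\twonorm{v}^2\twonorm{v'}^2 e^{s\gamma_4}$, so the integral collapses to $te^{t\gamma_4}$ and \eqnref{Hessian-flow-bound} follows. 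The only real obstacle is the arithmetic bookkeeping needed to line up the Young constants so that the $\twonorm{U}^2$ coefficient is exactly $\gamma_4$ and the forcing coefficient is exactly $\alpha$; the localization, Gronwall, and Cauchy--Schwarz steps are routine.
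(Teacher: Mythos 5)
Your proposal is correct and follows essentially the same route as the paper's proof: apply \Ito/Dynkin to $\twonorm{V_{t,v}}^\rho$ and $\twonorm{U_{t,v,v'}}^2$, bound the drift terms via $M_1(b)$, $M_1(\sigma)$, $F_1(\sigma)$, $M_2(b)$, $F_2(\sigma)$, use Young's inequality with the same weight $\epsilon = 2M_1(b)+4M_1(\sigma)^2$, control $\Earg{\twonorm{V_{s,v}}^2\twonorm{V_{s,v'}}^2}$ by Cauchy--Schwarz and the $\rho=4$ bound, and close with \Gronwall. Your variation-of-parameters form of \Gronwall is just a streamlined version of the paper's two-step \Gronwall argument, so there is nothing substantively different to flag.
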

\noindent
Since $\grad f$ and $\Hess f$ are bounded,
and $(V_{t,v})_{t\geq 0}, (V_{t,v'})_{t \geq 0},$ and $(U_{t,v,v'})_{t\geq 0}$ have second moments bounded uniformly in $x$ by \lemref{derivative-flow-bounds},
the Hessian formula \eqnref{semigroup-hessian} implies that
$\Hess \trans{t}{f}$ is bounded and hence that $\grad \trans{t}{f}$ is Lipschitz continuous for each $t \geq 0$.

\paragraph{Continuity of $\Hess \trans{t}{f}$}
Hereafter we assume that $M_0(\sigma^{-1}) < \infty$, as the semigroup Hessian bound \eqnref{semigroup-hessian-bound} is otherwise vacuous.

The Lipschitz continuity of $f$ and the \Ito diffusion moment bound of \citep[Thm. 3.4, part 4]{Khasminskii11}
together imply that
\[
	\Earg{f(\process{t}{x})^2} \leq \Earg{(|f(x)| + \twonorm{\process{t}{x} - x} M_1(f))^2} < \infty
\]
for all $t\geq 0$.
Since $\sigma^{-1}$ is bounded, and $\grad b$ and $\grad \sigma$ are bounded and Lipschitz, 
\cite[Prop. 3.2]{FournieLaLeLiTo1999} gives the following Bismut-Elworthy-Li-type formula for the directional derivative of $\trans{t}{f}$ for each $t > 0$:
\balign
\textstyle
\inner{\grad{\transarg{t}{f}{x}}}{v} = \frac{1}{t} \Earg{ f(\process{t}{x}) \int_0^t \inner{\sigma^{-1}(\process{s}{x})V_{s,v}}{\dW_s} },\notag
\ealign
By interchanging derivative and integral, the dominated convergence theorem now delivers the Hessian expression
\balign
\label{eqn:BEL2}
&v'^\top\Hess \transarg{t}{f}{x}v 
	= \Earg{J_{1,x} + J_{2,x} + J_{3,x}} \qtext{for}\\ %
&J_{1,x} \textstyle
	\defeq \frac{1}{t} {\inner{\grad {f}({\process{t}{x}})}{V_{t,v'}}
	\int_0^{t}\inner{\sigma^{-1}(\process{s}{x})V_{s,v}}{\dW_s}}, \notag \\	
&J_{2,x} \textstyle
	\defeq \frac{1}{t} 
	{f(\process{t}{x}) \int_0^t \inner{\grad \sigma^{-1}(\process{s}{x})[V_{s,v'}] V_{s,v}}{\dW_s}}, \qtext{and}\notag \\
&J_{3,x} \textstyle
	\defeq \frac{1}{t}
	{f(\process{t}{x}) \int_0^t \inner{\sigma^{-1}(\process{s}{x})U_{s,v,v'}}{\dW_s}}, \notag 
\ealign
for each $t>0$, provided that 
$J_{1,x}, J_{2,x}$, and $J_{3,x}$ are continuous in $x$.
The requisite continuity follows from the 
Lipschitz continuity of $\grad f$ and $f$, 
the boundedness of $\sigma^{-1}$, $\grad \sigma$, and $\Hess \sigma$, 
and the controlled moment growth and \Holder continuity of $\fullprocess{x}$, $(V_{t,v})_{t\geq 0}, (V_{t,v'})_{t \geq 0},$ and $(U_{t,v,v'})_{t\geq 0}$ as functions of $x$ \citep[Theorem V.40]{Protter2005}.  The dominated convergence theorem further implies that $\Hess \trans{t}{f}$ is continuous for each $t > 0$.

\paragraph{Initial bound on $\Hess \trans{t}{f}$}
Now, we fix any $t > 0$ and turn to bounding $\Hess \trans{t}{f}$ in terms of $M_1(f)$, by bounding the expectations of $J_{1,x}, J_{2,x}$, and $J_{3,x}$ of \eqnref{BEL2} in turn.

To control $\Earg{J_{1,x}}$, we apply Cauchy-Schwarz, the \Ito isometry \citep[Eqs. 7.1 and 7.2]{Friedman1975Stoch}, 
the derivative flow bound \eqnref{derivative-flow-bound},
and the fact $e^{s\gamma_2} \leq e^{t\gamma_2}$ for all $s \leq t$
to obtain 
\baligns
\Earg{J_{1,x} }
	&\textstyle\leq \frac{1}{t} \sqrt{\Earg{\inner{\grad f({\process{t}{x}})}{V_{t,v'}}^2}
		\E[{(\int_0^{t}\inner{\sigma^{-1}(\process{s}{x})V_{s,v}}{\dW_s})^2}]} \notag \\
	&\textstyle\leq \frac{1}{t} M_1({f})\sqrt{\Earg{\twonorm{V_{t,v'}}^2}
		\int_0^{t}\Earg{\twonorm{\sigma^{-1}(\process{s}{x})V_{s,v}}^2}\ds} \notag \\	
	&\textstyle\leq \frac{1}{t} M_1({f})M_0(\sigma^{-1})\sqrt{\Earg{\twonorm{V_{t,v'}}^2}
		\int_0^{t}\Earg{\twonorm{V_{s,v}}^2}\ds} \notag \\	
	&\textstyle\leq \frac{1}{t} M_1(f) M_0(\sigma^{-1})\twonorm{v'}\twonorm{v}\sqrt{
		e^{t \gamma_2}\int_0^{t}e^{s \gamma_2} \ds} \\
	&\textstyle\leq \sqrt{\frac{1}{t}} e^{t\gamma_2} M_1(f)  M_0(\sigma^{-1})\twonorm{v'}\twonorm{v},
\ealigns
where we have adopted the definition of $\gamma_\rho$ given in \lemref{derivative-flow-bounds}.

To control $\Earg{J_{2,x}}$, we will first rewrite the unbounded quantity $f(\process{t}{x})$
in terms of more manageable semigroup gradients.
To this end, we note that, since $\trans{t-s}{f} \in C^2$ for all $s \in [0,t]$,
we may apply \Ito's formula  \citep[Thm. 7.1]{Friedman1975Stoch}  to $(s,x) \mapsto \trans{t-s}{f}(x)$ to obtain
the identity
\balign\label{eqn:bel-ito}
\textstyle
f(\process{t}{x}) = \transarg{t}{f}{x} + \int_0^t \inner{\grad \transarg{t-s}{f}{\process{s}{x}}}{\sigma(\process{s}{x}) \dW_s}.
\ealign
Now we may rewrite $\Earg{J_{2,x}}$ as
\baligns
\Earg{J_{2,x}}
	=&\textstyle\ \frac{1}{t}\E\left[\transarg{t}{f}{x} \int_0^t \inner{\grad \sigma^{-1}(\process{s}{x})[V_{s,v'}]V_{s,v}}{\dW_s}\right. \\
	&\textstyle+ \left.\int_0^t \inner{\grad \transarg{t-s}{f}{\process{s}{x}}}{\sigma(\process{s}{x}) \dW_s} \int_0^t \inner{\grad \sigma^{-1}(\process{s}{x})[V_{s,v'}]V_{s,v}}{\dW_s}\right] \\
	=&\textstyle\ \frac{1}{t}\Earg{\int_0^t\inner{\grad \transarg{t-s}{f}{\process{s}{x}}}{\sigma(\process{s}{x})\grad \sigma^{-1}(\process{s}{x})[V_{s,v'}]V_{s,v}} \ds } \\
	=&\textstyle -\frac{1}{t}\Earg{\int_0^t\inner{\grad \transarg{t-s}{f}{\process{s}{x}}}{\grad \sigma(\process{s}{x})[V_{s,v'}]\sigma^{-1}(\process{s}{x})V_{s,v}} \ds },
\ealigns
where we have used Dynkin's formula  \citep[Eq. 7.11]{Friedman1975Stoch}, the \Ito isometry,  and the chain rule, 
\begin{equation}
\grad \sigma^{-1}(x)[v] = -\sigma^{-1}(x)\grad \sigma(x)[v]\sigma^{-1}(x) \label{eq:volDeriv}.
\end{equation}
Finally, we bound $\Earg{J_{2,x}}$ using Cauchy-Schwarz, the semigroup gradient bound \eqnref{semigroup-gradient-bound}, 
the derivative flow bound \eqnref{derivative-flow-bound}, and the fact that $s\mapsto r(t-s)e^{s\gamma_2}$ is increasing:
\baligns
\Earg{J_{2,x}} 
	&\textstyle\leq \frac{1}{t} M_1(\sigma)M_0(\sigma^{-1})
		\int_0^{t}M_1(\trans{t-s}{f})\Earg{\twonorm{V_{s,v'}}\twonorm{V_{s,v}}} \ds \notag \\
	&\textstyle\leq \frac{1}{t} M_1(\sigma)M_0(\sigma^{-1})
		\int_0^{t}M_1(\trans{t-s}{f})\sqrt{\Earg{\twonorm{V_{s,v'}}^2}\Earg{\twonorm{V_{s,v}}^2}} \ds \notag \\
	&\textstyle\leq \frac{1}{t} M_1(\sigma)M_0(\sigma^{-1})
		M_1(f) \twonorm{v'}\twonorm{v} \int_0^{t} r(t-s) e^{s\gamma_2} \ds \notag\\
	&\textstyle\leq r(0) e^{t\gamma_2} M_1(\sigma)M_0(\sigma^{-1})
		M_1(f) \twonorm{v'}\twonorm{v}.
\ealigns

To control $\Earg{J_{3,x}}$, we again appeal to Dynkin's formula and the \Ito isometry to obtain
\baligns
\Earg{J_{3,x}}
	=&\textstyle\ \frac{1}{t} \E\left[\transarg{t}{f}{x} \int_0^t \inner{\sigma^{-1}(\process{s}{x})U_{s,v,v'}}{\dW_s}\right.  \\
	&\textstyle+ \left.\int_0^t \inner{\grad \transarg{t-s}{f}{\process{s}{x}}}{\sigma(\process{s}{x}) \dW_s} \int_0^t \inner{\sigma^{-1}(\process{s}{x})U_{s,v,v'}}{\dW_s}\right] \\
	=&\textstyle\ \Earg{\int_0^{t}\inner{\grad \transarg{t-s}{f}{\process{s}{x}}}{U_{s,v,v'}} \ds},
\ealigns
and we bound this expression using Cauchy-Schwarz, Jensen's inequality, the semigroup gradient bound \eqnref{semigroup-gradient-bound}, the second derivative flow bound \eqnref{Hessian-flow-bound}, and the fact that $s\mapsto r(t-s)e^{s\gamma_4}$ is increasing:
\baligns
\Earg{J_{3,x}}
	&\textstyle\leq \frac{1}{t} \int_0^{t}M_1(\trans{t-s}{f})\Earg{\twonorm{U_{s,v,v'}}} ds
	\leq \frac{1}{t} \int_0^{t}M_1(\trans{t-s}{f})\sqrt{\Earg{\twonorm{U_{s,v,v'}}^2}} ds \\ 
	&\textstyle\leq \frac{1}{t} M_1(f) \sqrt{\alpha}\twonorm{v'}\twonorm{v} \int_0^{t} r(t-s)  \sqrt{s} e^{s \gamma_4} \ds \\
	&\textstyle\leq \frac{2}{3}\sqrt{t}\, r(0) e^{t \gamma_4} M_1(f) \sqrt{\alpha}\twonorm{v'}\twonorm{v},
\ealigns
where $\alpha$ is defined in \lemref{derivative-flow-bounds}.
The advertised result \eqnref{semigroup-hessian-bound} for $t_0 = t$ follows by summing the bounds developed for $\Earg{J_{1,x}},\Earg{J_{2,x}}$, and $\Earg{J_{3,x}}$.

\paragraph{Infimal bound on  $\Hess \trans{t}{f}$}
To obtain the infimum over $t_0 \in (0,t]$ in \eqnref{semigroup-hessian-bound}, we adapt an argument of \cite[Prop.~1.5.1]{Cerrai2001}.
Specifically, fix any $t_0 \in (0,t]$.
Our work thus far shows that $v'^\top\Hess \transarg{t_0}{\tilde f}{x}v  \leq M_1(\tilde f) \zeta(t_0)$ for a real-valued function $\zeta$ and $\tilde f \in C^2$ with bounded first and second derivatives.
Since we now know that $\trans{t-t_0}{f} \in C^2$ with bounded first and second derivatives, the Markov property of the diffusion and the first derivative bound \eqnref{semigroup-gradient-bound} yield
\balign
 &v'^\top\Hess \transarg{t}{f}{x}v
	= v'^\top\Hess \transarg{t_0}{\trans{t-t_0}{f}}{x}v \notag \\
	&\leq M_1(\trans{t-t_0}{f}) \zeta(t_0) \leq M_1(f) r(t-t_0) \zeta(t_0). \notag
\ealign
\subsection{Proof of \lemreflow{derivative-flow-bounds}: Derivative flow bounds}
\label{sec:derivative-flow-bounds}
Fix any $\rho \geq 2$ and $v\in\reals^d$. Since Dynkin's formula and Cauchy-Schwarz give
\baligns
&\textstyle\Earg{\twonorm{V_{t,v}}^\rho}
	= \twonorm{v}^\rho 
	+ \E\left[\int_0^t \rho\inner{V_{s,v} \twonorm{V_{s,v}}^{\rho-2}}{\grad b(\process{s}{x}) V_{s,v}}\right. \\
	&\textstyle+ \frac{\rho}{2}\twonorm{V_{s,v}}^{\rho-4}((\rho-2)\twonorm{V_{s,v}^\top \grad \sigma(\process{s}{x})[V_{s,v}]}^2
	+ \left.\twonorm{V_{s,v}}^2\fronorm{\grad \sigma(\process{s}{x})[V_{s,v}]}^2) \ds\right] \\
	&\textstyle\leq \twonorm{v}^\rho
	+\int_0^t (\rho M_1(b)+ \frac{\rho^2-2\rho}{2}M_1(\sigma)^2+\frac{\rho}{2} F_1(\sigma)^2) \Earg{\twonorm{V_{s,v}}^{\rho}}\ds,
\ealigns
the advertised result \eqnref{derivative-flow-bound} follows from \Gronwall's inequality.

Now fix any $v, v'\in\reals^d$, and define $U_t \defeq U_{t,v,v'}$.
Dynkin's formula and multiple applications of Cauchy-Schwarz and Young's inequality give
\balignst
\Earg{\twonorm{U_t}^2}
    =\ &\E\left[\int_0^t 2\inner{U_s}{\grad b(\process{s}{x})U_s + \Hess b(\process{s}{x})[V_{s,v'}]V_{s,v}}\right. \\
    &+ \left.\fronorm{\grad \sigma(\process{s}{x})[U_s]+\Hess \sigma(\process{s}{x})[V_{s,v'}]V_{s,v}}^2\ds\right]  \\
    \leq\ &\E\left[\int_0^t 2\twonorm{U_s}^2M_1(b) + 2\twonorm{U_s}\twonorm{V_{s,v}}\twonorm{V_{s,v'}}M_2(b)\right. \\
    &+ \left.2\fronorm{\grad \sigma(\process{s}{x})[U_s]}^2+2\fronorm{\Hess \sigma(\process{s}{x})[V_{s,v'}]V_{s,v}}^2 \ds\right] \\
    \leq &\int_0^t (2M_1(b)+2F_1(\sigma)^2+\eps)\Earg{\twonorm{U_s}^2} \\
    &+ (M_2(b)^2/\eps+2F_2(\sigma)^2)\Earg{\twonorm{V_{s,v}}^2\twonorm{V_{s,v'}}^2} \ds
\ealignst
for any $\eps > 0$.
Letting $\gamma_\rho = \rho M_1(b)+ \frac{\rho^2-2\rho}{2}M_1(\sigma)^2+\frac{\rho}{2} F_1(\sigma)^2$,
we see that, by Cauchy-Schwarz and our derivative flow bound \eqnref{derivative-flow-bound},
\balignst
\int_0^t\Earg{\twonorm{V_{s,v}}^2\twonorm{V_{s,v'}}^2} \ds
	&\leq \int_0^t\sqrt{\Earg{\twonorm{V_{s,v}}^4}\Earg{\twonorm{V_{s,v'}}^4}} \ds \\
	&\leq \int_0^t\twonorm{v}^2\twonorm{v'}^2\, e^{s \gamma_4} \ds
	= \twonorm{v}^2\twonorm{v'}^2\frac{e^{t \gamma_4}-1}{\gamma_4}.
\ealignst
Hence, if we choose $\epsilon = \gamma_4 - (2M_1(b)+2F_1(\sigma)^2)$ 
and define $\alpha = M_2(b)^2/\epsilon+2F_2(\sigma)^2$ 
we may write
\balignst
\Earg{\twonorm{U_t}^2}
    \leq \alpha\twonorm{v}^2\twonorm{v'}^2\frac{e^{t \gamma_4}-1}{\gamma_4}
    + \int_0^t \gamma_4\Earg{\twonorm{U_s}^2} \ds.
\ealignst
Gronwall's inequality now yields the result \eqnref{Hessian-flow-bound} via
\balignst
\Earg{\twonorm{U_t}^2}
    &\leq \alpha\twonorm{v}^2\twonorm{v'}^2\left(\frac{e^{t \gamma_4}-1}{\gamma_4}
    + \int_0^t \frac{e^{s \gamma_4}-1}{\gamma_4}\gamma_4 e^{(t-s)\gamma_4 } \ds\right) 
    = \alpha\twonorm{v}^2\twonorm{v'}^2 t e^{t \gamma_4}.
\ealignst
\section{Proof of \thmreflow{constant-lower-bound}}
\label{sec:constant-lower-bound-proof}

We first derive the result for $\norm{\cdot} = \twonorm{\cdot}$.
Without loss of generality, assume $h \in \twowassset$ with $\Esubarg{P}{h(\PVAR)} = 0$.  Our high-level strategy is to relate the Wasserstein distance to the Stein discrepancy via the Stein equation \eqnref{stein-equation} with diffusion Stein operator $\diffusion{}$ \eqnref{diffusion-operator}.
Since the infinitesimal generator $\generator{}$ \eqnref{inf-generator} has the form \eqnref{diff-generator} by \thmref{invariance},
\thmref{stein-factors} implies that there exists a continuously differentiable solution 
$g_h$ to the the Stein equation 
$h(x) = \diffarg{g_h}{x}$
satisfying $M_0(g_h) \leq \sr M_1(h) \leq \sr$.
Since boundedness alone is insufficient to declare that $g_h$ falls into a scaled copy of the classical Stein set $\steinset$, we will develop a smoothed version of the Stein solution with greater regularity.

Since $a$ and $c$ are constant, $b(x)=\half(a+c)\grad \log p(x)$. Fix any $s > 0$ and consider the convolution $g_{h,s}(x) \defeq \Earg{g_{h}(x+sG)}$.  If the smoothing level $s$ is small, the Lipschitz continuity of $h$ implies that that $\diffarg{g_{h,s}}{x}$ provides a close approximation to $h(x)$ for each $x\in\reals^d$:
\balign \label{eqn:constant-smooth-stein-bound}
&h(x)
	\leq \Earg{h(x+sG)} + M_1(h)s\Earg{\twonorm{G}}\\ \notag
	&\leq  \Earg{\textfrac{1}{p(x+sG)}\inner{\grad}{p(x+sG)(a+c)g_h(x+sG)}} + s\Earg{\twonorm{G}} \\ \notag
	&\leq  2\Earg{\inner{b(x+sG)}{g_h(x+sG)}} + \Earg{\inner{a + c}{\grad g_h(x + sG)}} +  s\Earg{\twonorm{G}}\\ \notag
	&\leq  \diffarg{g_{h,s}}{x}+ s \Earg{\twonorm{G}}(1 + 2M_1(b) M_0(g_{h})).
\ealign
Moreover, by our next lemma, proved in \secref{convolution-smoothing}, the smoothed Stein solution admits a bounded Lipschitz gradient 
$\grad g_{h,s}(x) = \Earg{\grad g_{h}(x+sG)}$.
\begin{lemma}[Smoothing by Gaussian convolution] \label{lem:convolution-smoothing}
Let $G\in\reals^d$ be a standard normal random vector, and fix $s > 0$.
If $f : \reals^d \to \reals$ is bounded and measurable, and $f_s(x) \defeq \Earg{f(x+s G)}$, then
\baligns\textstyle
	M_0(f_s)
		\leq M_0(f),\quad
	M_1(f_s)
		\leq \sqrt{\frac{2}{\pi}} \frac{M_0(f)}{s},\qtext{and}
	M_2(f_s)
		\leq \sqrt{2}\frac{M_0(f)}{s^2} .
\ealigns
If, additionally, $f \in C^1$, then $\grad f_s(x) = \Earg{\grad f(x+s G)}$.
\end{lemma}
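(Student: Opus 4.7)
\textbf{Proof proposal for \lemref{convolution-smoothing}.}
The plan is to pass $f_s$ through the Gaussian density $\phi_s$ of $sG$ by writing
$f_s(x)=\int f(u)\phi_s(u-x)\,du,$
differentiate under the integral sign (valid because $f$ is bounded and $\phi_s$ is smooth with Schwartz decay), and then exploit explicit Gaussian derivative identities to move all the regularity from $f$ onto $\phi_s$. The pointwise bound $M_0(f_s)\le M_0(f)$ follows immediately from Jensen's inequality.

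For $M_1(f_s)$, using $\grad\phi_s(w)=-(w/s^2)\phi_s(w)$ and a change of variables $u=x+sy$, I would obtain the identity
\[
\grad f_s(x)=\tfrac{1}{s}\Earg{f(x+sG)\,G}.
\]
Taking the scalar-valued operator norm $\twonorm{\cdot}$ of this vector, bounding $|f(x+sG)|\le M_0(f)$ pointwise, applying Cauchy--Schwarz in the form $\twonorm{\grad f_s(x)}=\sup_{\twonorm v=1}|\inner{v}{\grad f_s(x)}|$, and using $\Earg{|\inner v G|}=\sqrt{2/\pi}$ for any unit $v$ (since $\inner v G\sim \Gsn(0,1)$) gives $M_1(f_s)\le \sqrt{2/\pi}\,M_0(f)/s$.

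For $M_2(f_s)$, I would differentiate once more, now using $\Hess\phi_s(w)=\frac{1}{s^2}(\frac{ww^\top}{s^2}-I)\phi_s(w)$, to obtain
\[
\Hess f_s(x)=\tfrac{1}{s^2}\Earg{f(x+sG)(GG^\top - I)}.
\]
For unit $v,w$, $\inner v{G}\inner w G-\inner v w$ is mean zero, so Cauchy--Schwarz gives $|v^\top\Hess f_s(x)\,w|\le\frac{M_0(f)}{s^2}\sqrt{\Var(\inner v G\inner w G)}$. The main (mild) obstacle is evaluating this variance: I would appeal to Isserlis' theorem on the jointly Gaussian pair $(\inner v G,\inner w G)$ to compute $\Earg{\inner v G^2\inner w G^2}=1+2\inner v w^2$, whence $\Var(\inner v G\inner w G)=1+\inner v w^2\le 2$. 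Taking the supremum over unit $v,w$ yields $\opnorm{\Hess f_s(x)}\le\sqrt 2\,M_0(f)/s^2$, and since $f_s\in C^\infty$, this bound transfers to $M_2(f_s)$ via the mean value theorem.

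Finally, under the additional assumption $f\in C^1$, I would upgrade the formula $\grad f_s(x)=s^{-1}\Earg{f(x+sG)G}$ by Gaussian integration by parts applied coordinatewise: boundedness of $f$ and continuity of $\grad f$ let us integrate by parts in each $y_i$ against $\phi(y)$, with the Gaussian boundary terms vanishing, to get $\Earg{f(x+sG)G_i}=s\Earg{\partial_i f(x+sG)}$. Dividing by $s$ gives $\grad f_s(x)=\Earg{\grad f(x+sG)}$, completing the proof.
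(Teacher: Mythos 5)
Your proposal is correct and takes essentially the same approach as the paper: differentiate the Gaussian convolution so the derivatives fall on the Gaussian density, bound $M_1$ via $\E[|\inner{v}{G}|]=\sqrt{2/\pi}$, and bound $M_2$ via Cauchy--Schwarz plus the Isserlis computation yielding $1+\inner{v}{w}^2\le 2$ for unit vectors. Your Gaussian integration-by-parts step for the $f\in C^1$ case is only a minor variant of the paper's Leibniz-rule identity $\grad f_s=\grad f\star\phi_s$, so no substantive difference remains.
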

Indeed, for each non-zero $w\in\reals^d$, we may apply \lemref{convolution-smoothing} to the function $f_{w}(x) \defeq \inner{w}{g_{h}(x)}/\twonorm{w}$ with convolution
$f_{w,s}(x) = \inner{w}{g_{h,s}(x)}/\twonorm{w}$
to obtain the bounds
\balignst
	M_0(g_{h,s}) 
		&= \sup_{w\neq 0} M_0(f_{w,s}) 
		\leq \sup_{w\neq 0} M_0(f_w)
		= M_0(g_h) \leq \sr, \\
    M_1(g_{h,s}) 
		&= \sup_{w\neq 0} M_1(f_{w,s}) 
		\leq \sup_{w\neq 0} \sqrt{\textfrac{2}{\pi}} \textfrac{M_1(f_w)}{s}
		= \sqrt{\textfrac{2}{\pi}} \textfrac{M_1(f_w)}{s} \leq \sqrt{\textfrac{2}{\pi}} \textfrac{\sr}{s}, %
\ealignst
\balignst
	\text{and } M_2(g_{h,s}) 
		= \sup_{w \neq 0} M_2(f_{w,s}) 
		\leq \sup_{w \neq 0} \textfrac{\sqrt{2}M_2(f_w)}{s^2}
		= \textfrac{\sqrt{2}M_2(f_w)}{s^2} \leq \textfrac{\sqrt{2}\,\sr}{s^2}.
\ealignst
Hence, since our choice of $h$ was arbitrary, and 
\[
\textstyle
\kappa_s
	\defeq \maxarg{1, \frac{1}{s} \sqrt{\frac{2}{\pi}}, \frac{\sqrt{2}}{s^2}} 
	= \maxarg{1, \frac{\sqrt{2}}{s^2}} 
	\geq \frac{\max(M_0(g_{h,s}), M_1(g_{h,s}), M_2(g_{h,s}))}{\sr},
\]
we may take expectation under $Q_n$ and supremum over $h$ in \eqnref{constant-smooth-stein-bound} to reach 
\baligns
	\twowass(\mu, \nu) 
		&\leq \inf_{s>0}\diffstein{Q_n}{\steinsetarg{2}} \sr\kappa_s + s \Earg{\twonorm{G}}(1 + 2M_1(b) \sr) \\
		&\leq \maxarg{\diffstein{Q_n}{\steinsetarg{2}} \sr, \eta} + 2\eta
		\leq 3\maxarg{\diffstein{Q_n}{\steinsetarg{2}} \sr, \eta},
\ealigns
where we define $\eta = \sqrt[3]{\diffstein{Q_n}{\steinsetarg{2}}\sqrt{2}\sr\,\Earg{\twonorm{G}}^{2}(1+2M_1(b)\sr)^2}$ and select $s = \sqrt[3]{\diffstein{Q_n}{\steinsetarg{2}} 2\sqrt{2}\sr/(\Earg{\twonorm{G}}(1 + 2M_1(b) \sr))}$ to produce the second inequality.
The generic norm result now follows from 
the assumed norm domination property $\norm{\cdot} \geq \twonorm{\cdot}$, which implies $\steinsetarg{2} \subseteq \steinset$.

\subsection{Proof of \lemreflow{convolution-smoothing}: Smoothing by Gaussian convolution}
\label{sec:convolution-smoothing}

The conclusion $M_0(f_s) \leq M_0(f)$ follows from \Holder's inequality.
Now, fix any $x$ and non-zero $v_1,v_2 \in \reals^d$.
Since $f_s = f\star \phi_s $, where $\phi_{s} \in C^\infty$ is the density of $sG$ and $\star$ is the convolution operator, Leibniz's rule implies that
\baligns
\inner{v_1}{\grad f_s(x)} 
	&\textstyle= \inner{v_1}{(f \star \grad \phi_s)(x)} 
	= \frac{1}{s^2} \int f(x-y) \inner{v_1}{y }\phi_s(y) dy \\
	&\textstyle\leq \frac{M_0(f)}{s^2}\int {|\inner{v_1}{y}|}\phi_s(y)\dy 
	= \sqrt{\frac{2}{\pi}}\frac{M_0(f)}{s}\twonorm{v_1},
\ealigns
as ${\inner{v_1}{G}}/\twonorm{v_1}$ has a standard normal distribution.  Leibniz's rule also gives
\baligns
\Hess f_s(x)[v_1,v_2] 
	&\textstyle= (f \star \Hess \phi_s)(x)[v_1,v_2] \\
	&\textstyle\leq \frac{M_0(f)}{s^2}\int_{\reals^d} \left|\inner{v_1}{zz^\top v_2}/s^2 - \inner{v_1}{v_2}\right| \phi_s(z)\,dz\\
					  &\textstyle\leq \frac{M_0(f)}{s^2}\sqrt{\int_{\reals^d} \left|\inner{v_1}{zz^\top v_2}/s^2 - \inner{v_1}{v_2}\right| ^2 \phi_s(z)\,dz}\\
					  &\textstyle= \frac{M_0(f)}{s^2}\sqrt{\inner{v_1}{v_2}^2 + \twonorm{v_1}^2\twonorm{v_2}^2} \leq \frac{\sqrt{2}M_0(f)}{s^2}\twonorm{v_1}\twonorm{v_2},
\ealigns
where the last equality follows by Isserlis' theorem.  
Finally, when $f \in C^1$, Leibniz's rule gives $\grad f_s = \grad f \star \phi_s$.
\section{Proof of \thmreflow{nonconstant-lower-bound}}
\label{sec:nonconstant-lower-bound-proof}
We will derive each inequality for $\norm{\cdot} = \twonorm{\cdot}$;
the generic norm results will then follow from 
the property $\norm{\cdot} \geq \twonorm{\cdot}$, which implies $\steinsetarg{2} \subseteq \steinset$.

Fix any $h \in \hset = \{ h : \reals^d\to \reals \mid h \in C^3, M_1(h) \leq 1, M_2(h) < \infty, M_3(h) < \infty \}$ with $\Esubarg{P}{h(\PVAR)} = 0$.  
We assume that $M_1(b),$ $M_2(b)$, $M_1(\sigma)$, $F_2(\sigma)$, $M_1^*(m),$ and $M_0(\sigma^{-1})$ are all finite, or else the results are vacuous. 
Our high-level strategy is to relate the Wasserstein distance to the Stein discrepancy via the Stein equation \eqnref{stein-equation} with diffusion Stein operator $\diffusion{}$ \eqnref{diffusion-operator}.
By \thmref{stein-factors}, we know that there exists a Lipschitz solution 
$g_h$ to the the Stein equation 
$h(x) = \diffarg{g_h}{x}$
satisfying $M_0(g_h) \leq \sr M_1(h) \leq \sr$
and $M_1(g_h) \leq \beta M_1(h) \leq \beta$, for $\beta \defeq \beta_1+\beta_2$, where $\beta_1$ and $\beta_2$ are defined in \thmref{stein-factors}.
Since a Lipschitz gradient is also needed to declare that $g_h$ falls into a scaled copy of the classical Stein set $\steinset$, we will develop a smoothed version of the Stein solution with greater regularity.

For this purpose, fix any $s > 0$ and consider the convolution $g_{h,s}(x) \defeq \Earg{g_{h}(x+sG)}$.  
If the smoothing level $s$ is small, the Lipschitz continuity of $m$ and $h$ implies that $\diffarg{g_{h,s}}{x}$ closely approximates $h(x)$ for each $x\in\reals^d$:
\balign\label{eqn:nonconstant-smooth-stein-bound}
&h(x) 
	\leq \Earg{h(x+sG)} + M_1(h)s\Earg{\twonorm{G}}\\ \notag
	\leq\  &2\Earg{\inner{b(x+sG)}{g_h(x+sG)} + \inner{m(x+sG)}{\grad g_h(x+sG)}}+ s\Earg{\twonorm{G}} \\ \notag
	\leq\ &\diffarg{g_{h,s}}{x} + s \zeta.
\ealign

\subsection{Proof of the first inequality}\label{sec:first-inequality-proof}
Moreover, by an argument mirroring that of \thmref{constant-lower-bound}, \lemref{convolution-smoothing} shows that $g_{h,s}$ admits a Lipschitz gradient $\grad g_{h,s}(x) = \Earg{\grad g_{h}(x+sG)}$ and satisfies the derivative bounds
\balign\label{eqn:smooth-stein-factors}
	M_0(g_{h,s}) 
		&\leq M_0(g_h) \leq \sr, \\ \notag
	M_1(g_{h,s}) 
		&= M_0(\grad g_{h,s}) \leq M_0(\grad g_h) \leq \beta,\qtext{and} \\ \notag
	M_2(g_{h,s}) 
		&= M_1(\grad g_{h,s}) 
		\leq \sqrt{\textfrac{2}{\pi}}\textfrac{M_0(\grad g_h)}{s}
		\leq \sqrt{\textfrac{2}{\pi}}\textfrac{\beta}{s}.
\ealign
Let $\eta \defeq s^*\zeta$ for 
$
s^* = \sqrt{\diffstein{Q_n}{\steinsetarg{2}}\sqrt{{2}{/\pi}}\beta/\zeta}.
$
Since $\hset$ is dense in $\twowassset$,  
we may take expectation under $Q_n$ and supremum over $h$ in \eqnref{nonconstant-smooth-stein-bound} to reach 
\baligns%
	\twowass(\mu, \nu) 
		&\leq \inf_{s>0}\diffstein{Q_n}{\steinsetarg{2}} \textstyle\maxarg{\sr, \beta, \sqrt{\textfrac{2}{\pi}}\frac{\beta}{s}} 
		+ s \zeta \\
		&\leq \max(\diffstein{Q_n}{\steinsetarg{2}} \max(\sr,\beta), \eta) + \eta \\
		&\leq 2\max(\diffstein{Q_n}{\steinsetarg{2}}  \max(\sr,\beta), \eta).
\ealigns
\subsection{Proof of the second inequality}
Assume now that $\grad^3 b$ and $\grad^3 \sigma$ are bounded and locally Lipschitz.
Fix any $\iota \in (0,1)$. 
\lemref{convolution-smoothing} and an auxiliary smoothing lemma (\lemref{standardsmoothing} in the supplement) imply that
$M_{2}(g_{h,s}) = M_1(\grad g_{h,s}) \leq \sqrt{d}\frac{M_{1-\iota}(\grad g_{h})}{s^{\iota}}$. 
This improved dependence on $s$ will allow us to establish a near-linear relationship between the Stein discrepancy and the Wasserstein distance.
By %
\thmref{stein-factors},
$M_{1-\iota}(\grad g_h) \leq \frac{1}{K}(\frac{1}{\iota}+s_{r})$ for $K$ depending only on $M_{1:3}(\sigma),M_{1:3}(b),$ $M_0(\sigma^{-1}),$ and $r$. 
Hence, $M_{2}(g_{h,s}) \leq C_\iota/s^{\iota}$ for $C_{\iota} \defeq \frac{\sqrt{d}}{K}(\frac{1}{\iota}+s_{r})$.
Following the derivation in \secref{first-inequality-proof} and choosing $s^*=\left(\frac{\iota C_{\iota}\diffstein{Q_n}{\steinsetarg{2}}}{\zeta}\right){}^{\frac{1}{\iota+1}}$ and $\eta \defeq \frac{\zeta}{\iota} s^*$, we obtain
\balignt \label{eqn:iota-lower-bound}
\twowass(P,Q_n)
	\leq&\inf_{s>0}\diffstein{Q_n}{\steinsetarg{2}}\max(s_{r},\beta,C_{\iota}s^{-\iota})+s\zeta \\ \nonumber
	\leq&\max(\diffstein{Q_n}{\steinsetarg{2}}\max\left(s_{r},\beta\right),\eta) +\eta\iota\\\nonumber
	 \leq& 2\max(\diffstein{Q_n}{\steinsetarg{2}}\max\left(s_{r},\beta\right),\eta).
\ealignt

Now consider the case in which $\diffstein{Q_n}{\steinset} < e^{-1}$ and the choice $\iota =
1/\log(\nicefrac{1}{\diffstein{Q_n}{\steinset}}) \in (0,1)$. Since $x^{1/(\log x - 1)} \le e$ for all $x\in (0,e^{-1})$, 
\baligns
\textfrac{1}{\iota}\diffstein{Q_n}{\steinset}^{\frac{1}{1+\iota}}
  &= \log (\nicefrac{1}{\diffstein{Q_n}{\steinset}})
  \diffstein{Q_n}{\steinset}^{1 + \nicefrac{1}{(\log \diffstein{Q_n}{\steinset} - 1)}} \\
  &\le e \log (\nicefrac{1}{\diffstein{Q_n}{\steinset}}) \diffstein{Q_n}{\steinset}.
\ealigns
Introduce the shorthand $c_0 = \frac{\sqrt{d}}{K\zeta}$.
Since $\nicefrac{1}{1+\iota}\in(\nicefrac{1}{2}, 1)$, we have $c_0^{\frac{1}{1+\iota}} \le \max(\sqrt{c_0},
c_0)$. Similarly, $1 + s_r\iota > 1$,
so $(1 + \iota s_r)^{\frac{1}{1+\iota}} \le 1 + \iota s_r$.
Therefore,
\baligns
&\textfrac{\zeta}{\iota} \diffstein{Q_n}{\steinset}^{\textfrac{1}{1+\iota}}(\textfrac{1+
  \iota s_r}{K \zeta/\sqrt{d}})^{\textfrac{1}{1+\iota}} \\
&\le e\zeta \diffstein{Q_n}{\steinset} \log(\nicefrac{1}{\diffstein{Q_n}{\steinset}})
\max \left(\textfrac{d^{1/4}}{\sqrt{K\zeta}}, \textfrac{\sqrt{d}}{K\zeta}\right)
\left(1 + \textfrac{s_r}{\log(\nicefrac{1}{\diffstein{Q_n}{\steinset}})}\right) \\
&= e\, \diffstein{Q_n}{\steinset} \max\left(\textfrac{d^{1/4}\sqrt{\zeta}}{\sqrt{K}}, \textfrac{\sqrt{d}}{K}\right)
(s_r + \log(\nicefrac{1}{\diffstein{Q_n}{\steinset}})).
\ealigns

Next, fix any $\iota\in(0,1)$ and consider the case in which $\diffstein{Q_n}{\steinset} \ge e^{-1}$ so that $\diffstein{Q_n}{\steinset}^{\textfrac{1}{1+\iota}} \le
\diffstein{Q_n}{\steinset} e^{\textfrac{\iota}{\iota+1}}$. Because $\textfrac{1}{\iota}e^{\textfrac{\iota}{\iota+1}}\le \half
e^{1/2} < e$ and $(1+\iota s_r)^{\nicefrac{1}{1+\iota}} \le 1 + s_r$,
we conclude that
\baligns
\textfrac{\zeta}{\iota} \diffstein{Q_n}{\steinset}^{\textfrac{1}{1+\iota}}(\textfrac{1+
  \iota s_r}{K \zeta/\sqrt{d}})^{\textfrac{1}{1+\iota}}
\le e\, \diffstein{Q_n}{\steinset}
\max\left(\textfrac{d^{1/4}\sqrt{\zeta}}{\sqrt{K}},
\textfrac{\sqrt{d}}{K}\right) (s_r + 1).
\ealigns
The result follows from estimates of these two cases and the bound \eqref{eqn:iota-lower-bound}.

\section{Proof of \propreflow{discrepancy-upper-bound}} \label{sec:discrepancy-upper-bound-proof}
Fix any $g \in \steinset$.
Since $\Esubarg{P}{\diffarg{g}{\PVAR}} = 0$ by \propref{diffusion-zero}, we may write
\balign\label{eqn:stein-upper-bound-proof-expression}
|\Esubarg{Q_n}{\diffarg{g}{X}}|
	&= |\Esubarg{Q_n}{\diffarg{g}{X}} - \Esubarg{P}{\diffarg{g}{\PVAR}}| \notag \\
	&= |2\Earg{\inner{b(X)-b(\PVAR)}{g(X)}+ \inner{b(\PVAR)}{g(X)-g(\PVAR)}} \notag \\
	&\quad + \Earg{\inner{m(X)-m(\PVAR)}{\grad g(X)}+\inner{m(\PVAR)}{\grad
            g(X)-\grad g(\PVAR)}}|.
\ealign
for any coupling of $X$ and $Z$.
We obtain the first advertised inequality by repeatedly applying the Fenchel-Young inequality for dual norms,
invoking the boundedness and Lipschitz constraints on $g$ and $\grad g$, and taking a supremum over $g \in \steinset$.
The second inequality follows from the firstby invoking Jensen's inequality, the fact $\min(x,y)\le
x^ty^{1-t}$ for all $x,y\geq 0$, \Holder's inequality, and finally the
definition of $\lswass{s}$.

We prove the final claim by bounding the first advertised inequality in a second manner.
Let $(X,Z)$ be coupled so that $c\defeq \min(\lswass{1}(Q_n,P), 2) = \min(\Earg{\norm{X-\PVAR}}, 2)$, $A = 2\norm{b(\PVAR)} + \norm{m(\PVAR)}$, and $B = \min(\norm{X-\PVAR}, 2)$. The Fenchel-Young inequality ($xy \leq e^x - y + y\log y$ for $y\geq 0, x\in\reals$), the concavity of $x\mapsto \min(x,2)$, and Jensen's inequality now yield the result as
\balignst
&\Earg{AB}
	= \Earg{(A-\log(\mu_0/c))B} + \Earg{B}\log(\mu_0/c) \\
    &\leq \Earg{e^{A-\log(\mu_0/c)} - B + B \log(B)} + \Earg{B}\log(\mu_0/c) \\
    &= c - \Earg{B\log(e/B)} + \Earg{B}\log(\mu_0/c)
    \leq c + c\log(\mu_0/c)
    = c\log(e\mu_0/c).
\ealignst

\section{Proof of \thmreflow{concave-decay}}
\label{sec:concave-decay-proof}
Fix any $x,y\in\reals^d$, and define two \Ito diffusions solving
$d\process{t}{x} = b(\process{t}{x})\dt + \sigma(\process{t}{x})\,dW_t$ with $\process{0}{x}=x$ and
$d\process{t}{y} = b(\process{t}{y})\dt + \sigma(\process{t}{y})\,dW_t$ with $\process{0}{y}=y$,
for $(W_t)_{t\geq0}$ a shared Wiener process.
Applying Dynkin's formula to the function $f(t,x) = e^{kt}\norm{x}_G^2$ for the difference process $\process{t}{x}-\process{t}{y}$ yields
\balignst
&\textstyle\Earg{f(t,\process{t}{x}-\process{t}{y})}
  = \norm{x-y}_G^2 + \E[{\int_0^t ke^{ks} \norm{\process{s}{x} -
      \process{s}{y}}_G^2 \ds}]\\
&\textstyle+ \E[{\int_0^te^{ks}\left(\norm{\sigma(\process{s}{x})
  - \sigma(\process{s}{y})}_G^2 + 2\inner{b(\process{s}{x}) -
  b(\process{s}{y})}{G(\process{s}{x} - \process{s}{y})}\right)\ds}]
\ealignst
By the uniform dissipativity assumption, the right-hand side is at most 
$\norm{x-y}_G^2 = \wassarg{\norm{\cdot}_G}(\delta_x,\,\delta_y)^2$.
For the transition semigroup $\fulltrans{}$,
\[
\Earg{f(t,\process{t}{x}-\process{t}{y})} =
e^{kt}\Earg{\norm{\process{t}{x}-\process{t}{y}}_G^2} 
\geq e^{kt}\wassarg{\norm{\cdot}_G}(\delta_x \trans{t}{},\,\delta_y \trans{t}{})^2,
\]
by Cauchy-Schwarz.
The result now follows from the fact that
$\mineig{G_1}\le \norm{z}_G^2/\twonorm{z}^2\le \maxeig{G_1}$ for all
$z\neq 0$.

\section{Proof of \thmreflow{nonconstant-decay}}\label{sec:nonconstant-decay-proof}
As in the proof of \citep[Thm. 2.6]{Wang2016}, we fix two arbitrary starting points $x,y\in\reals^d$ and define a pair of coupled \Ito diffusions $\fullprocess{x}$ and $\fullprocess{y}$, each with associated marginal semigroup $(\trans{t}{})_{t\geq 0}$.
Specifically, we set $\process{0}{x}= x$ and $\process{0}{y}=y$ and let $\fullprocess{x}$ and $\fullprocess{y}$ solve the equations 
\balignst
d\process{t}{x} 
	&= b(\process{t}{x}) \dt + \sigma_0(\process{t}{x}) \dW_t' + \lambda_0 \dW_t'' \\
d\process{t}{y} 
	&= \textstyle b(\process{t}{y}) \dt + \sigma_0(\process{t}{y}) \dW_t' + \lambda_0 \big(I - 2 \frac{\process{t}{x} - \process{t}{y}}{\twonorm{\process{t}{x} - \process{t}{y}}} \frac{\process{t}{x} - \process{t}{y}^\top}{\twonorm{\process{t}{x} - \process{t}{y}}}\big) \dW_t'',
\ealignst
where $(W_t')_{t\geq 0}$ is an $m$-dimensional Wiener process and $(W_t'')_{t\geq 0}$ is an independent $d$-dimensional Wiener process.

Following the argument of \citet[Sec. 4]{Eberle2015}, we define the difference process $Y_t = \process{t}{x} - \process{t}{y}$, its norm $r_t = \twonorm{Y_t}$,
and the one-dimensional Wiener process $W_t = \int_0^t \inner{Y_s/r_s}{\dW_s''}$, and apply the generalized \Ito formula \cite[Thm. 22.5]{Kallenberg2005Foundations} %
to obtain the  stochastic differential equations
\baligns
d\twonorm{Y_t}^2 
	&= (2\inner{Y_t}{b(\process{t}{x}) - b(\process{t}{y})} + \norm{\sigma_0(\process{t}{x}) - \sigma_0(\process{t}{y})}_F^2 + 4\lambda_0^2) \dt \\
	&+ 2 \inner{Y_t}{( \sigma_0(\process{t}{x}) - \sigma_0(\process{t}{y}) ) \dW_t'} + 4 \lambda_0 \twonorm{Y_t} \dW_t\qtext{and}\\
df(r_t) 
	&= f'(r_t)/(r_t) \inner{Y_t}{( \sigma_0(\process{t}{x}) - \sigma_0(\process{t}{y}) ) \dW_t'} + 2 \lambda_0 f'(r_t) \dW_t \\
	+ &(f''(r_t)(2\lambda_0^2 + \texthalf \twonorm{ (\sigma_0(\process{t}{x}) - \sigma_0(\process{t}{y}))^\top Y_t}^2/r_t^2)  - \textfrac{1}{2\alpha} f'(r_t) \kappa(r_t) r_t)\dt 
\ealigns
for any concave increasing $f : [0,\infty) \mapsto [0,\infty)$ with absolutely continuous derivative, $f(0) = 0$, and $f'(0) = 1$.  
Since the drift term in the latter equation is bounded above by
$
\beta_t \defeq (2/\alpha) (  f''(r_t) - (1/4) f'(r_t) \kappa(r_t) r_t ),
$
the argument of \citep[p. 15]{Eberle2015} shows that the results of \citep[Thm. 1 and Cor. 2]{Eberle2015} hold for our choice of $\alpha$ and $\kappa$.

\section*{Acknowledgments}
We thank Simon Lacoste-Julien for sharing his quadrature code, Martin Hairer for discussing interpolation inequalities, 
Andreas Eberle for reading an earlier version of this manuscript, and Murat Erdogdu for identifying an important typographical error
in an earlier version of this manuscript.
This material is based upon work supported by the grant EPSRC EP/N000188/1, the National Science Foundation DMS RTG Grant No. 1501767, the National Science Foundation Graduate Research Fellowship under Grant No. DGE-114747, the Frederick E. Terman Fellowship, and the Lloyd's Register Foundation programme on Data Centric engineering at the Alan Turing Institute, UK.
\bibliography{stein}

\begin{thebibliography}{95}
\providecommand{\natexlab}[1]{#1}
\providecommand{\url}[1]{\texttt{#1}}
\expandafter\ifx\csname urlstyle\endcsname\relax
  \providecommand{\doi}[1]{doi: #1}\else
  \providecommand{\doi}{doi: \begingroup \urlstyle{rm}\Url}\fi

\bibitem[Ambrosio et~al.(2000)Ambrosio, Fusco, and
  Pallara]{ambrosio2000functions}
L.~Ambrosio, N.~Fusco, and D.~Pallara.
\newblock \emph{Functions of bounded variation and free discontinuity
  problems}.
\newblock Oxford University Press, 2000.

\bibitem[Bach et~al.(2012)Bach, Lacoste-Julien, and Obozinski]{Bach2012}
F.~Bach, S.~Lacoste-Julien, and G.~Obozinski.
\newblock On the equivalence between herding and conditional gradient
  algorithms.
\newblock In \emph{Proc. 29th ICML}, ICML'12, 2012.

\bibitem[Barbour(1988)]{Barbour88}
A.~D. Barbour.
\newblock Stein's method and {P}oisson process convergence.
\newblock \emph{J. Appl. Probab.}, \penalty0 (Special Vol. 25A):\penalty0
  175--184, 1988.
\newblock ISSN 0021-9002.
\newblock A celebration of applied probability.

\bibitem[Barbour(1990)]{Barbour90}
A.~D. Barbour.
\newblock Stein's method for diffusion approximations.
\newblock \emph{Probab. Theory Related Fields}, 84\penalty0 (3):\penalty0
  297--322, 1990.
\newblock ISSN 0178-8051.

\bibitem[Bouts et~al.(2014)Bouts, ten Brink, and Buchin]{BoutsteBu14}
Q.~W. Bouts, A.~P. ten Brink, and K.~Buchin.
\newblock A framework for {C}omputing the {G}reedy {S}panner.
\newblock In \emph{Proc. of 30th SOCG}, pages 11:11--11:19, New York, NY, 2014.
  ACM.

\bibitem[Brooks et~al.(2011)Brooks, Gelman, Jones, and Meng]{BrooksGeJoMe11}
S.~Brooks, A.~Gelman, G.~Jones, and X.-L. Meng.
\newblock \emph{Handbook of {M}arkov chain {M}onte {C}arlo}.
\newblock CRC press, 2011.

\bibitem[Cattiaux and Guillin(2014)]{CattiauxGu14}
P.~Cattiaux and A.~Guillin.
\newblock Semi log-concave {M}arkov diffusions.
\newblock In \emph{S\'eminaire de {P}robabilit\'es {XLVI}}, volume 2123 of
  \emph{Lecture Notes in Math.}, pages 231--292. Springer, Cham, 2014.

\bibitem[Cerrai(2001)]{Cerrai2001}
S.~Cerrai.
\newblock \emph{Second order {PDE}'s in finite and infinite dimension: a
  probabilistic approach}, volume 1762.
\newblock Springer Science \& Business Media, 2001.

\bibitem[Chatterjee and Meckes(2008)]{ChatterjeeMe08}
S.~Chatterjee and E.~Meckes.
\newblock Multivariate normal approximation using exchangeable pairs.
\newblock \emph{ALEA Lat. Am. J. Probab. Math. Stat.}, 4:\penalty0 257--283,
  2008.
\newblock ISSN 1980-0436.

\bibitem[Chatterjee and Shao(2011)]{ChatterjeeSh11}
S.~Chatterjee and Q.~Shao.
\newblock Nonnormal approximation by {S}tein's method of exchangeable pairs
  with application to the {C}urie-{W}eiss model.
\newblock \emph{Ann. Appl. Probab.}, 21\penalty0 (2):\penalty0 464--483, 2011.
\newblock ISSN 1050-5164.

\bibitem[Chen et~al.(2011)Chen, Goldstein, and Shao]{ChenGoSh11}
L.~Chen, L.~Goldstein, and Q.~Shao.
\newblock \emph{Normal approximation by {S}tein's method}.
\newblock Probability and its Applications. Springer, Heidelberg, 2011.
\newblock ISBN 978-3-642-15006-7.

\bibitem[Chen et~al.(2018)Chen, Mackey, Gorham, Briol, and
  Oates]{ChenMaGoBrOa2018}
W.~Y. Chen, L.~Mackey, J.~Gorham, F.-X. Briol, and C.~Oates.
\newblock Stein points.
\newblock In \emph{Proc. 35th ICML}, ICML'18, 2018.

\bibitem[Chen et~al.(2010)Chen, Welling, and Smola]{Chen2010}
Y.~Chen, M.~Welling, and A.~Smola.
\newblock Super-samples from kernel herding.
\newblock In \emph{UAI}, 2010.

\bibitem[Chew(1986)]{Chew86}
P.~Chew.
\newblock There is a {P}lanar {G}raph {A}lmost {A}s {G}ood {A}s the {C}omplete
  {G}raph.
\newblock In \emph{Proc. 2nd SOCG}, pages 169--177, New York, NY, 1986. ACM.

\bibitem[Chwialkowski et~al.(2016)Chwialkowski, Strathmann, and
  Gretton]{ChwialkowskiStGr2016}
K.~Chwialkowski, H.~Strathmann, and A.~Gretton.
\newblock A kernel test of goodness of fit.
\newblock In \emph{Proc. 33rd ICML}, ICML, 2016.

\bibitem[Conca and Vanninathan(2007)]{conca2007periodic}
C.~Conca and M.~Vanninathan.
\newblock Periodic homogenization problems in incompressible fluid equations.
\newblock \emph{Handbook of Mathematical Fluid Dynamics}, 4:\penalty0 649--698,
  2007.

\bibitem[{Doersek} and {Teichmann}(2010)]{Doersek2010Semigroup}
P.~{Doersek} and J.~{Teichmann}.
\newblock {A Semigroup Point Of View On Splitting Schemes For Stochastic
  (Partial) Differential Equations}.
\newblock \emph{ArXiv e-prints}, Nov. 2010.

\bibitem[Drummond and Gardiner(1980)]{drummond1980}
P.~D. Drummond and C.~W. Gardiner.
\newblock {Generalised P-representations in quantum optics}.
\newblock \emph{J. Phys. A}, 13\penalty0 (7):\penalty0 2353, 1980.

\bibitem[Drummond and Walls(1980)]{walls1980}
P.~D. Drummond and D.~F. Walls.
\newblock {Quantum theory of optical bistability. I. Nonlinear polarisability
  model}.
\newblock \emph{J. Phys. A}, 13\penalty0 (2):\penalty0 725, 1980.

\bibitem[Duncan et~al.(2016)Duncan, Lelievre, and
  Pavliotis]{duncan2016variance}
A.~B. Duncan, T.~Lelievre, and G.~Pavliotis.
\newblock Variance reduction using nonreversible {L}angevin samplers.
\newblock \emph{J. Stat. Phys.}, 163\penalty0 (3):\penalty0 457--491, 2016.

\bibitem[Dynkin(1965)]{Dynkin1965}
E.~Dynkin.
\newblock \emph{Markov {P}rocesses : Volume 1}.
\newblock Springer Berlin Heidelberg, Berlin, Heidelberg, 1965.
\newblock ISBN 978-3-662-00033-5.

\bibitem[Eberle(2015)]{Eberle2015}
A.~Eberle.
\newblock Reflection couplings and contraction rates for diffusions.
\newblock \emph{Probab. Theory Related Fields}, pages 1--36, 2015.

\bibitem[Engel and Nagel(2000)]{Engel2000Semigroup}
K.~Engel and R.~Nagel.
\newblock \emph{One-parameter semigroups for linear evolution equations},
  volume 194 of \emph{Graduate Texts in Mathematics}.
\newblock Springer-Verlag, New York, 2000.
\newblock ISBN 0-387-98463-1.

\bibitem[Ethier and Kurtz(1986)]{EthierKu86}
S.~N. Ethier and T.~G. Kurtz.
\newblock \emph{Markov processes}.
\newblock Wiley Series in Probability and Mathematical Statistics: Probability
  and Mathematical Statistics. John Wiley \& Sons, Inc., New York, 1986.
\newblock ISBN 0-471-08186-8.

\bibitem[Fan et~al.(2006)Fan, Brooks, and Gelman]{FanBrGe06}
Y.~Fan, S.~P. Brooks, and A.~Gelman.
\newblock Output assessment for {M}onte {C}arlo simulations via the score
  statistic.
\newblock \emph{J. Comp. Graph. Stat.}, 15\penalty0 (1), 2006.

\bibitem[Fang et~al.(2018)Fang, Shao, and Xu]{FangShXu2018}
X.~Fang, Q.-M. Shao, and L.~Xu.
\newblock Multivariate approximations in wasserstein distance by stein’s
  method and bismut’s formula.
\newblock \emph{Probability Theory and Related Fields}, pages 1--35, 2018.

\bibitem[Fourni{\'e} et~al.(1999)Fourni{\'e}, Lasry, Lebuchoux, Lions, and
  Touzi]{FournieLaLeLiTo1999}
E.~Fourni{\'e}, J.~Lasry, J.~Lebuchoux, P.~Lions, and N.~Touzi.
\newblock Applications of {M}alliavin calculus to {M}onte {C}arlo methods in
  finance.
\newblock \emph{Fin. Stochastics}, 3\penalty0 (4):\penalty0 391--412, 1999.
\newblock ISSN 0949-2984.

\bibitem[Friedman(1975)]{Friedman1975Stoch}
A.~Friedman.
\newblock \emph{Stochastic differential equations and applications. {V}ol. 1}.
\newblock Academic Press [Harcourt Brace Jovanovich, Publishers], New
  York-London, 1975.
\newblock Probability and Mathematical Statistics, Vol. 28.

\bibitem[{Gan} et~al.(2017){Gan}, {R{\"o}llin}, and {Ross}]{GanRoRo2016}
H.~L. {Gan}, A.~{R{\"o}llin}, and N.~{Ross}.
\newblock Dirichlet approximation of equilibrium distributions in {C}annings
  models with mutation.
\newblock \emph{Advances in Applied Probability}, 49\penalty0 (3):\penalty0
  927--959, 2017.

\bibitem[Gaunt(2016)]{Gaunt2016}
R.~E. Gaunt.
\newblock Rates of convergence in normal approximation under moment conditions
  via new bounds on solutions of the stein equation.
\newblock \emph{J. Theoret. Probab.}, 29\penalty0 (1):\penalty0 231--247, 2016.

\bibitem[Gelman(2012)]{Gelman2012multilevel}
A.~Gelman.
\newblock Multilevel (hierarchical) modeling: what it can and cannot do.
\newblock \emph{Technometrics}, 2012.

\bibitem[Gelman et~al.(2014)Gelman, Carlin, Stern, Dunson, Vehtari, and
  Rubin]{GelmanCaStDuVeRu2014}
A.~Gelman, J.~Carlin, H.~Stern, D.~Dunson, A.~Vehtari, and D.~Rubin.
\newblock \emph{Bayesian data analysis}.
\newblock Texts in Statistical Science Series. CRC Press, Boca Raton, FL, third
  edition, 2014.
\newblock ISBN 978-1-4398-4095-5.

\bibitem[Girolami and Calderhead(2011)]{girolami2011riemann}
M.~Girolami and B.~Calderhead.
\newblock Riemann {M}anifold {L}angevin and {H}amiltonian {M}onte {C}arlo
  methods.
\newblock \emph{J. R. Stat. Soc. Ser. B}, 73\penalty0 (2):\penalty0 123--214,
  2011.

\bibitem[Glaeser(1958)]{Glaeser58}
G.~Glaeser.
\newblock \'{E}tude de quelques alg\`ebres tayloriennes.
\newblock \emph{J. Analyse Math.}, 6:\penalty0 1--124; erratum, insert to 6
  (1958), no. 2, 1958.

\bibitem[Gorham and Mackey(2015)]{GorhamMa15}
J.~Gorham and L.~Mackey.
\newblock Measuring sample quality with {S}tein's method.
\newblock In \emph{Adv. NIPS 28}, pages 226--234. 2015.

\bibitem[Gorham and Mackey(2017)]{GorhamMa17}
J.~Gorham and L.~Mackey.
\newblock Measuring sample quality with kernels.
\newblock In \emph{Proc. of 34st ICML}, ICML'17, 2017.

\bibitem[G{\"o}tze(1991)]{Gotze91}
F.~G{\"o}tze.
\newblock On the rate of convergence in the multivariate {CLT}.
\newblock \emph{Ann. Probab.}, 19\penalty0 (2):\penalty0 724--739, 1991.

\bibitem[Gretton et~al.(2006)Gretton, Borgwardt, Rasch, Sch{\"o}lkopf, and
  Smola]{GrettonBoRaScSm06}
A.~Gretton, K.~Borgwardt, M.~Rasch, B.~Sch{\"o}lkopf, and A.~Smola.
\newblock A kernel method for the two-sample-problem.
\newblock In \emph{Adv. NIPS 19}, pages 513--520, 2006.

\bibitem[Gudmundsson et~al.(2007)Gudmundsson, Klein, Knauer, and
  Smid]{gudmundsson2007small}
J.~Gudmundsson, O.~Klein, C.~Knauer, and M.~Smid.
\newblock {Small Manhattan Networks and Algorithmic Applications for the Earth
  Movers Distance}.
\newblock In \emph{Proc. 23rd EuroCG}, pages 174--177, 2007.

\bibitem[Hairer et~al.(2014)Hairer, Stuart, and Vollmer]{hairer2011spectral}
M.~Hairer, A.~Stuart, and S.~Vollmer.
\newblock Spectral gaps for a {M}etropolis--{H}astings algorithm in infinite
  dimensions.
\newblock \emph{Ann. Appl. Probab.}, 2014.

\bibitem[Har-Peled and Mendel(2006)]{Har-PeledMe06}
S.~Har-Peled and M.~Mendel.
\newblock Fast construction of nets in low-dimensional metrics and their
  applications.
\newblock \emph{SIAM J. Comput.}, 35\penalty0 (5):\penalty0 1148--1184, 2006.

\bibitem[Hartley and Zisserman(2004)]{HartleyZi2004}
R.~Hartley and A.~Zisserman.
\newblock \emph{Multiple View Geometry in Computer Vision}.
\newblock Cambridge University Press, ISBN: 0521540518, second edition, 2004.

\bibitem[Horowitz(1987)]{horowitz1987second}
A.~Horowitz.
\newblock The second order {L}angevin equation and numerical simulations.
\newblock \emph{Nucl. Phys. B}, 280:\penalty0 510--522, 1987.

\bibitem[Huber and Ronchetti(2009)]{HuberRo2009}
P.~Huber and E.~Ronchetti.
\newblock \emph{Robust statistics}.
\newblock Wiley Series in Probability and Statistics. John Wiley \& Sons, Inc.,
  Hoboken, NJ, second edition, 2009.
\newblock ISBN 978-0-470-12990-6.

\bibitem[Huggins and Zou(2017)]{HugginsZo2017}
J.~Huggins and J.~Zou.
\newblock {Quantifying the accuracy of approximate diffusions and Markov
  chains}.
\newblock In \emph{Proc. 20th AISTATS}, pages 382--391, 2017.

\bibitem[Huggins and Mackey(2018)]{HugginsMa2018}
J.~H. Huggins and L.~Mackey.
\newblock Random feature {S}tein discrepancies.
\newblock In \emph{Adv. NIPS 31}, 2018.

\bibitem[Hwang et~al.(1993)Hwang, Hwang-Ma, and Sheu]{hwang1993accelerating}
C.~Hwang, S.~Hwang-Ma, and S.~Sheu.
\newblock Accelerating {G}aussian diffusions.
\newblock \emph{Ann. Appl. Probab.}, pages 897--913, 1993.

\bibitem[Joulin and Ollivier(2010)]{joulin2010}
A.~Joulin and Y.~Ollivier.
\newblock Curvature, concentration and error estimates for {M}arkov chain
  {M}onte {C}arlo.
\newblock \emph{Ann. Probab.}, 38\penalty0 (6):\penalty0 2418--2442, 11 2010.

\bibitem[Kallenberg(2002)]{Kallenberg2005Foundations}
O.~Kallenberg.
\newblock \emph{Foundations of modern probability}.
\newblock Probability and its Applications. Springer-Verlag, New York, second
  edition, 2002.
\newblock ISBN 0-387-95313-2.

\bibitem[Kent(1978)]{kent1978time}
J.~Kent.
\newblock Time-reversible diffusions.
\newblock \emph{Ann. Appl. Probab.}, pages 819--835, 1978.

\bibitem[Khasminskii(2012)]{Khasminskii11}
R.~Khasminskii.
\newblock \emph{Stochastic stability of differential equations}, volume~66 of
  \emph{Stochastic Modelling and Applied Probability}.
\newblock Springer, Heidelberg, second edition, 2012.
\newblock ISBN 978-3-642-23279-4.
\newblock With contributions by G. N. Milstein and M. B. Nevelson.

\bibitem[Korattikara et~al.(2014)Korattikara, Chen, and
  Welling]{Korattikara2014}
A.~Korattikara, Y.~Chen, and M.~Welling.
\newblock Austerity in {MCMC} land: Cutting the {M}etropolis-{H}astings budget.
\newblock In \emph{Proc. of 31st ICML}, ICML'14, 2014.

\bibitem[Lacoste-Julien et~al.(2015)Lacoste-Julien, Lindsten, and
  Bach]{Lacoste2015sequential}
S.~Lacoste-Julien, F.~Lindsten, and F.~Bach.
\newblock Sequential kernel herding: Frank-{W}olfe optimization for particle
  filtering.
\newblock In \emph{AISTATS}, 2015.

\bibitem[Landim et~al.(1998)Landim, Olla, and Yau]{landim1998convection}
C.~Landim, S.~Olla, and H.~Yau.
\newblock Convection--diffusion equation with space--time ergodic random flow.
\newblock \emph{Probability theory and related fields}, 112\penalty0
  (2):\penalty0 203--220, 1998.

\bibitem[Ley et~al.(2017)Ley, Reinert, and Swan]{LeyReSw2017}
C.~Ley, G.~Reinert, and Y.~Swan.
\newblock Stein's method for comparison of univariate distributions.
\newblock \emph{Probab. Surveys}, 14:\penalty0 1--52, 2017.

\bibitem[Liu(1996)]{Liu1996}
C.~Liu.
\newblock Bayesian robust multivariate linear regression with incomplete data.
\newblock \emph{JASA}, 91\penalty0 (435):\penalty0 1219--1227, 1996.
\newblock ISSN 01621459.

\bibitem[Liu and Lee(2017)]{LiuLe2017}
Q.~Liu and J.~Lee.
\newblock {Black-box Importance Sampling}.
\newblock In \emph{Proc. 20th AISTATS}, pages 952--961, 2017.

\bibitem[Liu and Wang(2016)]{Liu2016stein}
Q.~Liu and D.~Wang.
\newblock Stein variational gradient descent: A general purpose bayesian
  inference algorithm.
\newblock In \emph{Adv. NIPS 29}, pages 2378--2386, 2016.

\bibitem[Liu et~al.(2016)Liu, Lee, and Jordan]{LiuLeJo16}
Q.~Liu, J.~Lee, and M.~Jordan.
\newblock A kernelized {S}tein discrepancy for goodness-of-fit tests.
\newblock In \emph{Proc. of 33rd ICML}, volume~48 of \emph{ICML}, pages
  276--284, 2016.

\bibitem[Lubin and Dunning(2015)]{LubinDu15}
M.~Lubin and I.~Dunning.
\newblock Computing in operations research using {J}ulia.
\newblock \emph{INFORMS Journal on Computing}, 27\penalty0 (2):\penalty0
  238--248, 2015.

\bibitem[Lunardi(2007)]{lunardiintroduction}
A.~Lunardi.
\newblock An introduction to interpolation theory.
\newblock 2007.

\bibitem[Ma et~al.(2015)Ma, Chen, and Fox]{ma2015complete}
Y.~Ma, T.~Chen, and E.~Fox.
\newblock A complete recipe for stochastic gradient {MCMC}.
\newblock In \emph{Adv. NIPS 28}, pages 2899--2907, 2015.

\bibitem[Mackey and Gorham(2016)]{MackeyGo16}
L.~Mackey and J.~Gorham.
\newblock Multivariate {S}tein factors for a class of strongly log-concave
  distributions.
\newblock \emph{Electron. Commun. Probab.}, 21:\penalty0 14 pp., 2016.

\bibitem[Manca(2008)]{manca2008kolmogorov}
L.~Manca.
\newblock \emph{Kolmogorov operators in spaces of continuous functions and
  equations for measures}.
\newblock PhD thesis, Scuola Normale Superiore di Pisa, 2008.

\bibitem[Mattingly et~al.(2010)Mattingly, Stuart, and Tretyakov]{Mattingly10}
J.~Mattingly, A.~Stuart, and M.~Tretyakov.
\newblock Convergence of numerical time-averaging and stationary measures via
  {P}oisson equations.
\newblock \emph{SIAM J. Numer. Anal.}, 48\penalty0 (2):\penalty0 552--577,
  2010.

\bibitem[Meckes(2009)]{Meckes09}
E.~Meckes.
\newblock On {S}tein's method for multivariate normal approximation.
\newblock In \emph{High dimensional probability {V}: the {L}uminy volume},
  volume~5 of \emph{Inst. Math. Stat. Collect.}, pages 153--178. Inst. Math.
  Statist., Beachwood, OH, 2009.

\bibitem[M{\"u}ller(1997)]{Muller97}
A.~M{\"u}ller.
\newblock Integral probability metrics and their generating classes of
  functions.
\newblock \emph{Ann. Appl. Probab.}, 29\penalty0 (2):\penalty0 pp. 429--443,
  1997.

\bibitem[Nourdin et~al.(2010)Nourdin, Peccati, and
  R{\'e}veillac]{NourdinPeRe10}
I.~Nourdin, G.~Peccati, and A.~R{\'e}veillac.
\newblock Multivariate normal approximation using {S}tein's method and
  {M}alliavin calculus.
\newblock \emph{Ann. Inst. Henri Poincar\'e Probab. Stat.}, 46\penalty0
  (1):\penalty0 45--58, 2010.
\newblock ISSN 0246-0203.

\bibitem[Oates et~al.(2016)Oates, Girolami, and Chopin]{OatesGiCh2016}
C.~J. Oates, M.~Girolami, and N.~Chopin.
\newblock Control functionals for {M}onte {C}arlo integration.
\newblock \emph{Journal of the Royal Statistical Society: Series B (Statistical
  Methodology)}, pages n/a--n/a, 2016.
\newblock ISSN 1467-9868.

\bibitem[Oksendal(2013)]{Oksendal2013}
B.~Oksendal.
\newblock \emph{Stochastic differential equations: an introduction with
  applications}.
\newblock Springer Science \& Business Media, 6 edition, 2013.

\bibitem[Optimization(2015)]{Gurobi15}
G.~Optimization.
\newblock Gurobi optimizer reference manual, 2015.
\newblock URL \url{http://www.gurobi.com}.

\bibitem[Pardoux and Veretennikov(2001)]{PardouxVe01}
E.~Pardoux and A.~Veretennikov.
\newblock On the {P}oisson equation and diffusion approximation. i.
\newblock \emph{Ann. Probab.}, pages 1061--1085, 2001.

\bibitem[Patterson and Teh(2013)]{PattersonTe13}
S.~Patterson and Y.~Teh.
\newblock Stochastic gradient {R}iemannian langevin dynamics on the probability
  simplex.
\newblock In \emph{Adv. NIPS 26}, pages 3102--3110, 2013.

\bibitem[Pavliotis(2014)]{Pavliotis14}
G.~A. Pavliotis.
\newblock \emph{Stochastic processes and applications}, volume~60 of
  \emph{Texts in Applied Mathematics}.
\newblock Springer, New York, 2014.
\newblock ISBN 978-1-4939-1322-0; 978-1-4939-1323-7.
\newblock Diffusion processes, the {F}okker-{P}lanck and {L}angevin equations.

\bibitem[Peleg and Sch{\"a}ffer(1989)]{PelegSc89}
D.~Peleg and A.~Sch{\"a}ffer.
\newblock Graph spanners.
\newblock \emph{J. Graph Theory}, 13\penalty0 (1):\penalty0 99--116, 1989.

\bibitem[Protter(2005)]{Protter2005}
P.~Protter.
\newblock \emph{Stochastic integration and differential equations}, volume~21
  of \emph{Stochastic Modelling and Applied Probability}.
\newblock Springer-Verlag, Berlin, 2005.
\newblock ISBN 3-540-00313-4.
\newblock Second edition. Version 2.1, Corrected third printing.

\bibitem[Rai{\checkv c}(2004)]{raivc2004multivariate}
M.~Rai{\checkv c}.
\newblock A multivariate {CLT} for decomposable random vectors with finite
  second moments.
\newblock \emph{Journal of Theoretical Probability}, 17\penalty0 (3):\penalty0
  573--603, 2004.

\bibitem[Ranganath et~al.(2016)Ranganath, Tran, Altosaar, and
  Blei]{Ranganath2016}
R.~Ranganath, D.~Tran, J.~Altosaar, and D.~Blei.
\newblock Operator variational inference.
\newblock In \emph{Advances in Neural Information Processing Systems}, pages
  496--504, 2016.

\bibitem[Reinert and R{\"o}llin(2009)]{ReinertRo09}
G.~Reinert and A.~R{\"o}llin.
\newblock Multivariate normal approximation with {S}tein's method of
  exchangeable pairs under a general linearity condition.
\newblock \emph{Ann. Probab.}, 37\penalty0 (6):\penalty0 2150--2173, 2009.
\newblock ISSN 0091-1798.

\bibitem[Rey-Bellet and Spiliopoulos(2014)]{rey2014irreversible}
L.~Rey-Bellet and K.~Spiliopoulos.
\newblock Irreversible {Langevin} samplers and variance reduction: a large
  deviation approach.
\newblock \emph{arXiv preprint arXiv:1404.0105}, 2014.

\bibitem[Risken(1996)]{Risken}
H.~Risken.
\newblock \emph{{The Fokker-Planck Equation: Methods of Solutions and
  Applications}}.
\newblock Springer Series in Synergetics. Springer, 2nd ed. 1989. 3rd printing
  edition, Sept. 1996.
\newblock ISBN 354061530X.

\bibitem[Roberts and Tweedie(1996)]{RobertsTw96}
G.~Roberts and R.~Tweedie.
\newblock Exponential convergence of {L}angevin distributions and their
  discrete approximations.
\newblock \emph{Bernoulli}, 2\penalty0 (4):\penalty0 341--363, 1996.
\newblock ISSN 1350-7265.

\bibitem[Roberts and Stramer(2002)]{roberts2002langevin}
G.~O. Roberts and O.~Stramer.
\newblock Langevin diffusions and {M}etropolis-{H}astings algorithms.
\newblock 4\penalty0 (4):\penalty0 337--357, 2002.

\bibitem[R{\"o}ckner et~al.(2006)R{\"o}ckner, Sobol,
  et~al.]{rockner2006kolmogorov}
M.~R{\"o}ckner, Z.~Sobol, et~al.
\newblock Kolmogorov equations in infinite dimensions: well-posedness and
  regularity of solutions, with applications to stochastic generalized burgers
  equations.
\newblock \emph{Ann. Probab.}, 34\penalty0 (2):\penalty0 663--727, 2006.

\bibitem[Shvartsman(2008)]{Shvartsman08}
P.~Shvartsman.
\newblock The {W}hitney extension problem and {L}ipschitz selections of
  set-valued mappings in jet-spaces.
\newblock \emph{Trans. Amer. Math. Soc.}, 360\penalty0 (10):\penalty0
  5529--5550, 2008.

\bibitem[Soffritti and Galimberti(2011)]{Soffritti2011multivariate}
G.~Soffritti and G.~Galimberti.
\newblock Multivariate linear regression with non-normal errors: a solution
  based on mixture models.
\newblock \emph{Statistics and Computing}, 21\penalty0 (4):\penalty0 523--536,
  2011.

\bibitem[Stein(1972)]{Stein72}
C.~Stein.
\newblock A bound for the error in the normal approximation to the distribution
  of a sum of dependent random variables.
\newblock In \emph{Proc. 6th {B}erkeley {S}ymposium on {M}athematical
  {S}tatistics and {P}robability ({U}niv. {C}alifornia, {B}erkeley, {C}alif.,
  1970/1971), {V}ol. {II}: {P}robability theory}, pages 583--602. Univ.
  California Press, Berkeley, Calif., 1972.

\bibitem[Stein et~al.(2004)Stein, Diaconis, Holmes, and Reinert]{SteinDiHoRe04}
C.~Stein, P.~Diaconis, S.~Holmes, and G.~Reinert.
\newblock Use of exchangeable pairs in the analysis of simulations.
\newblock In \emph{Stein's method: expository lectures and applications},
  volume~46 of \emph{IMS Lecture Notes Monogr. Ser.}, pages 1--26. Inst. Math.
  Statist., Beachwood, OH, 2004.

\bibitem[Stuart et~al.(2004)Stuart, Voss, Wilberg,
  et~al.]{stuart2004conditional}
A.~Stuart, J.~Voss, P.~Wilberg, et~al.
\newblock Conditional path sampling of {SDE}s and the {Langevin} {MCMC} method.
\newblock \emph{Commun. Math. Sci.}, 2\penalty0 (4):\penalty0 685--697, 2004.

\bibitem[Teh et~al.(2014)Teh, Thi{\'e}ry, and Vollmer]{TehThVo14}
Y.~Teh, A.~Thi{\'e}ry, and S.~Vollmer.
\newblock Consistency and fluctuations for stochastic gradient {L}angevin
  dynamics.
\newblock \emph{arXiv:1409.0578}, 2014.

\bibitem[Vallender(1974)]{Vallender74}
S.~Vallender.
\newblock Calculation of the {W}asserstein distance between probability
  distributions on the line.
\newblock \emph{Theory Probab. Appl.}, 18\penalty0 (4):\penalty0 784--786,
  1974.

\bibitem[Vollmer et~al.(2016)Vollmer, Zygalakis, and Teh]{VollmerZyTe16}
S.~Vollmer, K.~Zygalakis, and Y.~Teh.
\newblock Exploration of the (non-)asymptotic bias and variance of stochastic
  gradient langevin dynamics.
\newblock \emph{J. Mach. Learn. Res.}, 17\penalty0 (159):\penalty0 1--48, 2016.

\bibitem[{Wang}(2016)]{Wang2016}
F.~{Wang}.
\newblock {Exponential Contraction in {W}asserstein Distances for {D}iffusion
  {S}emigroups with {N}egative {C}urvature}.
\newblock \emph{ArXiv e-prints}, Mar. 2016.

\bibitem[Zellner(1976)]{Zellner1976}
A.~Zellner.
\newblock Bayesian and {N}on-{B}ayesian analysis of the regression model with
  multivariate student-t error terms.
\newblock \emph{JASA}, 71\penalty0 (354):\penalty0 400--405, 1976.

\bibitem[Zellner and Min(1995)]{ZellnerMi95}
A.~Zellner and C.~Min.
\newblock Gibbs sampler convergence criteria.
\newblock \emph{JASA}, 90\penalty0 (431):\penalty0 921--927, 1995.

\end{thebibliography}
\bibliographystyle{abbrvnatnourl}
\setcounter{section}{8}
\renewcommand\appendixname{Supplementary Appendix}
\begin{adjustwidth}{-2cm}{-2cm}
\newpage

\section{Smoothing and interpolation}\label{sec:smoothing}
We present in this section two essentially standard results on smoothing by convolution and seminorm interpolation
\citep[see, e.g.,][Ex. 1.1.8]{lunardiintroduction} which support the proof of \thmref{nonconstant-lower-bound}. 
Throughout, we let $G\in\reals^d$
be a standard normal vector and $\phi\in C^\infty$ be its probability density.
For any $s > 0$ and function $f : \reals^d \to \reals$ we define
\balignst
f_{s}(x)\defeq\Earg{f(x+sG)}=s^{-d}\int f(y)\phi\left(\frac{x-y}{s}\right)dy.
\ealignst

The first result bounds the Lipschitz constant of $f_s$ in terms of the \Holder continuity of $f$.
\begin{lemma}[Smoothing by convolution II]
\label{lem:standardsmoothing}
Fix $\iota \in (0,1)$ and consider any $f:\reals^d\to \reals$ with
$M_{1-\iota}(f) < \infty$.  For all $s > 0$,
\balignst
M_1(f_s) \le \Earg{\twonorm{G}^{2-2\iota}}^{1/2} M_{1-\iota}(f)s^{-\iota}.
\ealignst
\end{lemma}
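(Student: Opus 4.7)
The plan is to obtain an integration-by-parts (Stein-type) identity for $\grad f_s$, after which the bound reduces to a single application of H\"older continuity and Cauchy-Schwarz.

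First, I would differentiate under the integral sign in the representation $f_s(x) = \int f(x+sy)\phi(y)\dy$. Using the standard Gaussian identity $\grad \phi(y) = -y\,\phi(y)$ together with the change of variables $z = x+sy$, a Leibniz-rule argument (justified via the $(1{-}\iota)$-H\"older growth of $f$, which grows at most polynomially, and the exponential decay of $\phi$) yields
\begin{equation*}
\grad f_s(x) \;=\; \tfrac{1}{s}\,\Earg{f(x+sG)\,G}.
\end{equation*}
Since $\Earg{G}=0$, I may subtract $f(x)$ inside the expectation to obtain the key identity
\begin{equation*}
\grad f_s(x) \;=\; \tfrac{1}{s}\,\Earg{(f(x+sG)-f(x))\,G}.
\end{equation*}

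Next, for any unit vector $v\in\reals^d$, I would apply the triangle inequality, the $(1{-}\iota)$-H\"older bound $|f(x+sG)-f(x)| \leq M_{1-\iota}(f)\,\twonorm{sG}^{1-\iota}$, and the Cauchy-Schwarz inequality to get
\begin{equation*}
|\inner{v}{\grad f_s(x)}|
\;\leq\; M_{1-\iota}(f)\,s^{-\iota}\,\Earg{\twonorm{G}^{1-\iota}\,|\inner{v}{G}|}
\;\leq\; M_{1-\iota}(f)\,s^{-\iota}\,\sqrt{\Earg{\twonorm{G}^{2-2\iota}}\,\Earg{\inner{v}{G}^2}}.
\end{equation*}
The second factor equals $1$ since $\inner{v}{G}$ is a standard normal when $\twonorm{v}=1$. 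Taking the supremum over $x$ and over unit $v$ gives the stated bound on $M_1(f_s)$.

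The computation is essentially routine; the only minor subtlety is justifying the differentiation under the integral, which follows from the polynomial growth of $f$ implied by finite $M_{1-\iota}(f)$ combined with the rapid decay of the Gaussian density and its gradient. No deep estimate is required beyond this, so I do not anticipate a substantive obstacle.
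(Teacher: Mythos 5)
Your proposal is correct and follows essentially the same route as the paper's proof: both differentiate the Gaussian mollifier (your identity $\grad f_s(x)=\tfrac{1}{s}\Earg{f(x+sG)\,G}$ is the probabilistic form of the paper's Leibniz-rule computation), use the centering trick $\Earg{G}=0$ to subtract $f(x)$, apply the $(1-\iota)$-\Holder bound, and finish with Cauchy--Schwarz and $\Earg{\inner{v}{G}^2}=1$. No changes needed.
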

\begin{proof}
Fix any $\twonorm{v}\le 1$ and $x\in\reals^d$.
Leibniz's rule implies that
\balignst
\inner{\grad f_s(x)}{v} = s^{-d-1} \int f(y) \inner{\grad \phi\left(\frac{x-y}{s}\right )}{v}\dy.
\ealignst
Because $s^{-d}\int \grad \inner{\phi(\frac{x-y}{s})}{v} \dy =0$ for any $v\in\reals^d$, we also have
\balignst
|\inner{\grad f_s(x)}{v}|
  = |s^{-d-1} \int f(y) \inner{\grad \phi\left(\frac{x-y}{s}\right)}{v}\dy|
  &= |s^{-d-1} \int [f(y) - f(x)] \inner{\grad \phi\left(\frac{x-y}{s}\right)}{v}\dy| \\
  &= |s^{-d-1} \int [f(x-z) - f(x)] \inner{\grad \phi\left(\frac{z}{s}\right)}{v}\dz| \\
  &\le s^{-d-1} \int M_{1-\iota}(f)\twonorm{z}^{1-\iota} |\inner{\grad \phi\left(\frac{z}{s}\right)}{v}|\dz \\
  &= M_{1-\iota}(f) s^{-\iota} \int \twonorm{\omega}^{1-\iota} |\inner{\grad \phi(\omega)}{v}|\,d\omega,
\ealignst
where we have used substitutions $z \defeq x - y$ and $\omega \defeq z / s$. Finally,
as $\grad \phi(\omega) = -\omega \phi(\omega)$ for all $\omega\in\reals^d$,
we can use the spherical symmetry of the standard normal and Cauchy-Schwarz
to yield
\balignst
\int \twonorm{\omega}^{1-\iota} |\inner{\grad \phi(\omega)}{v}|\,d\omega
  &= \Earg{\twonorm{G}^{1-\iota} |\inner{G}{v}|}
  \le \Earg{\twonorm{G}^{2-2\iota}}^{1/2} \Earg{|\inner{G}{v}|^2}^{1/2} \\
  &= \Earg{\twonorm{G}^{2-2\iota}}^{1/2} \Earg{G_1^2}^{1/2}
  = \Earg{\twonorm{G}^{2-2\iota}}^{1/2},
\ealignst
concluding the lemma.
\end{proof}

The second result provides interpolation bounds for the \Holder seminorm $M_k$ where $k \not\in \mathbb{N}$.

\begin{lemma}[Seminorm interpolation]\label{lem:interpol}Let $k>0$ and $f\in C^{\lceil k\rceil}(\mathbb{R}^{d})$.
Then we have that 
\[
M_{k}(f)\leq2^{1-\left\{ k\right\} }\left(M_{\lceil k\rceil-1}(f)\right)^{1-\left\{ k\right\} }\left(M_{\lceil k\rceil}(f)\right)^{\left\{ k\right\} }.
\]
\end{lemma}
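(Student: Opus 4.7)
The plan is to reduce the statement to the classical H\"older interpolation bound for a Lipschitz function. First, the integer case is trivial: when $k \in \mathbb{N}$ one has $\{k\} = 1$ and $\lceil k \rceil = k$, so both sides of the claimed inequality collapse to $M_k(f)$. Hence I assume $k$ is non-integer, set $\alpha \defeq \{k\} \in (0,1)$, and introduce the shorthand $g \defeq \grad^{\lceil k \rceil - 1} f$, which is continuously differentiable by hypothesis. Under this notation the three seminorms appearing in the statement satisfy
\[
M_k(f) \;=\; \sup_{x\neq y}\frac{\opnorm{g(x)-g(y)}}{\twonorm{x-y}^{\alpha}}, \qquad M_{\lceil k \rceil -1}(f) \;=\; M_0(g), \qquad M_{\lceil k \rceil}(f) \;=\; M_1(g),
\]
the last equality because the Lipschitz seminorm of a $C^1$ function coincides with the supremum of its gradient norm. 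The task therefore reduces to proving $M_\alpha(g) \leq 2^{1-\alpha} M_0(g)^{1-\alpha} M_1(g)^{\alpha}$ for an arbitrary bounded Lipschitz $g$.

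For this reduced statement I fix $x \neq y$ and combine the two elementary bounds
\[
\opnorm{g(x)-g(y)} \;\leq\; 2\, M_0(g) \qquad \text{and} \qquad \opnorm{g(x)-g(y)} \;\leq\; M_1(g)\,\twonorm{x-y}.
\]
Dividing through by $\twonorm{x-y}^{\alpha}$ and invoking the scalar inequality $\min(a,b) \leq a^{1-\alpha} b^{\alpha}$ (valid for $a,b \geq 0$ and $\alpha \in [0,1]$) with $a = 2M_0(g)/\twonorm{x-y}^{\alpha}$ and $b = M_1(g)\twonorm{x-y}^{1-\alpha}$ makes the exponents of $\twonorm{x-y}$ on the right side cancel exactly, leaving the constant $(2M_0(g))^{1-\alpha} M_1(g)^{\alpha} = 2^{1-\alpha} M_0(g)^{1-\alpha} M_1(g)^{\alpha}$. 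Taking the supremum over $x \neq y$ yields the claim.

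There is no real obstacle here: the argument is essentially a one-line application of the min-interpolation inequality once the two naive bounds are in hand. The only piece of bookkeeping worth highlighting is the reindexing of the three seminorms into $M_\alpha(g)$, $M_0(g)$, and $M_1(g)$ via the identification $g = \grad^{\lceil k \rceil - 1} f$; after that, the short interpolation calculation closes the proof.
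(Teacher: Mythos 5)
Your proof is correct and follows essentially the same route as the paper's: both bound the increment $\opnorm{\grad^{\lceil k\rceil-1}f(x)-\grad^{\lceil k\rceil-1}f(y)}$ in two elementary ways (by $2M_{\lceil k\rceil-1}(f)$ via the triangle inequality and by a Lipschitz-type bound linear in $\twonorm{x-y}$) and then interpolate with exponents $1-\{k\}$ and $\{k\}$ so the powers of $\twonorm{x-y}$ cancel. The only cosmetic differences are that you invoke the Lipschitz seminorm $M_{\lceil k\rceil}(f)$ directly rather than deriving the linear bound via the fundamental theorem of calculus, and that you dispose of the integer case separately, neither of which changes the substance of the argument.
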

\begin{proof}
For $m \in \mathbb{N}$, let $V_{m}=\lbrace(v_{1},\ldots,v_{m})\,:\,\twonorm{v_{i}}\leq 1 \text{ for each } i \in\{1,\dots,m\} \rbrace$. Using the fundamental theorem of calculus we obtain 
\balignst
\sup_{V_{\left\lceil k\right\rceil -1}}\Big|{\nabla}^{\left\lceil k\right\rceil -1}f(x)&[v_{1},v_{2},\ldots,v_{\lceil k\rceil-1}] - {\nabla}^{\left\lceil k\right\rceil -1}f(y)[v_{1},v_{2},\ldots,v_{\lceil k\rceil-1}]\Big| \\  &=\sup_{V_{\left\lceil k\right\rceil -1}}\Big|\int_{0}^{1} {\nabla}^{\left\lceil k\right\rceil }f(x+s(y-x))[v_{1},v_{2},\ldots,v_{\lceil k\rceil-1},y-x]ds\Big|\\
 & \le\sup_{V_{\left\lceil k\right\rceil -1}}\Big|\sup_{z}{\nabla}^{\left\lceil k\right\rceil }f(z)[v_{1},v_{2},\ldots,v_{\lceil k\rceil-\text{1}},y-x]\Big|\\
 & \leq\sup_{z}\opnorm{{\nabla}^{\left\lceil k\right\rceil }f(z)}\twonorm{x-y}.
\ealignst
An application of the triangle inequality gives rise to
\balignst
\sup_{V_{\left\lceil k\right\rceil -1}}\left|{\nabla}^{\left\lceil k\right\rceil -1}f(x)[v_{1},v_{2},\ldots,v_{\lceil k\rceil-1}]- {\nabla}^{\left\lceil k\right\rceil -1}f(y)[v_{1},v_{2},\ldots,v_{\lceil k\rceil-1}]\right|\le2\sup_{z}\opnorm{{\nabla}^{\lceil k\rceil-1}f(z)}.
\ealignst
There we obtain
\balignst
M_{k}(f) & =\sup_{x,y\in\mathbb{R}^{d};x\neq y}\frac{\opnorm{{\nabla}^{\lceil k\rceil-1}f(x)-{\nabla}^{\lceil k\rceil-1}f(y)}}{\twonorm{x-y}^{\lbrace k\rbrace}}\\
 & \leq\sup_{x,y\in\mathbb{R}^{d};x\neq y}\frac{2^{1-\left\{ k\right\} }\left(\sup_{z}\opnorm{{\nabla}^{\left\lceil k\right\rceil }f(z)}\right)^{\lbrace k\rbrace}\left(\sup_{z}\opnorm{{\nabla}^{\lceil k\rceil-1}f(z)}\right)^{1-\left\{ k\right\} }\twonorm{x-y}^{\lbrace k\rbrace}}{\twonorm{x-y}^{\lbrace k\rbrace}}\\
 & \leq2^{1-\left\{ k\right\} }\left(\sup_{z}\opnorm{{\nabla}^{\left\lceil k\right\rceil }f(z)}\right)^{\lbrace k\rbrace}\left(\sup_{z}\opnorm{{\nabla}^{\lceil k\rceil-1}f(z)}\right)^{1-\left\{ k\right\} }\\
 & \leq2^{1-\lbrace k\rbrace}\left(M_{\lceil k\rceil}(f)\right)^{\lbrace k\rbrace}\left(M_{\lceil k\rceil-1}(f)\right)^{1-\lbrace k\rbrace}
\ealignst
thus proving the statement.
\end{proof}

\section{Semigroup third derivative estimate}
\label{app:thirdderivflow}
\begin{lemma}[Semigroup third derivative estimate]
\label{lem:ThirdDerivFlow}
Suppose that the drift and diffusion coefficients $b$ and $\sigma$ of an \Ito diffusion have bounded, locally Lipschitz first, second, and third derivatives.  If the transition semigroup $(\trans{t}{})_{t\geq 0}$ has Wasserstein decay rate $r$,
$\sigma(x)$ has a right inverse $\sigma^{-1}(x)$ for each $x\in\reals^d$, and $M_0(\sigma^{-1}) < \infty$, then,
for all $t > 0$ and any $f \in C^3$ with bounded second and third derivatives,
\balignt
M_3(\trans{t}{f}) \leq \inf_{t_0\in (0,t]} M_1(f)r(t-t_0)\frac{c}{t_0}e^{Ct_0}\label{eq:lemThirdDerivFlow}
\ealignt
for constants $c, C$ depending only on $M_{1:3}(\sigma), M_{1:3}(b), M_0(\sigma^{-1}), $ and $r$.
\end{lemma}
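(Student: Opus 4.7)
The plan is to mimic the strategy of \lemref{semigroup-derivative-estimates} and extend it by one order of differentiation. First I would exploit the semigroup property to reduce the problem: writing $\trans{t}{f}=\trans{t_0}{\trans{t-t_0}{f}}$ and setting $g\defeq \trans{t-t_0}{f}$, the semigroup gradient bound \eqnref{semigroup-gradient-bound} gives $M_1(g)\leq M_1(f)\,r(t-t_0)$, so it suffices to establish a bound of the form $M_3(\trans{t_0}{g})\leq \frac{c}{t_0}\,e^{Ct_0}\,M_1(g)$ for sufficiently smooth $g$, and then take the infimum over $t_0\in(0,t]$.

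To bound $M_3(\trans{t_0}{g})$, I would differentiate the Bismut--Elworthy--Li formula
\[
\inner{\grad \trans{t_0}{g}(x)}{v_1} \;=\; \frac{1}{t_0}\Earg{g(\process{t_0}{x})\int_0^{t_0}\inner{\sigma^{-1}(\process{s}{x})V_{s,v_1}}{dW_s}}
\]
twice more in $x$, in directions $v_2$ and then $v_3$, to obtain a representation of $\grad^3 \trans{t_0}{g}(x)[v_1,v_2,v_3]$ as an expectation of $g(\process{t_0}{x})$ paired with Malliavin-type weights built from $\sigma^{-1}$ and its derivatives, the drift $b$ and its derivatives, the first and second variation processes $V_{s,\cdot}$ and $U_{s,\cdot,\cdot}$, and a third-order variation process $\Xi_{s,v_1,v_2,v_3}$ satisfying the natural analog of \eqnref{first-variation} and \eqnref{second-variation}. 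Extending \lemref{derivative-flow-bounds} by one order, Gronwall's inequality supplies moment bounds on $\Xi$ (this is where the hypothesis that $\grad^3 b$ and $\grad^3 \sigma$ are bounded and locally Lipschitz is used), which justify the exchange of derivatives and expectation via dominated convergence.

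To replace the $M_0(g)$-type dependence hidden in this representation by $M_1(g)$, I would then apply the Ito trick \eqnref{bel-ito}, writing
\[
g(\process{t_0}{x}) \;=\; \trans{t_0}{g}(x) \;+\; \int_0^{t_0}\inner{\grad\trans{t_0-s}{g}(\process{s}{x})}{\sigma(\process{s}{x})\,dW_s}.
\]
The deterministic part $\trans{t_0}{g}(x)$ pulls out of the expectation and annihilates the zero-mean Malliavin weights by Dynkin's formula; the stochastic part combines with each weight under the Ito isometry to yield a time integral involving $\grad\trans{t_0-s}{g}$, whose operator norm is at most $M_1(g)\,r(0)$ by \eqnref{semigroup-gradient-bound}. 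Cauchy--Schwarz together with the variation moment bounds then controls each resulting term. Tracking the $t_0$-scaling, each of the three spatial derivatives of the BEL kernel contributes a factor of $t_0^{-1/2}$ (giving $t_0^{-3/2}$ overall), while the Ito-isometry step recovers one factor of $t_0^{+1/2}$, producing the claimed $t_0^{-1}$ scaling, with the exponential growth of the variation moments absorbed into $e^{Ct_0}$.

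The main obstacle will be the bookkeeping: three successive differentiations produce a proliferation of terms, each a specific combination of $V$, $U$, $\Xi$, the coefficients $\sigma$, $\sigma^{-1}$, $b$, and their derivatives of order up to three; for every such term the Ito trick must be verified to yield the uniform $M_1(g)\,t_0^{-1}e^{Ct_0}$ bound rather than a worse one, and in particular the cross terms that arise from differentiating $\sigma^{-1}$ via \eqref{eq:volDeriv} require care. A secondary technical difficulty is deriving the variation equation for $\Xi$ with enough regularity to support Gronwall-based moment bounds; this is routine but tedious and is precisely where the locally Lipschitz assumptions on $\grad^3 b$ and $\grad^3 \sigma$ enter.
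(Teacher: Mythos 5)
Your overall skeleton matches the paper's proof: differentiate a Bismut--Elworthy--Li representation, control the extra variation process by a \Gronwall argument extending \lemref{derivative-flow-bounds}, use the \Ito trick \eqnref{bel-ito} to trade the unbounded factor $f(\process{t}{x})$ for semigroup gradients, and use the Markov property to pick up the factor $r(t-t_0)$. However, there is a concrete gap in the step you lean on most heavily. You claim that differentiating the BEL formula twice more in $x$ yields a representation of $\grad^3\trans{t_0}{g}$ as an expectation of $g(\process{t_0}{x})$ paired with Malliavin-type weights, and hence the intermediate bound $M_3(\trans{t_0}{g})\le \frac{c}{t_0}e^{Ct_0}M_1(g)$. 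That representation is not what direct differentiation produces: the chain rule also acts on $g(\process{t_0}{x})$ itself, so the resulting expression (cf.\ \eqref{eq:third-deriv-semi-group}) necessarily contains terms with $\grad g(\process{t_0}{x})$ and, crucially, $\Hess g(\process{t_0}{x})$ (the analogue of $J_{1,1,x}$). The Hessian term contributes $M_2(g)\,c\,t_0^{-1/2}e^{Ct_0}$, and $M_2(g)$ is not controlled by $M_1(g)$ alone, so your single outer composition $g=\trans{t-t_0}{f}$ does not deliver the claimed bound as stated. Your $t_0$-accounting ($t_0^{-3/2}$ from three differentiated kernels, $t_0^{+1/2}$ recovered by the \Ito isometry) corresponds to a fully iterated BEL formula over subintervals of $[0,t_0]$, in which every derivative is transferred to a stochastic weight and no derivative of $g$ ever appears; but that iteration is not what you described, and without it the scaling story does not match the terms you actually obtain.

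The paper closes exactly this hole differently: it first proves a bound of the schematic form $\opnorm{\grad^3\transarg{t}{f}{x}}\le (M_1(f)+M_2(f))\frac{c}{\sqrt{t}}e^{Ct}$ by bounding each $J_{i,j,x}$, and only then applies the Markov property at an interior time, writing $\grad^3\trans{t}{f}=\grad^3\trans{t_0/2}{(\trans{t-t_0/2}{f})}$ as in \eqref{eq:3rdCerrai}, and invoking the Hessian estimate of \lemref{semigroup-derivative-estimates} at time $t_0/2$ to convert the $M_2$ factor into $M_1(f)\,r(t-t_0)\,c/\sqrt{t_0}$; the two $t_0^{-1/2}$ factors then combine to give the $1/t_0$ rate in \eqref{eq:lemThirdDerivFlow}. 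So your plan is repairable with tools you already cite---either add this second composition step, or genuinely iterate the BEL identity so that no derivative of $g$ appears---but as written the key intermediate inequality in terms of $M_1(g)$ only does not follow. One further bookkeeping point: the \Gronwall bound for the third variation flow \eqref{eq:3rdDerivFlow} requires moments of order higher than two for the first and second variation processes (the paper proves $\Earg{\twonorm{U_t}^\rho}$ bounds for $\rho>2$), so "extending \lemref{derivative-flow-bounds} by one order" must include raising the moment order, not just adding the third variation equation.
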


\begin{proof}
Our proof closely follows that of \lemref{semigroup-derivative-estimates} in \secref{semigroup-derivative-estimates-proof}, and we will only highlight the important differences.
Throughout, $c$ and $C$ will represent arbitrary constants depending only on $M_{1:3}(\sigma), M_{1:3}(b), M_0(\sigma^{-1}), $ and $r$ that may change from expression to expression.

Fix any $v, v', v''$ with unit Euclidean norms in $\reals^d$ and, without loss of generality, fix any $f : \reals^d \to \reals$ in $C^3$ with bounded  first, second, and third derivatives.
Let $(\process{t}{x})_{t\geq0}$ be an \Ito diffusion solving the stochastic differential equation \eqnref{diffusion}
with starting point $\process{0}{x}=x$, underlying Wiener process $(W_t)_{t\geq 0}$, and transition semigroup $(\trans{t}{})_{t\geq0}$.
Under our smoothness assumptions on $b$ and $\sigma$, \citep[Theorem V.40]{Protter2005} implies that $(\process{t}{x})_{t\geq0}$ is thrice continuously differentiable in $x$ with third directional derivative flow $(Y_{t,v,v',v''})_{t\geq 0}$ solving the \emph{third variation equation},
\balignt
dY_{t,v,v^{\prime},v^{\prime\prime}} &= \grad b(\process{t}{x})Y_{t,v,v^{\prime},v^{\prime\prime}}\dt+\Hess b(\process{t}{x})[U_{t,v,v'}]V_{t,v^{\prime\prime}}\dt\label{eq:3rdDerivFlow}\\
 & +\nabla^{3}b(\process{t}{x})[V_{t,v},V_{t,v'},V_{t,v^{\prime\prime}}]\dt+\Hess b(\process{t}{x})[U_{t,v',v''}]V_{t,v}\dt\nonumber \\
 & +\Hess b(\process{t}{x})[U_{t,v,v''}]V_{t,v'}\dt\nonumber \\
 & +\grad\sigma(\process{t}{x})Y_{t,v,v^{\prime},v^{\prime\prime}}\dW_{t}+\Hess\sigma(\process{t}{x})[U_{t,v,v'}]V_{t,v^{\prime\prime}}\dW_{t}\nonumber \\
 & +\nabla^{3}\sigma(\process{t}{x})[V_{t,v},V_{t,v'},V_{t,v^{\prime\prime}}]\dW_{t}+\Hess\sigma(\process{t}{x})[U_{t,v',v''}]V_{t,v}\dW_{t}\nonumber \\
 & +\Hess\sigma(\process{t}{x})[U_{t,v,v''}]V_{t,v'}\dW_{t} \qtext{with} Y_{0,v,v',v''} = 0, \nonumber 
\ealignt
obtained by differentiating \eqnref{second-variation} with respect to $x$ in the direction $v''$.

In a manner analogous to the derivation of \eqref{eqn:BEL2} in proof of \lemref{semigroup-derivative-estimates}, 
we can derive an expression for the third derivative of the semi-group,
\balignt
 & \nabla^{3}\transarg{t}{f}{x}[v,v',v'']=\Earg{\sum_{i,j}J_{i,j,x}}\qtext{for}\label{eq:third-deriv-semi-group}\\
 & J_{1,1,x}{\textstyle \defeq\frac{1}{t}{\inner{\Hess{f}(\process{t}{x})V_{t,v''}}{V_{t,v'}}\int_{0}^{t}\inner{\sigma^{-1}(\process{s}{x})V_{s,v}}{\dW_{s}}},\notag}\nonumber \\
 & J_{1,2,x}{\textstyle \defeq\frac{1}{t}{\inner{\grad{f}(\process{t}{x})}{U_{t,v',v''}}\int_{0}^{t}\inner{\sigma^{-1}(\process{s}{x})V_{s,v}}{\dW_{s}}},\notag}\nonumber \\
 & J_{1,3,x}{\textstyle \defeq\frac{1}{t}{\inner{\grad{f}(\process{t}{x})}{V_{t,v'}}\int_{0}^{t}\inner{\grad\sigma^{-1}(\process{s}{x})[V_{s,v''}]V_{s,v}}{\dW_{s}}},\notag}\nonumber \\
 & J_{2,1x}{\textstyle \defeq\frac{1}{t}\inner{\grad f\left({\process{t}{x}}\right)}{V_{t,v''}}\int_{0}^{t}\inner{\grad\sigma^{-1}(\process{s}{x})[V_{s,v'}]V_{s,v}}{\dW_{s}},\notag}\nonumber \\
 & J_{2,2,x}{\textstyle \defeq\frac{1}{t}{f(\process{t}{x})\int_{0}^{t}\inner{\Hess\sigma^{-1}(\process{s}{x})[V_{s,v''}][V_{s,v'}]V_{s,v}}{\dW_{s}}},\notag}\nonumber \\
 & J_{2,3,x}{\textstyle \defeq\frac{1}{t}{f(\process{t}{x})\int_{0}^{t}\inner{\grad\sigma^{-1}(\process{s}{x})[U_{s,v',v''}]V_{s,v}}{\dW_{s}}},\notag}\nonumber \\
 & J_{2,4,x}{\textstyle \defeq\frac{1}{t}{f(\process{t}{x})\int_{0}^{t}\inner{\grad\sigma^{-1}(\process{s}{x})[V_{s,v'}]U_{s,v,v''}}{\dW_{s}}},\notag}\nonumber \\
 & J_{3,1,x}{\textstyle \defeq\frac{1}{t}{\inner{\grad{f}({\process{t}{x}})}{V_{t,v''}}\int_{0}^{t}\inner{\sigma^{-1}(\process{s}{x})U_{s,v,v'}}{\dW_{s}}},\notag}\nonumber \\
 & J_{3,2,x}{\textstyle \defeq\frac{1}{t}{f(\process{t}{x})\int_{0}^{t}\inner{\nabla\sigma^{-1}(\process{s}{x})[V_{s,v''}]U_{s,v,v'}}{\dW_{s}}},\notag}\nonumber \\
 & J_{3,3,x}{\textstyle \defeq\frac{1}{t}{f(\process{t}{x})\int_{0}^{t}\inner{\sigma^{-1}(\process{s}{x})Y_{s,v,v',v''}}{\dW_{s}}},\notag}\nonumber \\
& J_{3,4,x}{\textstyle \defeq\frac{1}{t}{\inner{\grad{f}({\process{t}{x}})}{V_{t,v'}}\int_{0}^{t}\inner{\sigma^{-1}(\process{s}{x})U_{s,v,v''}}{\dW_{s}}}.\notag}\nonumber 
\ealignt
We will bound each term $J_{i,j,x}$ in \eqref{eq:third-deriv-semi-group} in turn.  

\subsection{The $J_{1,\cdot,x}$ terms}

We will provide a step-by-step calculation for the first term. 
By Cauchy-Schwarz,
\balignst
\Earg{J_{1,1,x}} & {\textstyle =\frac{1}{t}\Earg{\inner{\Hess{f}(\process{t}{x})V_{t,v''}}{V_{t,v'}}\int_{0}^{t}\inner{\sigma^{-1}(\process{s}{x})V_{s,v}}{\dW_{s}}}\notag}\\
 & \;\leq\frac{1}{t}\sqrt{\Earg{\inner{\Hess{f}(\process{t}{x})V_{t,v''}}{V_{t,v'}}^{2}}\E[{ (\int_{0}^{t}\inner{\sigma^{-1}(\process{s}{x})V_{s,v}}{\dW_{s}})^{2}}]}.
\ealignst
We use the derivative flow bounds of Lemma \ref{lem:derivative-flow-bounds} to realize
\balignst
\sqrt{\Earg{\twonorm{V_{t,v'}}^{2}\twonorm{V_{t,v''}}^{2}}} & \leq\sqrt[4]{\E \twonorm{V_{t,v'}}^{4}\E \twonorm{V_{t,v''}}^{4}}
 \leq\twonorm{v'}\twonorm{v''}e^{\frac{1}{2}t\gamma_{4}}.
\ealignst
Cauchy-Schwarz, the \Ito isometry \citep[Eqs. 7.1 and 7.2]{Friedman1975Stoch}, and \lemref{derivative-flow-bounds} now yield
\balignst
\Earg{J_{1,1,x}} 
 & \overset{\text{}}{\leq}\frac{1}{t}M_{2}(f)\sqrt{\Earg{\twonorm{V_{t,v'}}^{2}\twonorm{V_{t,v''}}^{2}}\Earg{ \int_{0}^{t}\twonorm{\sigma^{-1}(\process{s}{x})V_{s,v}}^2\ds}}\\
 & \leq\frac{1}{t}M_{2}(f)\twonorm{v'}\twonorm{v''}\twonorm{v}e^{\frac{1}{2}t\gamma_{4}}M_{0}(\sigma^{-1})\left(\frac{1}{\gamma_{2}}(e^{\gamma_{2}t}-1)\right)^{\frac{1}{2}} 
 \leq M_2(f)\frac{c}{\sqrt{t}} e^{Ct}.
\ealignst
Similar reasoning yields
\balignst
\Earg{J_{1,2,x}} & {\textstyle =\E \frac{1}{t}{\inner{\grad{f}({\process{t}{x}})}{U_{t,v',v''}}\int_{0}^{t}\inner{\sigma^{-1}(\process{s}{x})V_{s,v}}{\dW_{s}}}\notag}\\
 & \leq\frac{1}{t}M_{1}(f)\sqrt{t}e^{\frac{1}{4}t\gamma_{4}}M_{0}(\sigma^{-1})\left(\frac{1}{\gamma_{2}}(e^{\gamma_{2}t}-1)\right)^{\frac{1}{2}}
 \leq M_{1}(f)c e^{Ct}
\ealignst
and using equation \eqref{eq:volDeriv}
\balignst
\Earg{J_{1,3,x}} & \leq\frac{1}{t}M_{1}(f)e^{\frac{1}{2}\gamma_{2}t}\twonorm{v'}\sqrt{\E \int_{0}^{t}M_{1}(\sigma^{-1})^{2}\twonorm{V_{s,v}}^2\twonorm{V_{s,v''}}^2\ds}\\
 & \leq\frac{1}{t}M_{1}(f)e^{\frac{1}{2}\gamma_{2}t}\twonorm{v'}M_{1}(\sigma^{-1})\twonorm{v}\twonorm{v''}\left(\int_{0}^{t}e^{\gamma_{4}s}ds\right)^{\frac{1}{2}}\\
 & \leq t^{-\frac{1}{2}}M_{1}(f)e^{\gamma_{2}t/2}\twonorm{v'}M_{0}(\sigma^{-1})^2M_1(\sigma)\twonorm{v}\twonorm{v''}e^{\gamma_{4}t/2}
 \leq M_{1}(f)\frac{c}{\sqrt{t}} e^{Ct}.
\ealignst

\subsection{The $J_{2,\cdot,x}$ terms}
The bound $\Earg{J_{2,1,x}} \leq M_{1}(f)\frac{c}{\sqrt{t}} e^{Ct}$ follows exactly as it did for $J_{1,3,x}$.
To tackle the remaining $J_{2,\cdot,x}$ terms, we will rewrite the unbounded quantity $f(\process{t}{x})$ using \eqnref{bel-ito}.
We obtain the bound
\balignst
\E J_{2,2,x} & {\textstyle =\E \frac{1}{t}{f(\process{t}{x})\int_{0}^{t}\inner{\Hess\sigma^{-1}(\process{s}{x})[V_{s,v''}][V_{s,v'}]V_{s,v}}{\dW_{s}}}}\\
 & =\E \frac{1}{t}{\int_{0}^{t}\inner{\grad\transarg{t-s}{f}{\process{s}{x}}}{\sigma(\process{s}{x})\dW_{s}}.\int_{0}^{t}\inner{\Hess\sigma^{-1}(\process{s}{x})[V_{s,v''}][V_{s,v'}]V_{s,v}}{\dW_{s}}}\\
 & =\frac{1}{t}\E \int_{0}^{t}\inner{\grad\transarg{t-s}{f}{\process{s}{x}}}{\sigma(\process{s}{x})\Hess\sigma^{-1}(\process{s}{x})[V_{s,v''}][V_{s,v'}]V_{s,v}}ds\\
 & \leq\frac{1}{t}M_{1}(f)r(0)\left(2M_{1}(\sigma)^{2}M_{0}(\sigma^{-1})^{2}+M_{2}(\sigma)M_{0}(\sigma^{-1})\right)\int_{0}^{t}\Earg{\twonorm{V_{s,v''}}\twonorm{V_{s,v'}}\twonorm{V_{s,v'}}}ds.\\
 & \leq M_{1}(f) r(0)\left(2M_{1}(\sigma)^{2}M_{0}(\sigma^{-1})^{2}+M_{2}(\sigma)M_{0}(\sigma^{-1})\right)e^{\gamma_{3}t}\twonorm{v''}\twonorm{v'}\twonorm{v}
 \leq M_{1}(f)c e^{Ct},
\ealignst
where we used the chain rule expression
\begin{equation*}
\begin{aligned}
\textstyle
\nabla^{2}\sigma^{-1}(x)[v][v']  =-\sigma(x)^{-1}\Big(-&\nabla\sigma(x)[v]\sigma(x)^{-1}\nabla\sigma[v'](x) 
 -\nabla\sigma(x)[v']\sigma(x)^{-1}\nabla\sigma(x)[v] \\ 
\textstyle
&\quad +\nabla^{2}\sigma(x)[v][v']\Big)\sigma(x)^{-1}
\end{aligned}
\end{equation*}
 to rewrite $\sigma(\process{s}{x})\Hess\sigma^{-1}(\process{s}{x})$. The next term satisfies
\balignst
\Earg{J_{2,3,x}} & =\E \frac{1}{t}{\int_{0}^{t}\inner{\grad\transarg{t-s}{f}{\process{s}{x}}}{\sigma(\process{s}{x})\dW_{s}}\int_{0}^{t}\inner{\grad\sigma^{-1}(\process{s}{x})[U_{s,v',v''}]V_{s,v}}{\dW_{s}}}\\
 & =\E \frac{1}{t}{\int_{0}^{t}\inner{\grad\transarg{t-s}{f}{\process{s}{x}}}{\sigma(\process{s}{x})\grad\sigma^{-1}(\process{s}{x})[U_{s,v',v''}]V_{s,v}}ds}\\
 & \leq\frac{1}{t}M_{1}(f)M_{0}(\sigma^{-1})M_{1}(\sigma)r(x)\int_{0}^{t}\E \norm{U_{s,v',v''}}\norm{V_{s,v}}ds\\
 & \leq M_{1}(f)M_{0}(\sigma^{-1})M_{1}(\sigma)r(x)\frac{1}{t}\int_{0}^{t}\norm{v}\norm{v'}\norm{v''}\left(\alpha se^{\gamma_{4}s}\right)^{\frac{1}{2}}e^{s\gamma_{2}/2}ds
 \leq M_{1}(f)c e^{Ct}.
\ealignst
The term $\Earg{J_{2,4,x}}$ can be bounded in the same way by swapping the roles of $v$ and $v'$.

\subsection{The $J_{3,\cdot}$ terms}
Cauchy-Schwarz and the \Ito isometry \citep[Eqs. 7.1 and 7.2]{Friedman1975Stoch} yield
\balignst
\Earg{J_{3,1,x}} 
	&\leq \frac{1}{t}M_1(f)M_0(\sigma^{-1})\sqrt{\Earg{\twonorm{V_{t,v''}}^2} \Earg{\int_{0}^{t} \twonorm{U_{s,v,v'}}^2\ds}} \\
	&\leq \frac{1}{t}M_1(f)M_0(\sigma^{-1})e^{\half t\gamma_2}\left(\frac{1}{\gamma_{2}}(e^{\gamma_{2}t}-1)\right)^{\frac{1}{2}} 
	\leq M_{1}(f)\frac{c}{\sqrt{t}} e^{Ct}.
\ealignst
The bound for $\Earg{J_{3,4,x}}$ is identical, and
the bound $\Earg{J_{3,2,x}} \leq M_{1}(f) c e^{Ct}$ follows exactly as it did for $J_{2,3,x}$.
Now we consider the last term 
\balignst
J_{3,3,x}{\textstyle =\frac{1}{t}{f(\process{t}{x})\int_{0}^{t}\inner{\sigma^{-1}(\process{s}{x})Y_{s,v,v',v''}}{\dW_{s}}},\notag}
\ealignst
Using (\ref{eqn:bel-ito}), \lemref{semigroup-derivative-estimates}, and the non-increasing property of $r$, we find that
\balignst
\Earg{J_{3,3,x}} & =\frac{1}{t}\E \int_0^t\inner{\grad\transarg{t-s}{f}{\process{s}{x}}}{Y_{s,v,v',v''}}ds
 \leq\frac{1}{t}\int_0^t M_{1}(\trans{t-s}{f})\E \twonorm{Y_{s,v,v',v''}}ds\\
 & \leq\frac{1}{t}\int_0^t M_{1}(\trans{t-s}{f})\left(\E \twonorm{Y_{s,v,v',v''}}^{2}\right)^{\frac{1}{2}}ds
 \leq M_{1}(f)\frac{1}{t}\int_0^t r(t-s) \left(\E \twonorm{Y_{s,v,v',v''}}^{2}\right)^{\frac{1}{2}}ds \\
 & \leq M_{1}(f) r(0) \frac{1}{t}\int_0^t \left(\E \twonorm{Y_{s,v,v',v''}}^{2}\right)^{\frac{1}{2}}ds.
\ealignst
This final expression is bounded by $M_{1}(f) c e^{Ct}$ provided that 
$
\E \twonorm{Y_{s,v,v',v''}}^{2}\leq ce^{Cs}.
$
We will establish such a bound for the third directional derivative flow in
Section~\ref{subsec:Third-derivative-flow}. 

\subsection{Semigroup third derivative bound}
By combining the bounds for each $J_{i,j,x}$ term,
adapting the argument of \cite[Prop.~1.5.1]{Cerrai2001}, 
and invoking the semigroup gradient bound and
Hessian bound $M_2(\trans{s}{f}) \leq M_1(f) r(s-s_0)\frac{c'}{\sqrt{s_0}}e^{C' s_0}$ of Lemma \ref{lem:semigroup-derivative-estimates},
we obtain, for any $t_0 \in (0, t]$ and $s_0 = t_0/2$
\balign
 & \left\Vert
\nabla^{3}\trans{t}{f}[v,v',v'']\right\Vert_{\text{op}}=\left\Vert
\nabla^{3}\trans{t_{0}/2}{\left({\trans{t-t_{0}/2}{f}}\right)}[v,v',v'']\right\Vert_{op}\label{eq:3rdCerrai}\\
  & \leq 
  ( M_{1}(\trans{t-t_{0}/2}{f}) + M_2(\trans{t-t_{0}/2}{f}) )\frac{c}{\sqrt{t_0/2}}e^{C t_0/2} \nonumber \\
 &\leq M_{1}(f)(r(t-t_{0}/2) +  r(t-t_0/2-s_0)\frac{c'}{\sqrt{s_0}}e^{C' s_0})\frac{c}{\sqrt{t_0/2}}e^{C t_0/2}.\nonumber\\
 &\leq M_{1}(f)(r(t-t_{0}/2) +  r(t-t_0)\frac{c'}{\sqrt{t_0/2}}e^{C' t_0/2})\frac{c}{\sqrt{t_0/2}}e^{C t_0/2}.\nonumber
\ealign

\subsection{Third derivative flow bound\label{subsec:Third-derivative-flow}}
Introduce the shorthand $(Y_{t})_{t \geq 0}$ for $(Y_{t,v,v',v''})_{t \geq 0}$ solving the third variation equation \eqref{eq:3rdDerivFlow}.
Dynkin's formula gives $\E \twonorm{Y_{t}}^{2} = \int_{0}^{t}T_{1} + T_{2} \ds$ for
\balignst
T_1 &\defeq \E2\Big\langle Y_{s}, \grad b(\process{s}{x})Y_{s}+\Hess b(\process{s}{x})[U_{s,v,v'}]V_{s,v^{\prime\prime}}+\nabla^{3}b(\process{s}{x})[V_{s,v},V_{s,v'},V_{s,v^{\prime\prime}}] \\
&\quad +\Hess b(\process{s}{x})[U_{s,v',v''}]V_{s,v}+\Hess b(\process{t}{x})[U_{t,v,v''}]V_{t,v'}\Big\rangle\\
T_{2}& \defeq \E\Big\lVert\grad\sigma(\process{s}{x})[Y_{s}]+\Hess\sigma(\process{s}{x})[U_{s,v,v'}]V_{s,v^{\prime\prime}}+\nabla^{3}\sigma(\process{s}{x})[V_{s,v},V_{s,v'},V_{s,v^{\prime\prime}}] \\
 &\quad +\Hess\sigma(\process{s}{x})[U_{s,v',v''}]V_{s,v}+\Hess\sigma(\process{s}{x})[U_{s,v,v''}]V_{s,v'}\Big\rVert_{F}^{2}.
\ealignst
We have by Cauchy-Schwarz and Young's inequality \balignst
\frac{T_{1}}{2} & \leq\E \bigl(\twonorm{Y_{s}}^{2}M_{1}(b)+M_{2}(b)\twonorm{Y_{s}}\underbrace{\twonorm{U_{s,v,v'}}\twonorm{V_{s,v''}}}_{+2\text{ permutations}}+M_{3}(b)\twonorm{Y_{s}}\twonorm{V_{s,v}}\twonorm{V_{s,v'}}\twonorm{V_{s,v''}}\bigr)\\
 & \leq\E \bigl(\twonorm{Y_{s}}^{2}\left(M_{1}(b)+M_{2}^{2}(b)+M_{3}^{2}(b)\right)+\underbrace{M_{2}^{2}(b){\twonorm{U_{s,v,v'}}^{2}\twonorm{V_{s,v''}}^{2}}}_{+2\text{ permutations}} \\ 
 &\quad +M_{3}(b)^{2}{\left(\twonorm{V_{s,v}}\twonorm{V_{s,v'}}\twonorm{V_{s,v''}}\right)^{2}}\bigr)
\ealignst
and
\balignst\frac{T_{2}}{4} &\leq \E\twonorm{Y_{2}}^{2}\left(M_{1}(\sigma)^{2}+\norm{\nabla^{2}\sigma}_{F_{3}}^{2}+\norm{\nabla^{3}\sigma}_{F_{3}}^{2}\right) \\ &\quad +\norm{\nabla^{3}\sigma}_{F_{3}}^{2}\E\left(\norm{V_{s}}\norm{V_{s}'}\norm{V_{s}''}\right)^{2}+\norm{\nabla^{2}\sigma}_{F_{3}}^{2}\underbrace{\Earg{\twonorm{U_{s,v,v'}}^{2}\twonorm{V_{s,v''}}^{2}}}_{+2\text{ permutations}}.
\ealignst
Provided that we establish a bound of $\E \twonorm{U_{s,v,v'}}^4 \leq c t e^{Ct}$,
we have that overall
\balignst
\E \twonorm{Y_{t}}^{2}\leq\int_{0}^{t}c\E \twonorm{Y_{s}}^{2}ds+ce^{Ct}.
\ealignst
We can conclude using Gronwall's inequality that 
\balignt
\E \twonorm{Y_{t}}^{2}\leq ce^{Ct}.\label{eq:third-deriv-flow}
\ealignt
It remains to establish bounds on $\E \twonorm{U_{t,v,v'}}^{\rho}$ for $\rho > 2$.
Recall that the second derivative flow solves \eqnref{second-variation}.
Applying Ito's formula to $f(U_{t,v,v'})=\lVert U_{t,v,v'}\rVert_{2}^{\rho},$
taking expectations, and introducing the shorthand $U_{t}=U_{t,v,v'}$, we obtain
\balignst
 &  \Earg{\twonorm{U_{t}}^{\rho}}=\twonorm{U_{0}}^{\rho}+\E\Big[\int_{0}^{t}\rho\inner{U_{s}\twonorm{U_{s,}}^{\rho-2}}{\grad b(\process{s}{x})U_{s}+\Hess b(\process{s}{x})[V_{s,v'}]V_{s,v})} \\
 &\quad + \frac{\rho}{2}\twonorm{U_{s}}^{\rho-4}((\rho-2)\twonorm{U_{s}^{\top}\grad\sigma(\process{s}{x})[U_{s,v}]+U_{s}^{\top}\Hess\sigma(\process{s}{x})[V_{s,v'}]V_{s,v}}^{2} \\ &\quad+ \twonorm{U_{s,v}}^{2}\fronorm{\grad\sigma(\process{s}{x})[U_{s,}]+\Hess\sigma(\process{s}{x})[V_{s,v'}]V_{s,v}}^{2})\ds \Big]\\
\leq & \twonorm{U_{0}}^{\rho}+\int_{0}^{t}\rho M_{1}(b)\twonorm{U_{s}}^{\rho}+\rho M_{2}(b)\twonorm{U_{s}}^{\rho-1}\twonorm{V_{s}}\twonorm{V'_{s}}\, \\
&\quad +\frac{\rho^{2}-\rho}{2}\left(M_{1}(\sigma)^{2}\twonorm{U_{s}}^{\rho}+M_{2}(\sigma)^{2}\twonorm{U_{s}}^{\rho-2}\twonorm{V_{s}}\twonorm{V'_{s}}\right) \\
&\quad +\frac{\rho}{2}\left(F_{1}(\sigma)^{2}\twonorm{U_{s}}^{\rho}+F_{2}(\sigma)^{2}\twonorm{U_{s}}^{\rho-2}\twonorm{V_{s}}\twonorm{V'_{s}}\right)ds\\
\leq & \twonorm{U_{0}}^{\rho}+\int_{0}^{t}\Earg{\twonorm{U_{s}}^{\rho}}(\rho M_{1}(b)+(\rho-1)M_{2}(b)+M_{1}(\sigma)^{2}\frac{\rho^{2}-\rho}{2}+M_{2}(\sigma)^{2}\frac{(\rho-1)^{2}}{2}+F_{2}(\sigma)\frac{\rho-1}{2})ds\\
 & +\int_0^t\left(M_{2}(b)+\frac{\rho-1}{2}M_{2}(\sigma)^{2}+\frac{1}{2}\right)\Earg{\left(\twonorm{V_{s}}\twonorm{V'_{s}}\right)^{\rho}}ds\\
\leq & \twonorm{U_{0}}^{\rho}+\int_{0}^{t}\Earg{\twonorm{U_{s}}^{\rho}}(\rho M_{1}(b)+(\rho-1)M_{2}(b)+M_{1}(\sigma)^{2}\frac{\rho^{2}-\rho}{2}+M_{2}(\sigma)^{2}\frac{(\rho-1)^{2}}{2}+F_{2}(\sigma)\frac{\rho-1}{2})ds\\
 & +\int_{0}^{t}\left(M_{2}(b)+\frac{\rho-1}{2}M_{2}(\sigma)^{2}+\frac{1}{2}\right)\left(\twonorm{v}\twonorm{v'}\right)^{\rho}e^{\gamma_{2\rho}s}ds 
\ealignst
where we use that, by Young's inequality,
\balignst
\twonorm{U_{s}}^{\rho-1}\twonorm{V_{s}}\twonorm{V'_{s}}\leq\frac{\rho-1}{\rho}\twonorm{U_{s}}^{\rho}+\frac{1}{\rho}\twonorm{V_{s}}^{\rho}\twonorm{V_{s}'}^{\rho},
\ealignst
and similarly
\balignst
\twonorm{U_{s}}^{\rho-2}\twonorm{V_{s}}\twonorm{V_{s}'}\leq\frac{\rho-2}{\rho}\twonorm{U_{s}}^{\rho}+\frac{2}{\rho}\twonorm{V_{s}}^{\rho/2}\twonorm{V_{s}'}^{\rho/2}.
\ealignst
Following the arguments of Section \ref{sec:derivative-flow-bounds}, \Gronwall's inequality gives
\balignst
\Earg{\twonorm{U_{t}}^{\rho}}\leq\left(M_{2}(b)+\frac{\rho-1}{2}M_{2}(\sigma)^{2}+\frac{1}{2}\right)\left(\twonorm{v}\twonorm{v'}\right)^{\rho}e^{\gamma_{2\rho}t}t\exp{\gamma_{\rho}t}.
\ealignst
\end{proof}

\end{adjustwidth}

\end{document}